\newcommand{\myTitle}{Exact Mean Square Linear Stability Analysis for SGD}
\newcommand*{\addFileDependency}[1]{% argument=file name and extension
	\typeout{(#1)}
	\@addtofilelist{#1}
	\IfFileExists{#1}{\typeout{File #1 O.K.}}{\typeout{No file #1.}}
}
\newcommand{\yy}{ {\boldsymbol{y} } }
\newcommand{\zz}{ {\boldsymbol{z} } }
\newcommand{\bs}{ {\boldsymbol{s} } }
\newcommand{\rr}{ {\boldsymbol{r} } }
\newcommand{\vv}{ {\boldsymbol{v} } }
\newcommand{\uu}{ {\boldsymbol{u} } }
\newcommand{\zeroVec}{ {\boldsymbol{0} } }
\newcommand{\grad}{ {\boldsymbol{g} } }
\newcommand{\params}{ {\boldsymbol{\theta} } }
\newcommand{\Identity}{ \boldsymbol{I} }
\newcommand{\HH}{ {\boldsymbol{H} } }
\newcommand{\PP}{ {\boldsymbol{P} } }
\newcommand{\MM}{ {\boldsymbol{M} } }
\newcommand{\XX}{ {\boldsymbol{X} } }
\newcommand{\YY}{ {\boldsymbol{Y} } }
\newcommand{\ZZ}{ {\boldsymbol{Z} } }
\newcommand{\VV}{ {\boldsymbol{V} } }
\newcommand{\TT}{ {\boldsymbol{T} } }
\newcommand{\UU}{ {\boldsymbol{U} } }
\newcommand{\bS}{ {\boldsymbol{S} } }
\newcommand{\CC}{ {\boldsymbol{C} } }
\newcommand{\DD}{ {\boldsymbol{D} } }
\newcommand{\LL}{ {\boldsymbol{L} } }
\newcommand{\myMat}{ {\boldsymbol{Y} } }
\newcommand{\OO}{ {\boldsymbol{O} } }
\newcommand{\bPsi}{ {\boldsymbol{\Psi} } }
\newcommand{\bXi}{ {\boldsymbol{\Xi} } }
\newcommand{\CovEvo}{ {\boldsymbol{Q} } }
\newcommand{\bE}{ {\boldsymbol{E} } }
\newcommand{\bLambda}{ {\boldsymbol{\Lambda} } }
\newcommand{\TransMat}{ {\boldsymbol{A} } }
\newcommand{\E}{ \mathbb{E} }
\newcommand{\EE}{ \mathbb{E} }
\newcommand{\prob}{ \mathbb{P} }
\newcommand{\CovMat}{\boldsymbol{\Sigma}}
\newcommand{\MeanVec}{\boldsymbol{\mu}}
\newcommand{\R}{ {\mathbb{R} } }
\newcommand{\Sb}{ {\mathbb{S} } }
\newcommand{\batch}{ {\mathfrak{B} } }
\newcommand{\PSDset}[1]{ {\mathcal{S}_{+}(\mathbb{R}^{#1 \times #1}) } } 
\newcommand{\myNull}[1]{ \mathcal{N}(#1) }
\newcommand{\myNullOrtho}[1]{ \mathcal{N}^{\myPerp}(#1) }
\newcommand{\myRange}[1]{ \mathcal{R}(#1) }
\newcommand{\myRangeOrtho}[1]{ \mathcal{R}^{\myPerp}(#1) }
\newcommand{\loss}{ {\mathcal{L} } }
\newcommand{\stochasticLoss}{ \hat{\mathcal{L}} }
\newcommand{\lossApprox}{\tilde{\mathcal{L}}}
\newcommand{\norm}[1]{ \left\| #1 \right\| }
\newcommand{\normF}[1]{ \left\| #1 \right\|_{\mathrm{F}}}
\newcommand{\vectorization}[1]{ \mathrm{vec} \left( #1 \right) }
\newcommand{\ie}{\emph{i.e.,}~}
\newcommand{\eg}{\emph{e.g.,}~}
\newcommand\remove[1]{}
\newcommand{\Tr}{ \mathrm{Tr} }
\newcommand{\transpose}{ { \mathrm{T}} }
\newcommand{\etaMean}{\eta^*_{\mathrm{mean}}}
\newcommand{\etaVar}{\eta^*_{\mathrm{var}}}
\newcommand{\myPerp}{{\scriptscriptstyle \boldsymbol{\perp}}}
\newcommand{\myPar}{{\scriptscriptstyle \boldsymbol{\|}}}
\definecolor{myRed}{RGB}{162, 20, 47}
\definecolor{myYellow}{RGB}{237, 177, 32}
\definecolor{myBlue}{RGB}{0, 114, 189}
\definecolor{myPurple}{RGB}{124, 45, 140}
\title[\myTitle]{\myTitle}
\begin{document}

\maketitle

% Add Abstract
\begin{abstract}
The dynamical stability of optimization methods at the vicinity of minima of the loss has recently attracted significant attention. For gradient descent (GD), stable convergence is possible only to minima that are sufficiently flat w.r.t.~the step size, and those have been linked with favorable properties of the trained model. However, while the stability threshold of GD is well-known, to date, no explicit expression has been derived for the exact threshold of stochastic GD (SGD). In this paper, we derive such a closed-form expression. Specifically, we provide an explicit condition on the step size that is both necessary and sufficient for the linear stability of SGD in the mean square sense. Our analysis sheds light on the precise role of the batch size $B$. In particular, we show that the stability threshold is monotonically non-decreasing in the batch size, which means that reducing the batch size can only decrease stability. Furthermore, we show that SGD's stability threshold is equivalent to that of a mixture process which takes in each iteration a full batch gradient step w.p.~$1-p$, and a single sample gradient step w.p.~$p$, where $p \approx 1/B $. This indicates that even with moderate batch sizes, SGD's stability threshold is very close to that of GD's. We also prove simple necessary conditions for linear stability, which depend on the batch size, and are easier to compute than the precise threshold. Finally, we derive the asymptotic covariance of the dynamics around the minimum, and discuss its dependence on the learning rate. We validate our theoretical findings through experiments on the MNIST dataset.
\end{abstract}

\begin{keywords}%
    SGD, Dynamical systems, Linear stability, Mean square analysis%
\end{keywords}

\section{Introduction}\label{sec:intro}
% Motivation - stability in optimization
The dynamical stability of optimization methods has been shown to play a key role in shaping the properties of trained models. For instance, gradient descent (GD) can stably converge only to minima that are sufficiently flat with respect to the step size \citep{cohen2021gradient}, and in the context of neural networks, such minima were shown to correspond to models with favorable properties. These include smoothness of the predictor function \citep{ma2021on,nacson2023the, mulayoffNeurips}, balancedness of the layers \citep{mulayoff2020unique}, and arguably better generalization \citep{hochreiter1997flat,keskar2016large,jastrzkebski2017three,wu2017towards,ma2021on}. While the stability threshold of GD is well-known, that of stochastic GD (SGD) has yet to be fully understood. Several empirical works studied SGD's stability  \citep{jastrzebski2018on,Jastrzebski2020The,cohen2021gradient,gilmer2022a}, yet they did not determine a definitive stability condition. Various theoretical works studied SGD's dynamics using linear stability, \ie via second-order Taylor expansion at the vicinity of minima, focusing on stability in the mean square sense \citep{wu2018sgd, granziol2022learning, velikanov2023a}, higher moments \citep{ma2021on}, and in probability \citep{ziyin2023probabilistic}. However, these works either do not provide explicit stability conditions or rely on strong assumptions. For example, \citet{wu2018sgd,ma2021on} present the condition as a complex optimization problem. Similarly, \citet{granziol2022learning} consider infinite network widths and make strong assumptions on the nature of the batching noise. Likewise, \citet{velikanov2023a} analyze SGD with momentum, assuming momentum parameter close to $1$ and ``spectrally expressible'' dynamics, and present their result in terms of a moment generating function. Overall, the exact stability threshold of SGD in the general case remains unknown.

% Our setting and results
In this paper, we analyze the linear stability of SGD in the mean square sense. We start by considering interpolating minima, which are common in training of overparametrized models. In this case, we provide an explicit threshold on the step size~$\eta$ that is both necessary and sufficient for stability. Our analysis sheds light on the precise role of the batch size $B$. In particular, we show that the maximal step size allowing stable convergence is monotonically non-decreasing in the batch size. Namely, decreasing the batch size can only decrease the stability threshold of SGD. Moreover, we show that this threshold is equivalent to that of a mixture process that takes in each iteration a full batch gradient step w.p.~$1-p$, and a single sample gradient step w.p.~$p$, where~$p~\approx~1/B $. This suggests that even with moderate batch sizes, SGD's stability threshold is very close to that of GD's.  Although our result gives an explicit condition on the step size for stability, its computation may still be challenging in practical applications. Thus, we also prove simple necessary criteria for stability, which depend on the batch size and are easier to compute.

Next, we turn to study a broader class of minima which we call \emph{regular}. Specifically, in interpolating minima, the loss of each individual sample has zero gradient and a positive semi-definite (PSD) Hessian. In regular minima, the individual Hessians are still required to be PSD, but the gradients can be arbitrary. Only the average of the gradients over all samples has to vanish (as in any minimum). In this setting, the dynamics can wander within the null-space of the Hessian, if the gradients have nonzero components in that subspace. However, the interesting question is whether the process is stable within the orthogonal complement of the null space. Here we again provide an explicit condition on the step size that is both necessary and sufficient for linear stability. We further derive the theoretical limit of the covariance matrix of the dynamics, as well as the limit values of the expected squared distance to the minimum, the expected loss, and the expected squared norm of the gradient, and show how they all decrease when reducing the learning rate. This provides a theoretical explanation of the behavior encountered in common learning rate scheduling strategies.

Finally, we validate our theoretical results through experiments on the MNIST dataset \citep{lecun1998mnist}. These confirm that our theory correctly predicts the stability threshold of SGD, and its dependence on the batch size. Furthermore, the experiments suggest that SGD converges at the edge of its (mean-square) stability region at least in certain training regimes, which is an interesting subject for future research.

\paragraph{Contributions.} To summarize, our main contributions are as follows.
\begin{itemize}
    \setlength\itemsep{-0.3em} % -0.45em
    \item We derive a closed-form expression for the maximal step size with which mini-batch SGD is linearly stable in the mean square sense. The threshold depends on the batch size (Thm.~\ref{thm:stability threshold expectation}).
    \item We prove that the stability threshold is monotonically non-decreasing with the batch size, so that smaller batches can only compromise stability (Prop.~\ref{thm:Monotonicity}).
    \item We show that the stability threshold of mini-batch SGD is the same as the stability threshold of an algorithm that randomly chooses in each iteration whether to perform a GD step or a single-sample SGD step, where the probability is roughly one over the batch size (Prop.~\ref{prop:equivalent algorithm}).
    \item We determine a lower bound on the batch size such that the stability threshold of SGD is close to that of GD (Prop.~\ref{prop:stability gap}).    
    \item We provide simpler necessary conditions for the linear stability of mini-batch SGD (Prop.~\ref{thm:stability expectation necessary}).
    \item Apart from the common setting of interpolating minima, we also study a large family of non-interpolating minima.  For interpolating minima, SGD converges below the stability threshold. In contrast, for non-interpolating minima, SGD randomly wanders around the minimum. Here we again derive a closed-form for the stability threshold, as well as an expression for the covariance matrix of the dynamics (Thm.~\ref{thm:stability threshold expectation regular minima} and Thm.~\ref{thm:covariance limit proposition}).
    \item Key to our derivations is a fundamental algebraic result, which we prove for sums of Kroneker products of symmetric matrices (Thm.~\ref{thm:Symmetric Kronecker systems}).
\end{itemize}

\section{Background: Linearized dynamics}\label{sec:linearization}
Let $\ell_i:\R^d\to\R$ be differentiable almost everywhere for all $i\in [n]$. We consider the minimization of a loss function
\begin{equation}
	% \smash
 {\loss(\params)=\frac{1}{n}\sum_{i=1}^n \ell_i(\params)}
\end{equation}
using the SGD iterations
\begin{equation}\label{eq:UpdateRule}
	\params_{t+1} = \params_t - \eta \nabla \stochasticLoss_t(\params_t).
\end{equation}
Here, $ \eta $ is the step size and $\stochasticLoss_t$ is a stochastic approximation of $\loss$ obtained as
\begin{equation}
\stochasticLoss_t(\params) = \frac{1}{B}\sum_{i \in \batch_t } \ell_i(\params),
\end{equation}
where $\batch_t$ is a batch of size $B$ sampled at iteration $t$. We assume that the batches $ \{ \batch _t \} $ are drawn uniformly at random from the $\smash{\binom{n}{B}}$ possible options, independently across iterations. Namely, there are distinct samples within each batch and possible repetitions between different batches.

Analyzing the full dynamics of this process is intractable in most cases. Yet near minima, accurate characterization of the stability of the iterates can be obtained via linearization \citep{wu2018sgd,ma2021on,mulayoffNeurips}, %nar2018step
as is common in the analysis of nonlinear systems. 
\begin{definition}[Linearized dynamics]\label{def:Linearization}
    Let $ \params^* $ be a twice differentiable minimum of $ \loss $, and denote
    \begin{equation}
        \grad_i \triangleq \nabla\ell_i(\params^*), \qquad \HH_i \triangleq \nabla^2 \ell_i(\params^*).
    \end{equation}
    Then the linearized dynamics of SGD near $ \params^*  $ is given by
    \begin{equation}\label{eq:linearized dynamics}
        \params_{t+1} = \params_{t} - \frac{\eta}{B} {\sum_{i \in  \batch_t }} \HH_i (\params_t - \params^* ) -\frac{\eta}{B} {\sum_{i \in  \batch_t }} \grad_i.
    \end{equation}
\end{definition}
Note that since $ \params^* $ is a minimum point of $ \loss $ we have that
\begin{equation}\label{eq:gradEqZero}
	\nabla \loss(\params^*) = \frac{1}{n}% \smash
 {\sum_{i= 1}^n} \grad_i = \zeroVec.
\end{equation}
Furthermore, the Hessian of the loss, which we denote by $\HH$, is given by
\begin{equation}\label{eq:Hessian Def}
    \HH \triangleq \nabla^2 \loss(\params^*) = \frac{1}{n} %\smash
    {\sum_{i= 1}^n} \HH_i . 
\end{equation}
Thus, the linearized dynamics are in fact SGD iterates on the second-order Taylor expansion of $ \loss $ at~$ \params^* $,
\begin{equation}\label{eq:TaylorLtilde}
    \lossApprox(\params) = \loss(\params^*)+%\smash
    {\frac{1}{2}} (\params-\params^* )^\transpose \HH (\params - \params^*).
\end{equation}

\section{Stability of first and second moments}\label{sec:expectation results}
Our focus is on the stability of SGD's dynamics at the vicinity of minima. We specifically examine the dynamics within two subspaces: the null space of the Hessian $\HH$ at the minimum, and its orthogonal complement. We denote the projection of any vector $ \vv \in \R^{d} $ onto the null space of $\HH$ by~$ \vv^{\myPar} $, and its projection onto the orthogonal complement of the null space by $ \vv^{\myPerp} $.

Multiple works studied the stability of SGD's dynamics. Commonly, this was done by analyzing the evolution of the moments of the linearized dynamics (see Sec.~\ref{sec:linearization}) over time, with a specific emphasis on the second moment, which is the approach we take here. However, before discussing the evolution of the second moment, let us summarize the behavior of the first moment. Specifically, it is easy to demonstrate that the first moment of SGD's linearized trajectory $ \{\E[  \params_t ]\} $ is the same as GD's. Since GD is stable if and only if $ \eta \leq 2/\lambda_{\max}(\HH) $, we have the following (see proof in App.~\ref{app:Mean dynamics}).
\begin{theorem}[Stability of the mean]\label{thm:stability first moment}
    Assume that $ \params^* $ is a twice differentiable minimum. Consider the linear dynamics of $ \{ \params_t \} $ from Def.~\ref{def:Linearization} and let
    \begin{equation}
        \etaMean \triangleq \frac{2}{\lambda_{\max}(\HH)}.
    \end{equation}
    Then 
    \begin{enumerate}
        \item $ \E\big[\params_t^{\myPar}\big] = \E\big[ \params_0^{\myPar} \big] $ for all $t\geq0$;
        \item $ \underset{t\to \infty}{\mathrm{limsup}} \big\| \E\big[  \params_t \big]-\params^* \big\| $ is finite if and only if $\eta \leq \etaMean$;
        \item $ \underset{t\to \infty}{\lim} \big\| \E\big[  \params_t^{\myPerp} \big]-\params^{*\myPerp}  \big\| =0  $ if $\eta < \etaMean$.
    \end{enumerate}
\end{theorem}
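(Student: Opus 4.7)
The plan is to reduce the first-moment dynamics of SGD exactly to the GD recursion and then read off the three claims from standard spectral analysis of $\Identity-\eta\HH$.

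First I would take expectations of both sides of the linearized update in Def.~\ref{def:Linearization}. Since the batch $\batch_t$ is drawn independently of the past iterates $\params_0,\dots,\params_t$, conditioning on $\params_t$ and averaging over $\batch_t$ gives $\E\bigl[\tfrac{1}{B}\sum_{i\in\batch_t}\HH_i\bigm|\params_t\bigr]=\HH$ by uniform sampling together with \eqref{eq:Hessian Def}, and $\E\bigl[\tfrac{1}{B}\sum_{i\in\batch_t}\grad_i\bigm|\params_t\bigr]=\zeroVec$ by \eqref{eq:gradEqZero}. Defining $\vv_t\triangleq\E[\params_t]-\params^*$, the tower property then yields the deterministic recursion
\begin{equation*}
\vv_{t+1}=(\Identity-\eta\HH)\,\vv_t,\qquad\text{so}\qquad\vv_t=(\Identity-\eta\HH)^t\vv_0.
\end{equation*}
This is precisely the mean dynamics of GD on $\lossApprox$, so everything reduces to analyzing powers of $\Identity-\eta\HH$.

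Next I would decompose along the spectral subspaces of the PSD matrix $\HH$. Because the null space of $\HH$ is an invariant subspace of $\Identity-\eta\HH$ on which the operator acts as the identity, projecting the recursion onto that subspace gives $\vv_t^{\myPar}=\vv_0^{\myPar}$, which is exactly Claim~1 after adding back $\params^{*\myPar}$. On the orthogonal complement $\myRangeOrtho{\HH}^{\myPerp}$ (the range of $\HH$), $\HH$ is invertible with eigenvalues in $(0,\lambda_{\max}(\HH)]$, and the eigenvalues of the corresponding restriction of $\Identity-\eta\HH$ are $\{1-\eta\lambda_i\}$ over the positive spectrum of $\HH$.

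For Claim~2, I would observe that the orthogonal component $\vv_t^{\myPerp}$ stays bounded iff $\max_{\lambda_i>0}|1-\eta\lambda_i|\leq1$, equivalently $\eta\lambda_i\in[0,2]$ for all positive eigenvalues, which is exactly $\eta\leq 2/\lambda_{\max}(\HH)=\etaMean$; the parallel component is constant by Claim~1, so adding the two projections preserves the ``finite iff'' statement on the full vector. For Claim~3, under the strict inequality $\eta<\etaMean$ every factor $|1-\eta\lambda_i|$ for $\lambda_i>0$ is strictly less than $1$, so $(\Identity-\eta\HH)^t$ converges to the orthogonal projector onto $\myNull{\HH}$, forcing $\vv_t^{\myPerp}\to\zeroVec$, i.e. $\E[\params_t^{\myPerp}]\to\params^{*\myPerp}$.

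There is no real obstacle here: the only care needed is the conditional-expectation argument justifying that $\batch_t$ decouples from $\params_t$, and the verification that the null space and its orthogonal complement are both invariant under $\Identity-\eta\HH$ so that the three claims can be read off separately from the two pieces of the spectrum. The rest is elementary spectral analysis of a symmetric PSD operator.
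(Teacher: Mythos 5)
Your proposal is correct and follows essentially the same route as the paper: take expectations (using independence of $\batch_t$ from $\params_t$, the tower property, \eqref{eq:gradEqZero} and \eqref{eq:Hessian Def}) to obtain $\E[\params_{t+1}]-\params^*=(\Identity-\eta\HH)\big(\E[\params_t]-\params^*\big)$, then split along $\myNull{\HH}$ and its orthogonal complement and read the three claims off the spectrum of $\Identity-\eta\HH$, exactly as in App.~\ref{app:Mean dynamics}. The only cosmetic differences are that the paper projects the stochastic recursion before averaging and bounds $\|\MeanVec_t^{\myPerp}\|$ via $\norm{\PP_{\myNullOrtho{\HH}}-\eta\HH}^t$, while you project the mean recursion directly, which is equivalent.
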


We next proceed to analyze the dynamics of the second moment, which determine stability in the mean square sense. Note that boundedness of the first moment is a necessary condition for boundedness of the second moment. Therefore, the condition $ \eta \leq \etaMean$ is a prerequisite for stability in the mean square sense. However, how much smaller than $\etaMean $ is SGD's mean square stability threshold, is not currently known in closed-form. Here, we determine the precise threshold for the mean square stability of SGD's linearized dynamics. To achieve this, we leverage the approach taken by \citet{ma2021on}, who investigated the stability of SGD in the context of interpolating minima.

\subsection{Interpolating minima}\label{sec:Interpolating minima}
We begin by studying interpolating minima, which are prevalent in the training of overparametrized models. In this case, the model fits the training set perfectly, %\footnote{The important minima from a practical standpoint are the ones that benign overfit.}, 
which means that these global minima are also minima for each sample individually. This is expressed mathematically as follows.
\begin{definition}[Interpolating minima]\label{ass:Interpolating minimum}~A twice differentiable minimum $ \params^* $ is said to be \emph{interpolating} if for each sample $i \in [n] $ the gradient $ \grad_i = \zeroVec $ and the Hessian $\HH_i $ is PSD. 
\end{definition}
In this setting, \citet{ma2021on} showed that the evolution over time of any moment of SGD's linearized dynamics is fully tractable. Specifically, for the second moment, they proved the following.
\begin{theorem}[\citet{ma2021on}, Thm.~1 + Cor.~3]\label{thm:stability expectation general}
    Assume that $ \params^* $ is a twice differentiable interpolating minimum. Consider the linear dynamics of $ \{ \params_t \}  $ from Def.~\ref{def:Linearization}, and let
    \begin{equation}\label{eq:Covariance evolution matrix}
        \CovEvo(\eta, B) \triangleq (\Identity-\eta\HH)\otimes(\Identity-\eta\HH) + \frac{n-B}{B(n-1)} \frac{\eta^2}{n}  \smash{\sum_{i = 1}^n} (\HH_i \otimes \HH_i -\HH \otimes \HH ),
    \end{equation}
    where $ \otimes $ denotes the Kronecker product. Then $ \smash{\underset{t\to \infty}{\mathrm{limsup}} \E[ \| \params_t -\params^*  \|^2 ] }$ is finite if and only if
    \begin{equation}\label{eq:impicit SGD stability condition}
    	\max_{\CovMat \in \PSDset{d}} \frac{  \norm{\CovEvo(\eta, B) \; \vectorization{\CovMat}}}{\normF{\CovMat}} \leq 1,
    \end{equation}
    where $ \PSDset{d} $ denotes the set of all PSD matrices over $ \R^{d \times d} $. 
    Furthermore, if the spectral radius~$ \rho (\CovEvo(\eta, B)) \leq 1 $ then $ \underset{t\to \infty}{\mathrm{limsup}} \E\big[ \| \params_t -\params^*  \|^2 \big] $ is finite.
\end{theorem}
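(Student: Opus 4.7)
The plan is to reduce the stability of $\EE\|\params_t-\params^*\|^2$ to a linear recursion on the vectorized second-moment matrix, and then to exploit both invariance of the PSD cone and symmetry of $\CovEvo$. First, under interpolation ($\grad_i=\zeroVec$), Def.~\ref{def:Linearization} becomes $\params_{t+1}-\params^* = M_t(\params_t-\params^*)$ with $M_t\triangleq \Identity-(\eta/B)\sum_{i\in\batch_t}\HH_i$. Because $\batch_t$ is drawn independently of $\batch_0,\ldots,\batch_{t-1}$, it is independent of $\params_t$, so the second-moment matrix $\CovMat_t\triangleq\EE[(\params_t-\params^*)(\params_t-\params^*)^\transpose]$ satisfies $\CovMat_{t+1}=\EE[M_t\CovMat_t M_t^\transpose]$. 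Applying $\vectorization{AXB^\transpose}=(B\otimes A)\vectorization{X}$ gives $\vectorization{\CovMat_{t+1}}=\bar{\CovEvo}\,\vectorization{\CovMat_t}$ with $\bar{\CovEvo}\triangleq\EE[M_t\otimes M_t]$.

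I would then show $\bar{\CovEvo}=\CovEvo(\eta,B)$ by a direct combinatorial computation. Expanding $M_t\otimes M_t$ and evaluating $\EE[\sum_{i,j\in\batch_t}\HH_i\otimes\HH_j]$ using $\prob(i\in\batch_t)=B/n$ and $\prob(\{i,j\}\subset\batch_t)=B(B-1)/(n(n-1))$ for $i\neq j$, combined with $\sum_{i\neq j}\HH_i\otimes\HH_j = n^2\HH\otimes\HH-\sum_i\HH_i\otimes\HH_i$, rearranges into exactly the theorem's expression for $\CovEvo(\eta,B)$. With the recursion $\vectorization{\CovMat_t}=\CovEvo^t\vectorization{\CovMat_0}$ in hand, I would note that each $\CovMat_t$ is PSD and that for PSD matrices $\normF{\CovMat}\leq\Tr(\CovMat)\leq\sqrt{d}\,\normF{\CovMat}$, so boundedness of $\EE\|\params_t-\params^*\|^2=\Tr(\CovMat_t)$ is equivalent to boundedness of $\|\vectorization{\CovMat_t}\|$. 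Sufficiency of the stated condition is then a one-line telescope: since $\CovMat_t\in\PSDset{d}$ at every step, $\max_{\CovMat\in\PSDset{d}}\|\CovEvo\vectorization{\CovMat}\|/\normF{\CovMat}\leq 1$ immediately gives $\normF{\CovMat_{t+1}}\leq\normF{\CovMat_t}$, so the Frobenius norms form a non-increasing, hence bounded, sequence.

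The main obstacle is the converse direction, since the existence of a single violating PSD matrix does not by itself produce an unbounded trajectory. I would close this via a Perron--Frobenius argument for the linear map $T:\CovMat\mapsto\EE[M_t\CovMat M_t^\transpose]$ on the space of symmetric matrices: since each $\HH_i$ is PSD, $M_t$ is symmetric, and hence $T$ is self-adjoint in the Frobenius inner product. Self-adjointness forces $\|T\|_{\mathrm{op}}=\rho(T)$, while Krein--Rutman in finite dimensions (applied to the PSD-cone-preserving $T$) provides a PSD eigenmatrix $\CovMat^\star$ with eigenvalue $\rho(T)$. These two facts together pinch the stated maximum between $\rho(T)$ and $\|T\|_{\mathrm{op}}$, so all three quantities coincide. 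Hence if the stated maximum exceeds $1$, so does $\rho(T)$, and initializing with $\EE[(\params_0-\params^*)(\params_0-\params^*)^\transpose]=\CovMat^\star$ yields $\CovMat_t=\rho(T)^t\CovMat^\star$, which is unbounded. This same equality also immediately yields the ``furthermore'' clause, since $\rho(\CovEvo(\eta,B))$ on the invariant symmetric subspace, which is what governs the dynamics, equals the stated maximum, so $\rho(\CovEvo(\eta,B))\leq 1$ is an equivalent sufficient condition.
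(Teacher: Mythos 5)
Your proposal is correct, but it follows a different route from the paper: the paper does not actually prove this statement, it imports it wholesale from \citet{ma2021on} (the appendix re-derives the recursion $\vectorization{\CovMat_{t+1}}=\CovEvo\,\vectorization{\CovMat_t}$ and the closed form of $\CovEvo(\eta,B)$, exactly as in your first two paragraphs, but the equivalence with the constrained maximum in \eqref{eq:impicit SGD stability condition} is cited, not shown). The interesting divergence is in how you close the necessity direction. You invoke the finite-dimensional Krein--Rutman/Perron--Frobenius theorem for the PSD-cone-preserving, self-adjoint map $T:\CovMat\mapsto\EE[M_t\CovMat M_t^\transpose]$ to get a PSD eigenmatrix at $\rho(T)$ and the identity $\rho(T)=\normOp{T}=\max_{\CovMat\in\PSDset{d}}\norm{\CovEvo\vectorization{\CovMat}}/\normF{\CovMat}$. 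This is, in restricted form, precisely the content of the paper's Thm.~\ref{thm:Symmetric Kronecker systems}, which the authors prove later by an explicit algebraic argument (symmetric/skew-symmetric eigenbases, Hadamard powers, Cauchy interlacing) and present as a key contribution. Your abstract cone argument is shorter and suffices here because the covariance dynamics live entirely in the symmetric subspace; what it does not give you is the paper's stronger claim about the full $d^2\times d^2$ matrix, namely that the symmetric part dominates the skew-symmetric part, so that $\rho(\CovEvo)=\lambda_{\max}(\CovEvo)$ with a PSD top eigenvector of $\CovEvo$ itself. For that reason your final sentence slightly overreaches: calling $\rho(\CovEvo(\eta,B))\leq 1$ an \emph{equivalent} condition requires exactly that dominance; for the ``furthermore'' clause as stated you only need $\rho(\CovEvo)\geq\rho(T\vert_{\mathrm{sym}})$, which holds trivially because the symmetric subspace is $\CovEvo$-invariant, so sufficiency goes through. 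Two small points worth making explicit: (i) necessity, as you argue it, exhibits a divergent trajectory for a suitable initialization (e.g.\ any $\params_0-\params^*$ with a nonzero component in the range of the Krein--Rutman eigenmatrix, which exists since that eigenmatrix is nonzero PSD), which matches the worst-case-over-initializations notion of stability used by \citet{ma2021on}; and (ii) in the sufficiency step, the chain is $\normF{\CovMat_{t+1}}=\norm{\CovEvo\vectorization{\CovMat_t}}\leq\normF{\CovMat_t}$ together with $\Tr(\CovMat_t)\leq\sqrt{d}\,\normF{\CovMat_t}$, which you state correctly.
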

Below, we omit the dependence of $\CovEvo$ on $\eta$ and $B$ whenever these are not essential for the discussion. In this theorem, $ \CovMat $ represents the second-moment matrix of $\params_t-\params^*$. Specifically, the matrix $\CovMat_t = \E[(\params_t-\params^*)(\params_t-\params^*)^\transpose] $ evolves over time as $ \mathrm{vec}(\CovMat_{t+1}) = \CovEvo \, \mathrm{vec}(\CovMat_{t}) $. Therefore, the stability condition of \eqref{eq:impicit SGD stability condition} simply states that if the dynamics of the dominant initial state of the system (which is restricted to PSD matrices) is bounded, then $ \CovMat_t $ is bounded and vice versa. However, this characterization leaves us with a constrained optimization problem over a $d^2$-dimensional space, which is hard to solve numerically. Therefore, this approach does not reduce the problem into a condition from which we can gain any theoretical insight into SGD's stability.

Our first key result is that the constrained optimization problem in \eqref{eq:impicit SGD stability condition} can be reduced to an eigenvalue problem. Specifically, we establish (see Sec.~\ref{sec:Stability threshold proof outline}) that when the eigenvectors of the $d^2\times d^2$ matrix $ \CovEvo $ are reshaped into $d\times d$ matrices, they correspond to either symmetric or skew-symmetric matrices\footnote{Eigenbases corresponding to eigenvalues of multiplicity greater than one, always have a basis consisting of symmetric and skew-symmetric matrices.}. Moreover, the top eigenvalue of $ \CovEvo $ is a dominant eigenvalue, and always corresponds to a PSD matrix. Consequently, the maximizer of \eqref{eq:impicit SGD stability condition} is the top eigenvector of~$\CovEvo$, which we use, along with some algebraic manipulation, to derive the following result (see proof in App.~\ref{app:stability threshold expectation proof}).
\begin{theorem}[Mean square stability for interpolating minima]\label{thm:stability threshold expectation}
    Assume that $ \params^* $ is a twice differentiable interpolating minimum. Consider the linear dynamics of $ { \{ \params_t \} }  $ from Def.~\ref{def:Linearization}, and let
    \begin{equation}
        \CC \triangleq \frac{1}{2} \HH \oplus \HH , \qquad
        \DD \triangleq (1-p)\, \HH \otimes \HH + p \,\frac{1}{n}\smash{\sum_{i=1}^n} \HH_i \otimes \HH_i,
    \end{equation}
    where $ \oplus $ denotes the Kronecker sum and $ \smash{ p \triangleq \frac{n-B}{B(n-1)} \in [0, 1] }$ . Define
    \begin{equation}\label{eq:etaVar definition}
        \etaVar\triangleq \frac{2}{\lambda_{\max}\left(\CC^{\dagger} \DD \right)},
    \end{equation}
    where $ { \CC^{\dagger} } $ denotes the Moore-Penrose inverse of $ \CC $. 
    Then 
    \begin{enumerate}
    \setlength\itemsep{0.3em}
        \item $ {\params_t^{\myPar}=\params_0^{\myPar}}$ (surely) for all $t\geq0$;
        \item $ { \underset{t\to \infty}{\mathrm{limsup}}\, \E\big[ \| \params_t^\myPerp -\params^{*\myPerp}  \|^2 \big] } $ is finite if and only if $\eta \leq \etaVar$;
        \item $ { \underset{t\to \infty}{\lim} \E\big[ \| \params_t^\myPerp -\params^{*\myPerp}  \|^2 \big] =0 }$ if $\eta < \etaVar$.
    \end{enumerate}
\end{theorem}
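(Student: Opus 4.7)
The plan is to start from Theorem~\ref{thm:stability expectation general} and reduce the cone optimization in~\eqref{eq:impicit SGD stability condition} to an explicit spectral inequality. A preliminary algebraic step rewrites the covariance evolution matrix as $\CovEvo = \Identity - 2\eta\CC + \eta^2\DD$: expanding $(\Identity-\eta\HH)\otimes(\Identity-\eta\HH) = \Identity - 2\eta\CC + \eta^2\HH\otimes\HH$ via Kronecker identities and absorbing the mini-batch correction into the coefficient of $\eta^2$ produces the clean form. Part~1 then follows immediately from the PSD assumption: since $\HH = \tfrac{1}{n}\sum_i \HH_i$ is a convex combination of PSD matrices, $\myNull{\HH} = \bigcap_i \myNull{\HH_i}$, so every $\HH_i$ acts as the identity on $\myNull{\HH}$; combined with interpolation ($\grad_i=\zeroVec$), this forces $\params_t^{\myPar} = \params_0^{\myPar}$.

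For Parts~2 and 3 the heart of the argument is to reduce the PSD-cone maximum in~\eqref{eq:impicit SGD stability condition} to the inequality $\lambda_{\max}(\CovEvo|_{\mathrm{sym}}) \leq 1$. I would establish three structural properties of $\CovEvo$: (i) it is a symmetric $d^2\times d^2$ matrix, since $\CC$ and $\DD$ are both symmetric Kronecker operators; (ii) it preserves the vectorized-symmetric subspace, because each summand is invariant under the Kronecker swap corresponding to transposition; and (iii) it preserves the PSD cone, since it may be written as $\E_{\batch}[(\Identity-\eta\hat{\HH})\otimes(\Identity-\eta\hat{\HH})]$, whose action on a PSD $\CovMat$ is the expected congruence $\E_{\batch}[(\Identity-\eta\hat{\HH})\CovMat(\Identity-\eta\hat{\HH})] \succeq 0$. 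By (i) the operator norm of $\CovEvo$ equals its spectral radius; by (iii) together with the Krein--Rutman theorem applied to the self-dual PSD cone, $\rho(\CovEvo|_{\mathrm{sym}})$ is attained at a PSD eigenvector with non-negative eigenvalue, which therefore also equals $\lambda_{\max}(\CovEvo|_{\mathrm{sym}})$. Plugging this eigenvector into~\eqref{eq:impicit SGD stability condition} shows the cone maximum is attained and equals $\lambda_{\max}(\CovEvo|_{\mathrm{sym}})$.

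The remainder is algebra. Restricting to the orthogonal complement of $\myNull{\HH}$ (where $\HH$ and hence $\CC$ are positive definite), $\lambda_{\max}(\CovEvo)\leq 1$ becomes $-2\eta\CC + \eta^2\DD \preceq 0$, equivalently $\eta\DD \preceq 2\CC$, and a generalized Rayleigh quotient yields $\eta \leq 2/\lambda_{\max}(\CC^\dagger \DD) = \etaVar$. Part~3 then follows because strict inequality gives $\rho(\CovEvo|_{\perp,\mathrm{sym}}) < 1$, so iterating $\CovMat_{t+1}=\CovEvo\,\CovMat_t$ yields geometric decay of $\CovMat_t^{\myPerp}$ and hence of $\E[\|\params_t^{\myPerp} - \params^{*\myPerp}\|^2] = \Tr(\CovMat_t^{\myPerp})$. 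I expect the main obstacle to be the Krein--Rutman step: one must carefully verify that the Perron eigenvalue of $\CovEvo|_{\mathrm{sym}}$ equals the spectral radius and is non-negative, rather than being dominated in absolute value by a negative eigenvalue supported on a symmetric indefinite matrix. The symmetric/skew-symmetric eigenbasis structure hinted at by the authors in the excerpt provides the complementary tool needed to make this rigorous and to confirm that restricting to the symmetric subspace loses no relevant spectral information.
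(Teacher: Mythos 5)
Your reduction to Thm.~\ref{thm:stability expectation general}, the rewriting $\CovEvo=\Identity-2\eta\CC+\eta^2\DD$, and Part~1 are fine (one slip: on $\myNull{\HH}$ each $\HH_i$ acts as \emph{zero}, so the update matrix $\Identity-\tfrac{\eta}{B}\sum\HH_i$ acts as the identity). The cone step is also sound and is a genuinely different route from the paper's: $\CovEvo$ is self-adjoint, preserves the symmetric subspace, and maps $\PSDset{d}$ into itself (expected congruence), so finite-dimensional Krein--Rutman/Perron--Frobenius for the proper PSD cone gives that $\rho(\CovEvo|_{\mathrm{sym}})$ is an eigenvalue with a PSD eigenvector, hence equals $\lambda_{\max}(\CovEvo|_{\mathrm{sym}})$ and the cone maximum in \eqref{eq:impicit SGD stability condition}. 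Note this already rules out the ``main obstacle'' you name at the end (a dominating negative eigenvalue supported on a symmetric indefinite matrix), so that is not where the difficulty lies.

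The genuine gap is elsewhere: your argument characterizes stability by $\lambda_{\max}(\CovEvo|_{\mathrm{sym}})\leq 1$, which after the Rayleigh-quotient algebra gives the threshold $2/\lambda^{\mathrm{sym}}$ with $\lambda^{\mathrm{sym}}=\sup\{\uu^\transpose\DD\uu/\uu^\transpose\CC\uu\}$ taken over \emph{symmetric} vectorizations $\uu\notin\myNull{\DD}$. The theorem's constant is $\etaVar=2/\lambda_{\max}(\CC^\dagger\DD)$ with $\lambda_{\max}$ over all of $\R^{d^2}$; since $\CC$ and $\DD$ commute with the commutation (swap) operator, this full-space eigenvalue is the maximum of the symmetric and skew-symmetric restrictions, and nothing in your proposal excludes the skew part being strictly larger --- the PSD cone spans only the symmetric subspace, so Krein--Rutman is silent about skew directions. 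Showing that skew-symmetric directions never dominate is precisely the hard content of the paper's Thm.~\ref{thm:Symmetric Kronecker systems} (proved by rotating to canonical form and comparing quadratic forms via Hadamard squares $\MM_i^{\odot 2}$, the per-block determinant bound, and Cauchy interlacing); the mere existence of a symmetric/skew eigenbasis, which you cite as ``the complementary tool,'' does not tell you on which part the top eigenvalue sits, so with that ingredient left unproved Part~2's ``if and only if'' with the stated $\etaVar$ is not established. Two smaller points: the equality case $\eta=\etaVar$ and the directions in $\myNull{\CC}\subseteq\myNull{\DD}$ (where $\CovEvo$ has eigenvalue $1$) need the careful treatment the paper gives before you can pass from $\lambda_{\max}(\CovEvo|_{\mathrm{sym}})\le 1$ to the quotient bound; and for Part~3 your restriction to the orthogonal complement of $\myNull{\HH}$ is indeed required (and matches the paper's use of $(\PP_{\myNullOrtho{\HH}}\otimes\PP_{\myNullOrtho{\HH}})\CovEvo$), otherwise the unit eigenvalue on the null-space block blocks geometric decay.
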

This result provides an explicit characterization of the mean square stability of SGD. Here we see that the set of step sizes that are stable in the mean square sense, is an interval. This is in contrast to stability in probability, where the stable learning rates can comprise of several disjoint intervals~\citep{ziyin2023probabilistic}. Moreover, SGD's threshold, $ \etaVar $, has the same form as the threshold for GD, $ 2/\lambda_{\max}$, but with a different matrix. In App.~\ref{app:Recovering GD's stability condition} we show how Thm.~\ref{thm:stability threshold expectation} recovers GD's condition when $ B=n $.

The dependence of $ \etaVar $ on the batch size $B$ may not be immediate to see from the theorem. However, we can prove the following (see proof in App.~\ref{app:MonotonicityProof}).
\begin{proposition}[Monotonicity of the stability threshold]\label{thm:Monotonicity}
    Assume that $ \params^* $ is a twice differentiable interpolating minimum. Then $ \etaVar $ is a non-decreasing function of $ B $.  
\end{proposition}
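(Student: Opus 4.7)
The plan is to show that $\lambda_{\max}(\CC^{\dagger}\DD)$ is non-increasing in $B$, since $\etaVar = 2/\lambda_{\max}(\CC^{\dagger}\DD)$. The dependence on $B$ enters only through $p = \tfrac{n-B}{B(n-1)}$, which is strictly decreasing in $B\in\{1,\dots,n\}$, so it suffices to show $\lambda_{\max}(\CC^{\dagger}\DD)$ is non-decreasing in $p$. First I would expose this dependence linearly by rewriting
\begin{equation*}
\DD(p) = \HH\otimes\HH + p\,\MM, \qquad \MM \triangleq \frac{1}{n}\sum_{i=1}^{n}(\HH_i-\HH)\otimes(\HH_i-\HH),
\end{equation*}
where the expression for $\MM$ follows by expanding and using $\HH = \tfrac{1}{n}\sum_i\HH_i$ to cancel the cross-terms. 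Both $\CC$ and $\DD(p)$ are symmetric PSD $d^2\times d^2$ matrices (each a convex combination of Kronecker products of PSD matrices), so the top generalized eigenvalue admits the Rayleigh-quotient characterization
\begin{equation*}
\lambda_{\max}(\CC^{\dagger}\DD(p)) = \max_{\vv\in\myRange{\CC}\setminus\{\zeroVec\}}\frac{\vv^{\transpose}\DD(p)\vv}{\vv^{\transpose}\CC\vv}.
\end{equation*}

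The central supporting lemma is that $\MM$ acts non-negatively on vectorized PSD matrices: for any PSD $X = YY^{\transpose}$ and each index $i$, cyclicity of trace and symmetry of $\HH_i-\HH$ give
\begin{equation*}
\langle\vectorization{X},\bigl((\HH_i-\HH)\otimes(\HH_i-\HH)\bigr)\vectorization{X}\rangle = \Tr\!\bigl(X(\HH_i-\HH)X(\HH_i-\HH)\bigr) = \normF{Y^{\transpose}(\HH_i-\HH)Y}^2 \geq 0.
\end{equation*}
Averaging over $i$ yields $\vectorization{X}^{\transpose}\MM\vectorization{X}\geq 0$ for every PSD $X$.

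The remaining ingredient is the structural fact already established while proving Thm.~\ref{thm:stability threshold expectation} (App.~\ref{app:stability threshold expectation proof}): the top generalized eigenvector may be taken as $\vv_p = \vectorization{\CovMat_p}$ with $\CovMat_p\succeq\zeroVec$. Indeed, the top eigenvector of $\CovEvo = \Identity - 2\eta\CC + \eta^2\DD(p)$ is itself PSD, and at $\eta = \etaVar(p)$ the eigenvalue equation $\CovEvo\vectorization{\CovMat_p}=\vectorization{\CovMat_p}$ rearranges into $\DD(p)\vectorization{\CovMat_p} = (2/\etaVar(p))\,\CC\vectorization{\CovMat_p}$. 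Plugging this PSD maximizer into the Rayleigh quotient at any $p'\geq p$,
\begin{equation*}
\lambda_{\max}(\CC^{\dagger}\DD(p')) \geq \frac{\vv_p^{\transpose}\DD(p')\vv_p}{\vv_p^{\transpose}\CC\vv_p} = \lambda_{\max}(\CC^{\dagger}\DD(p)) + (p'-p)\frac{\vv_p^{\transpose}\MM\vv_p}{\vv_p^{\transpose}\CC\vv_p} \geq \lambda_{\max}(\CC^{\dagger}\DD(p)),
\end{equation*}
with the denominator positive because $\vv_p\in\myRange{\CC}$ and $\CC$ is PSD. The main obstacle is the PSD restriction of the maximizer: as a $d^2\times d^2$ matrix $\MM$ is indefinite in general (a single summand $(\HH_i-\HH)\otimes(\HH_i-\HH)$ with indefinite $\HH_i-\HH$ already has eigenvalues of both signs), so the monotonicity genuinely relies on restricting the test vector to the vectorized PSD cone---precisely the structural insight already extracted in proving Thm.~\ref{thm:stability threshold expectation}.
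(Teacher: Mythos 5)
Your strategy is essentially the paper's: reduce to showing that $\lambda_{\max}(\CC^{\dagger}\DD)$ is non-decreasing in $p$, isolate the $p$-linear part $\MM=\bE=\frac{1}{n}\sum_i(\HH_i-\HH)\otimes(\HH_i-\HH)$, and exploit that an optimizer of the generalized Rayleigh quotient can be taken to be the vectorization of a PSD matrix, on which $\bE$ acts non-negatively. Your two local substitutions are fine and arguably cleaner: a finite-difference comparison between $p$ and $p'\ge p$ instead of the paper's derivative of $\lambda_{\max}$ (App.~\ref{app:MonotonicityProof}), and the trace identity $\vectorization{X}^{\transpose}\bE\,\vectorization{X}=\frac{1}{n}\sum_i\normF{Y^{\transpose}(\HH_i-\HH)Y}^2\ge 0$ instead of the Hadamard-power computation.

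The gap is in how you justify the key structural fact that the maximizer $\vv_p$ can be taken PSD with $\vv_p^{\transpose}\CC\vv_p>0$. You obtain it by applying Thm.~\ref{thm:Symmetric Kronecker systems} to $\CovEvo=\Identity-2\eta\CC+\eta^2\DD(p)$ at $\eta=\etaVar(p)$. But at that step size the top eigenvalue of $\CovEvo$ equals $1$, and its eigenspace contains all of $\myNull{\CC}\subseteq\myNull{\DD}$, i.e., all vectorizations of matrices supported on $\myNull{\HH}$; whenever $\HH$ is singular (the typical overparametrized case) this subspace contains vectorized PSD matrices. Theorem~\ref{thm:Symmetric Kronecker systems} only guarantees that \emph{some} top eigenvector corresponds to a PSD matrix, and it may be one of these degenerate directions, for which $\CC\vv_p=\DD\vv_p=\zeroVec$: your identity $\DD(p)\vv_p=(2/\etaVar)\CC\vv_p$ then reads $0=0$, the Rayleigh quotient is $0/0$, and the comparison collapses. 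The assertion ``$\vv_p\in\myRange{\CC}$'' is exactly what is unproven (and is stronger than needed anyway: a PSD optimizer generically has a harmless $\myNull{\CC}$-component; what you need is $\vv_p^{\transpose}\CC\vv_p>0$ together with the characterization of $\lambda_{\max}(\CC^{\dagger}\DD)$ as a supremum over all $\uu\notin\myNull{\CC}$, not only $\uu\in\myRange{\CC}$). This structural fact is true, but it is not ``already established while proving Thm.~\ref{thm:stability threshold expectation}''; the paper proves it separately in App.~\ref{sec:Eigenvector of CC^{dagger}^{1/2} DD CC^{dagger}^{1/2}}, by re-deriving the threshold while \emph{keeping} the PSD constraint from Ma et al.'s condition and matching the resulting expression for $\etaVar$ with the unconstrained one; this produces a PSD $\uu$ with $\CC^{\frac{1}{2}}\uu$ a top eigenvector of $(\CC^{\frac{1}{2}})^{\dagger}\DD(\CC^{\frac{1}{2}})^{\dagger}$, hence $\uu^{\transpose}\CC\uu=\|\CC^{\frac{1}{2}}\uu\|^2>0$. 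With that ingredient supplied, the rest of your argument goes through.
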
%
This result implies that decreasing the batch size can only impair stability, which settles with empirical observations, \eg in \citep[Fig.~1]{wu2018sgd}. Additionally, since $ \etaVar $ is non-decreasing with $B$, and for $ B = n$  it equals $ \etaMean $, we have that the gap between $ \lambda_{\max}(\CC^\dagger\DD) $ and $  \lambda_{\max}(\HH) $ is nonnegative for all $ B \in [1,n] $ and non-increasing in $B$. For stable minima, $ \lambda_{\max}(\CC^\dagger\DD) $ is bounded from above by $ 2/\eta $. This suggests that training with smaller batches leads to lower $ \lambda_{\max}(\HH) $, \ie flatter minima, which aligns with experimental results \citep{keskar2016large,jastrzkebski2017three}.

At what rate does $ \etaVar $ increase with $B$ towards $ \etaMean $? To understand this, note that $\DD$ is a convex combination of two matrices, where $p$ represents the combination weight. The first matrix, $\HH \otimes \HH$, is associated with full batch SGD ($ B = n $), while the second matrix, $\frac{1}{n}\sum_{i=1}^n\HH_i \otimes \HH_i$, is related to single sample SGD ($ B = 1 $). We can use this fact to explain the effect of the batch size on dynamical stability by presenting an equivalent stochastic process that has the same stability threshold as SGD (see proof in App.~\ref{app:equivalent algorithm proof}).
\begin{proposition}[Equivalent mixture process]\label{prop:equivalent algorithm}
Let \emph{\texttt{ALG}}$ (p) $ be a stochastic optimization algorithm in which
\begin{equation}
     \params_{t+1} = 
     \begin{cases}
         \params_t - \eta \nabla \ell_{i_t}(\params_t) &  \text{w.p.} \quad  p, \\
         \params_t - \eta \nabla \loss(\params_t) &  \text{w.p.} \quad  1-p,
     \end{cases}  
\end{equation}
where $ \{ i_t \} $ are i.i.d.\ random indices distributed uniformly over the training set. Assume that $ \params^* $ is a twice differentiable interpolating minimum. Then when $ p=\frac{n-B}{B(n-1)} $, \emph{\texttt{ALG}}$ (p) $ has the same stability threshold in the vicinity of $ \params^* $ as SGD with batch size $ B $.
\end{proposition}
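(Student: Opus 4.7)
My plan is to show that the two processes induce the \emph{same} covariance evolution operator on $\params_t-\params^*$, so by Thm.~\ref{thm:stability threshold expectation} they must share a stability threshold. The starting point is that near an interpolating minimum, both dynamics are linear with zero drift (since $\grad_i=\zeroVec$ for all $i$), so a single step multiplies $\params_t-\params^*$ by a random matrix $\TransMat_t$. For SGD, $\TransMat_t = \Identity - \tfrac{\eta}{B}\sum_{i\in\batch_t}\HH_i$, while for \texttt{ALG}$(p)$,
\begin{equation*}
    \TransMat_t =
    \begin{cases}
        \Identity - \eta\HH_{i_t} & \text{w.p. } p, \\
        \Identity - \eta\HH       & \text{w.p. } 1-p.
    \end{cases}
\end{equation*}
In either case, $\CovMat_{t+1}=\E[\TransMat_t\CovMat_t\TransMat_t^\transpose]$, so after vectorization the evolution is governed by $\E[\TransMat_t\otimes\TransMat_t]$.

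Next I would compute $\E[\TransMat_t\otimes\TransMat_t]$ for \texttt{ALG}$(p)$ by conditioning on the Bernoulli choice:
\begin{equation*}
    \E[\TransMat_t\otimes\TransMat_t] = (1-p)(\Identity-\eta\HH)\otimes(\Identity-\eta\HH) + \frac{p}{n}\sum_{i=1}^n (\Identity-\eta\HH_i)\otimes(\Identity-\eta\HH_i).
\end{equation*}
Expanding each Kronecker product through $\eta^2$ and using $\tfrac{1}{n}\sum_i\HH_i=\HH$, the degree-$0$ and degree-$1$ (in $\eta$) terms collapse to $\Identity\otimes\Identity-\eta(\HH\oplus\HH)$ and are independent of $p$; only the $\eta^2$ term is affected, giving
\begin{equation*}
    \E[\TransMat_t\otimes\TransMat_t] = \Identity\otimes\Identity - \eta(\HH\oplus\HH) + \eta^2\Bigl[(1-p)\HH\otimes\HH + \tfrac{p}{n}\sum_{i=1}^n \HH_i\otimes\HH_i\Bigr].
\end{equation*}

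Then I would rewrite the matrix $\CovEvo(\eta,B)$ from \eqref{eq:Covariance evolution matrix} in the same canonical form: expanding $(\Identity-\eta\HH)\otimes(\Identity-\eta\HH)$ produces exactly the same degree-$0$ and degree-$1$ contributions, and factoring the remaining $\eta^2$ terms yields precisely $(1-p)\HH\otimes\HH + \tfrac{p}{n}\sum_i\HH_i\otimes\HH_i$ with the identification $p=\tfrac{n-B}{B(n-1)}$. Hence $\CovEvo(\eta,B)=\E[\TransMat_t\otimes\TransMat_t]$ as $d^2\times d^2$ operators. Since Thm.~\ref{thm:stability threshold expectation} shows that mean-square stability on the range of $\HH$ is determined entirely by this operator (through the associated eigenvalue problem whose top eigenvalue gives $\etaVar$), the two processes have the same stability threshold near $\params^*$.

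The only mild obstacle is bookkeeping the coefficient $p$ correctly: one must verify that $p=\tfrac{n-B}{B(n-1)}$ lies in $[0,1]$ (it does, being $1$ at $B=1$ and $0$ at $B=n$), and that reconstructing $\CovEvo(\eta,B)$ in the ``convex combination'' form indeed uses this exact value. Beyond that, the argument is purely algebraic — no spectral analysis is needed, because once the evolution operators coincide, Thm.~\ref{thm:stability threshold expectation} supplies the stability conclusion directly.
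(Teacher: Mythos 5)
Your proposal is correct and follows essentially the same route as the paper: condition on the Bernoulli choice to compute $\E[\TransMat_t\otimes\TransMat_t]$ for \texttt{ALG}$(p)$, rewrite $\CovEvo(\eta,B)$ as the convex combination $(1-p)(\Identity-\eta\HH)\otimes(\Identity-\eta\HH)+\tfrac{p}{n}\sum_i(\Identity-\eta\HH_i)\otimes(\Identity-\eta\HH_i)$ with $p=\tfrac{n-B}{B(n-1)}$, and conclude that identical covariance evolution operators imply identical stability thresholds. This matches the paper's argument in App.~\ref{app:equivalent algorithm proof} (which relies on the alternative form \eqref{eq:Covariance evolution matrix alternative}), so no further changes are needed.
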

In simpler terms, \texttt{ALG}$ (p) $ is a mixture process that takes in each iteration a gradient step with a batch of one sample ($B=1$) with probability $p$ and with a full batch ($B=n$) with probability $1-p$. This result shows that the stability conditions of SGD and of \texttt{ALG}$ (p) $ are the same for $ p=\frac{n-B}{B(n-1)}  $. For~$ n\gg B $, we get $ p \approx 1/B$. Thus, Prop.~\ref{prop:equivalent algorithm} implies that, in the context of stability, even moderate values of $ B $ make mini-batch SGD behave like GD. We next quantify how large $B$ needs to be in order for the gap between $ \etaVar $ and $ \etaMean $ to be small  (see proof in App.~\ref{app:stability gap proof}).
\begin{proposition}[Stability gap]\label{prop:stability gap} Define $ \bE = \frac{1}{n} \sum_{i = 1}^n(\HH_i-\HH) \otimes (\HH_i-\HH) $ and let  $ \varepsilon \in (0,1)$. If
\begin{equation}
    B \geq \frac{1-\varepsilon}{\varepsilon} \  \frac{\lambda_{\max}(\CC^{\dagger}\bE)}{\lambda_{\max}(\HH)},
\end{equation}
then
\begin{equation}
    (1-\varepsilon)\etaMean \leq \etaVar \leq  \etaMean .
\end{equation}
\end{proposition}
Here $ \bE $ captures the variance of the per-sample Hessians. Thus, this result suggests that when these Hessians are similar (\ie the entries of $ \bE $ are small), moderate batch sizes are sufficient to guarantee a small gap between $ \etaMean $ and $ \etaVar $. On the other hand, if the variance of the Hessians is large, then the batch size $ B $ is expected to be large for the stability thresholds of SGD and GD to be close. We note that while propositions~\ref{thm:Monotonicity}-\ref{prop:stability gap} were presented in the context of interpolating minima, they also apply to regular minima (see Sec.~\ref{sec:Non-interpolating minima}).

Although Thm.~\ref{thm:stability threshold expectation} provides an explicit threshold for the step size, its computation may be challenging in practical applications, as it requires inverting, multiplying, and computing the spectral norm of large ($d^2\times d^2$) matrices. Still, we can obtain necessary criteria for stability that are simple and easier to verify, and which also depend on the batch size. To do so, we compute quadratic forms over $ \CC^{\dagger} \DD $ with non-optimal yet interesting vectors. In this way, we bound $ \lambda_{\max}(\CC^{\dagger} \DD) $ from below to get the following result
(see detail and proof in App.~\ref{app:stability necessary proof}).
\begin{proposition}[Necessary conditions for stability]\label{thm:stability expectation necessary}
    Let $ \vv_{\max} $ be a top eigenvector of $ \HH $. Then the step size $\etaVar$ satisfies
    \begin{equation}\label{eq:necessary condition lambda max}
     \etaVar \leq \frac{ 2\lambda_{\max}(\HH) }{ \lambda_{\max}^2(\HH) + \frac{p}{n}\sum_{i = 1}^n ( \vv_{\max}^\transpose \HH_i \vv_{\max} - \lambda_{\max}(\HH))^2 },
    \end{equation}
    as well as
    \begin{equation}\label{eq:necessary condition Frobenius norm}
     % \smash[t]
     {\etaVar \leq \frac{ 2 \Tr(\HH) }{ (1-p)\normF{\HH}^2  + \frac{p}{n}\sum_{i = 1}^n  \normF{\HH_i}^2}}.
    \end{equation}
\end{proposition}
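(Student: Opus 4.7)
The plan is to lower-bound $\lambda_{\max}(\CC^\dagger\DD)$ via a generalized Rayleigh quotient evaluated on carefully chosen test matrices, and then invoke $\etaVar = 2/\lambda_{\max}(\CC^\dagger\DD)$ from Thm.~\ref{thm:stability threshold expectation} to flip each lower bound into an upper bound on $\etaVar$. Specifically, I aim to show that for every $x$ with $x^\transpose \CC x > 0$,
$\lambda_{\max}(\CC^\dagger \DD) \geq x^\transpose \DD x / x^\transpose \CC x$,
and then obtain \eqref{eq:necessary condition lambda max} and \eqref{eq:necessary condition Frobenius norm} by plugging in $x = \vectorization{\vv_{\max}\vv_{\max}^\transpose}$ and $x = \vectorization{\Identity}$ respectively.

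The first step is to justify this variational inequality, which is where the interpolation hypothesis enters. Because $\CC$ is singular whenever $\HH$ has a nontrivial kernel, I need the range inclusion $\mathrm{range}(\DD)\subseteq\mathrm{range}(\CC)$, equivalently $\ker(\CC)\subseteq\ker(\DD)$. Using $(\HH\oplus\HH)\vectorization{X}=\vectorization{\HH X + X\HH}$ and working in the eigenbasis of $\HH$, $\ker(\CC)$ consists precisely of vectorized matrices $X$ supported on the block $\ker(\HH)\times\ker(\HH)$. The PSD assumption on each $\HH_i$ combined with $\HH = \frac{1}{n}\sum_i \HH_i$ forces $v\in\ker(\HH)\Rightarrow v^\transpose \HH_i v = 0\Rightarrow \HH_i v = 0$, so any such $X$ satisfies $\HH_i X = X \HH_i = 0$, hence $\DD X = 0$. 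After this structural step, the Rayleigh characterization follows from the standard reduction to the symmetric PSD matrix $\CC^{\dagger/2}\DD\CC^{\dagger/2}$ acting on $\mathrm{range}(\CC)$.

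Given the Rayleigh inequality, the two bounds are short calculations. For the first bound, take $X = \vv_{\max}\vv_{\max}^\transpose$; using the Kronecker identity $(A\otimes B)\vectorization{X}=\vectorization{BXA^\transpose}$ together with $\HH\vv_{\max}=\lambda_{\max}(\HH)\vv_{\max}$ and $\|\vv_{\max}\|=1$, one computes $x^\transpose\CC x = \lambda_{\max}(\HH)$ and $x^\transpose \DD x = (1-p)\lambda_{\max}^2(\HH) + \frac{p}{n}\sum_i (\vv_{\max}^\transpose \HH_i \vv_{\max})^2$. Rewriting the numerator via the elementary identity $(1-p)a^2 + \frac{p}{n}\sum_i b_i^2 = a^2 + \frac{p}{n}\sum_i (b_i - a)^2$, which holds whenever $a = \frac{1}{n}\sum_i b_i$, applied with $a = \lambda_{\max}(\HH) = \vv_{\max}^\transpose \HH \vv_{\max}$ and $b_i = \vv_{\max}^\transpose \HH_i \vv_{\max}$, puts the ratio in the exact ``variance'' form of \eqref{eq:necessary condition lambda max}. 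For the second bound, take $X = \Identity$; the same Kronecker identities give $x^\transpose \CC x = \Tr(\HH)$ and $x^\transpose \DD x = (1-p)\normF{\HH}^2 + \frac{p}{n}\sum_i \normF{\HH_i}^2$, which is exactly \eqref{eq:necessary condition Frobenius norm} after inversion.

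The principal obstacle is the kernel-inclusion step: without the PSD individual-Hessian structure, a test vector with a nonzero component in $\ker(\CC)\setminus\ker(\DD)$ could make $x^\transpose \DD x / x^\transpose \CC x$ exceed $\lambda_{\max}(\CC^\dagger \DD)$ and invalidate the argument. Once the interpolation hypothesis is used to settle this range inclusion, the rest of the proof is routine Kronecker-identity bookkeeping with the two specific choices of test matrix.
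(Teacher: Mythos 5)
Your proposal is correct and follows essentially the same route as the paper: it lower-bounds $\lambda_{\max}(\CC^{\dagger}\DD)$ by the generalized Rayleigh quotient $\uu^\transpose\DD\uu/\uu^\transpose\CC\uu$ (justified via $\myNull{\CC}\subseteq\myNull{\DD}$ and the reduction to $(\CC^{1/2})^{\dagger}\DD(\CC^{1/2})^{\dagger}$, exactly as in the paper's Lemma~\ref{lemma:Null space lemma} and the derivation of \eqref{eq:StabilityConditionFirstMove}), and then evaluates it at the same two test vectors $\vv_{\max}\otimes\vv_{\max}$ and $\vectorization{\Identity}$, with the same variance identity yielding \eqref{eq:necessary condition lambda max} and the same trace/Frobenius computation yielding \eqref{eq:necessary condition Frobenius norm}.
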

From \eqref{eq:necessary condition lambda max}, we can deduce a lower bound on the gap between the stability thresholds of GD and SGD. Specifically, when the variance of $\HH_i$ along the direction of the top eigenvector of $ \HH $ is large, $\etaVar$ is far from $ \etaMean $ for moderate $p$. In general, this condition is expected to be quite tight when there is a clear dominant direction in $ \HH $ caused by some $\HH_i$. In contrast, condition \eqref{eq:necessary condition Frobenius norm} is expected to be tight if all $ \{ \HH_i \} $ have roughly the same spectrum but with different bases, \ie when no sample is dominant and the samples are incoherent.

It is worthwhile mentioning that if the stability condition of Thm.~\ref{thm:stability threshold expectation} is not met, then the linearized dynamics diverge. However, in practice, the full (non-linearized) dynamics can just move to a different point on the loss landscape, where the generalized sharpness $ \lambda_{\max}(\CC^{\dagger} \DD) $  is lower. It was shown that GD possesses such a stabilizing mechanism \citep{damian2023selfstabilization}. An interesting open question is whether a similar mechanism exists in SGD.

\subsection{Non-interpolating minima}\label{sec:Non-interpolating minima}
While for interpolating minima, we saw that $\params_t^\myPerp$ can converge to $\params^{*\myPerp}$, this is generally not the case for non-interpolating minima. In this section, we explore the dynamics of SGD at the vicinity of a broader class of minima. Specifically, we consider the following definition.
\begin{definition}[Regular minima]\label{ass:Regular minimum}~A twice differentiable minimum $ \params^* $ is said to be \emph{regular} if for each sample $i \in [n] $ the Hessian $\HH_i $ is PSD. 
\end{definition}
This definition encompasses a broader class of minima than Def.~\ref{ass:Interpolating minimum}, as it allows for arbitrary (nonzero) gradients $\grad_i$. Only the mean of the gradients has to vanish (as in any minimum). Intuitively speaking, although a regular minimum does not necessarily fit all the training points, it does not involve a major disagreement among them. This can be understood through the second-order Taylor expansion of the loss per sample, which can have descent directions in the parameter space, yet it can only decrease (on behalf of raising the loss to other samples) linearly with the parameters, and not quadratically.

Clearly, having gradients with nonzero components in the null space of the Hessian pushes the dynamics to diverge. Interestingly, for regular minima, the dynamics of SGD in the null space and in its orthogonal complement are separable. Thus, despite having a random walk in the null space, we can give a condition for stability within its orthogonal complement (see proof in App.~\ref{app:stability threshold expectation regular minima proof}).
\begin{theorem}[Mean square stability for regular minima]\label{thm:stability threshold expectation regular minima}
    Assume that $ \params^* $ is a twice differentiable regular minimum. Consider the linear dynamics of $ \{ \params_t \}  $ from Def.~\ref{def:Linearization}. Then
    \begin{enumerate}
        \item $ %\smash
        { \underset{t\to \infty}{\lim} \E\big[ \| \params_t^{\myPar} - \params^{*\myPar} \|^2 \big] = \infty \ } $ if and only if $ %\smash
        { \ \sum_{i=1}^n \| \grad_i^{\myPar} \|^2 >0  }$;
        \item If $ %\smash
        { \eta < \etaVar }$ then $ %\smash
        {\underset{t\to \infty}{\mathrm{limsup}}\, \E\big[ \| \params_t^\myPerp -\params^{*\myPerp}  \|^2 \big] } $ is finite;
        \item If $ \ %\smash
        { \underset{t\to \infty}{\mathrm{limsup}}\, \E\big[ \| \params_t^\myPerp -\params^{*\myPerp}  \|^2 \big] } $ is finite then $ %\smash
        { \eta \leq \etaVar }$.
    \end{enumerate}
\end{theorem}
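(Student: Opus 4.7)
The plan is to decouple the dynamics in $\myNull{\HH}$ from that in $\myNullOrtho{\HH}$, reducing the perpendicular component to the setting of Theorem~\ref{thm:stability threshold expectation}. The structural fact that enables this decoupling is that every $\HH_i$ is PSD and $\HH=\tfrac{1}{n}\sum_i\HH_i$, so $\myNull{\HH}=\bigcap_i\myNull{\HH_i}$; hence each $\HH_i$ annihilates $\myNull{\HH}$ and, by its symmetry, leaves $\myNullOrtho{\HH}$ invariant. Projecting the iteration of Def.~\ref{def:Linearization} onto these two subspaces therefore yields two \emph{decoupled} recursions: a driftless random walk $\params_{t+1}^{\myPar}-\params^{*\myPar}=(\params_t^{\myPar}-\params^{*\myPar})-\tfrac{\eta}{B}\sum_{i\in\batch_t}\grad_i^{\myPar}$ in the null space, and an inhomogeneous SGD-type process $\zz_{t+1}=\TransMat_t\zz_t-\bs_t$ in the perpendicular subspace, with $\zz_t\triangleq\params_t^{\myPerp}-\params^{*\myPerp}$, $\TransMat_t\triangleq\Identity-\tfrac{\eta}{B}\sum_{i\in\batch_t}\HH_i$, and $\bs_t\triangleq\tfrac{\eta}{B}\sum_{i\in\batch_t}\grad_i^{\myPerp}$.

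Item~1 follows directly from the null-space recursion: by \eqref{eq:gradEqZero} its increments are i.i.d.\ with mean zero, and a short computation using the without-replacement batch-covariance identity gives per-step covariance $\tfrac{\eta^2(n-B)}{B\,n(n-1)}\sum_i \grad_i^{\myPar}(\grad_i^{\myPar})^{\transpose}$, whose trace vanishes iff $\sum_i\|\grad_i^{\myPar}\|^2=0$. Hence $\E[\|\params_t^{\myPar}-\params^{*\myPar}\|^2]$ grows linearly in $t$ (and diverges) iff $\sum_i\|\grad_i^{\myPar}\|^2>0$, and is constant otherwise. For Items~2 and~3 I split $\zz_t=\mm_t+\tilde{\zz}_t$ with $\mm_t\triangleq\E[\zz_t]$; the mean satisfies $\mm_{t+1}=(\Identity-\eta\HH)\mm_t$ on $\myNullOrtho{\HH}$, and by independence of $\batch_t$ from $\tilde{\zz}_t$ the centered second moment $\tilde{\CovMat}_t\triangleq\E[\tilde{\zz}_t\tilde{\zz}_t^{\transpose}]$ satisfies the affine recursion
\begin{equation}
\vectorization{\tilde{\CovMat}_{t+1}}=\CovEvo\,\vectorization{\tilde{\CovMat}_t}+\vectorization{\E[\bXi_t\bXi_t^{\transpose}]},
\end{equation}
with the same operator $\CovEvo$ as in Theorem~\ref{thm:stability expectation general} and $\bXi_t\triangleq(\TransMat_t-\E[\TransMat_t])\mm_t-(\bs_t-\E[\bs_t])$ a zero-mean perturbation (independent of $\tilde{\zz}_t$).

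For Item~2, when $\eta<\etaVar\leq\etaMean$, the PSD-eigenvector analysis underlying Theorem~\ref{thm:stability threshold expectation} shows that $\CovEvo$ is a strict contraction on the PSD cone of $\myNullOrtho{\HH}$; moreover $\mm_t\to\zeroVec$ on that subspace, so $\E[\bXi_t\bXi_t^{\transpose}]$ is uniformly bounded, the geometric series converges, and $\E[\|\zz_t\|^2]=\|\mm_t\|^2+\Tr(\tilde{\CovMat}_t)$ has finite $\limsup$. For Item~3 I argue the contrapositive: assume $\eta>\etaVar$. If $\eta>\etaMean$ and $\zz_0$ has nonzero projection on the top eigenvector of $\HH$, then $\|\mm_t\|\to\infty$ and thus $\E[\|\zz_t\|^2]\geq\|\mm_t\|^2\to\infty$. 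Otherwise $\etaVar<\eta\leq\etaMean$, and Theorem~\ref{thm:stability threshold expectation} guarantees a PSD eigenmatrix of $\CovEvo$ with eigenvalue $\lambda>1$; since $\CovEvo$ preserves the PSD cone and $\E[\bXi_t\bXi_t^{\transpose}]$ is PSD, the iterates $\tilde{\CovMat}_t$ remain PSD and their component along this expanding eigenmatrix grows geometrically, forcing $\Tr(\tilde{\CovMat}_t)\to\infty$. The main technical obstacle is this last step: rigorously ensuring the PSD driving term is not orthogonal (in Frobenius inner product) to every expanding eigenmode of $\CovEvo$. The clean resolution is that the top eigenmatrix can be chosen strictly positive definite on $\myNullOrtho{\HH}$, so its Frobenius inner product with any nonzero PSD matrix is strictly positive; the driving term is nonzero whenever the perpendicular dynamics is genuinely stochastic, while the degenerate case in which it vanishes (all $\HH_i|_{\myNullOrtho{\HH}}$ equal and all $\grad_i^{\myPerp}=\zeroVec$) reduces to GD on $\myNullOrtho{\HH}$ with $\etaVar=\etaMean$, for which the statement is automatic.
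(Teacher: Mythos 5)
Your treatment of the null-space component (Item 1) and of the sufficiency direction (Item 2) is sound and essentially parallels the paper: the paper tracks the joint block-triangular system for $(\MeanVec_t^{\myPerp},\vectorization{\CovMat_t^{\myPerp}})$, while you center the process and get the affine recursion $\vectorization{\tilde{\CovMat}_{t+1}}=\CovEvo\vectorization{\tilde{\CovMat}_t}+\vectorization{\E[\bXi_t\bXi_t^{\transpose}]}$; since $\TransMat_t$ leaves $\myNullOrtho{\HH}$ invariant, the restriction of $\CovEvo$ to matrices supported there coincides with $(\PP_{\myNullOrtho{\HH}}\otimes\PP_{\myNullOrtho{\HH}})\CovEvo$, whose top eigenvalue is below $1$ exactly when $0<\eta<\etaVar$, so your contraction-plus-bounded-forcing argument goes through.

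The genuine gap is in Item 3. First, your resolution of the "excitation" problem rests on the claim that the top eigenmatrix of $\CovEvo$ can be chosen strictly positive definite on $\myNullOrtho{\HH}$; Theorem~\ref{thm:Symmetric Kronecker systems} only guarantees a PSD eigenmatrix, and strict definiteness fails in general (e.g., when the $\HH_i$ are simultaneously block-diagonal and one block dominates, the top eigenmatrix is supported on a proper subspace), so a nonzero PSD driving term can be Frobenius-orthogonal to every expanding eigenmode. Second, and more fundamentally, your argument makes divergence hinge on the stochastic driving term $\E[\bXi_t\bXi_t^{\transpose}]$, but this term can vanish identically in non-degenerate situations: if all $\grad_i^{\myPerp}=\zeroVec$ (interpolating-type behaviour in $\myNullOrtho{\HH}$) and the initialization has $\mm_0=\zeroVec$ (e.g., start at the minimum), then $\bXi_t\equiv\zeroVec$ and nothing diverges even though $\eta>\etaVar$; your characterization of the degenerate case ("all $\HH_i|_{\myNullOrtho{\HH}}$ equal \emph{and} all $\grad_i^{\myPerp}=\zeroVec$, hence $\etaVar=\etaMean$") is incorrect, since $\etaVar<\etaMean$ is typical with vanishing $\grad_i^{\myPerp}$ but differing $\HH_i$, and then the statement is not "automatic." The necessity direction must therefore be read (and proved) with a worst-case initialization: the paper takes $\MeanVec_0^{\myPerp}=\zeroVec$ and $\CovMat_0^{\myPerp}=\ZZ_{\max}=\mathrm{vec}^{-1}(\zz_{\max})$, the PSD top eigenmatrix of $(\PP_{\myNullOrtho{\HH}}\otimes\PP_{\myNullOrtho{\HH}})\CovEvo$, so that the homogeneous part alone yields $\zz_{\max}^{\transpose}\vectorization{\CovMat_t^{\myPerp}}\geq\lambda_{\max}^t$ while the forcing only adds nonnegative contributions along $\zz_{\max}$; boundedness then forces $\lambda_{\max}\leq1$, i.e., $\eta\leq\etaVar$, with no positivity assumption on the noise. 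Your proof needs this (or an equivalent adversarial-initialization device) to close Item 3; as written it does not establish the stated implication.
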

We see that $ \etaVar $ is the stability threshold also for regular minima. Recall that when $ \eta < \etaVar $, we also have stability of the first moment, and thus $ \E[\params_t^{\myPar}] = \E[\params_0^{\myPar}]$ for any $ t \geq 0$. Namely, SGD's dynamics in the null space is a random walk without drift. Note that moving in the null space does not increase the loss, however it might change the trained model. Furthermore, in the proof, we show that under a mild assumption, $ \underset{t\to \infty}{\mathrm{limsup}} \E[ \| \params_t^{\myPerp} -\params^{*\myPerp}  \|^2 ] $ is finite if and only if $ 0 \leq \eta < \etaVar $.

Next, we turn to compute the limit of the second moment of the dynamics (see proof in App.~\ref{app:covariance limit proposition proof}).
\begin{theorem}[Covariance limit]\label{thm:covariance limit proposition}
    Assume that $ \params^* $ is a twice differentiable regular minimum. Consider the linear dynamics of $ \{ \params_t \}  $ from Def.~\ref{def:Linearization}. If $ 0 < \eta < \etaVar $ then
    \begin{equation}
        \lim_{t \to \infty}  \vectorization{\CovMat_{t}^{\myPerp}} = \eta p \left(2\CC - \eta \DD \right)^{\dagger} \vectorization{\CovMat_{\grad}^{\myPerp}},
    \end{equation}
    where
    \begin{equation}
        % \smash[u]
        {\CovMat_{\grad}^{\myPerp} = \frac{1}{n}\sum_{i=1}^n \grad_i^{\myPerp}\left(\grad_i^{\myPerp}\right)^\transpose}.
    \end{equation}
\end{theorem}
Using this result we can obtain the mean squared distance to the minimum, the mean of the second-order Taylor expansion of the loss, and the mean of the squared norm of the expansion's gradient at large times (see proof in App.~\ref{app:covariance limit proof}).
\begin{corollary}\label{thm:covariance limit}
    Let $ \params^* $ be a twice differentiable regular minimum. Consider the second-order Taylor expansion of the loss, $\lossApprox$ of \eqref{eq:TaylorLtilde} and the linear dynamics of $ \{ \params_t \}  $ from Def.~\ref{def:Linearization}. If $ \eta < \etaVar $ then
    \begin{enumerate}
        \item $ { \underset{t\to \infty}{\lim} \E\big[ \| \params_t^\myPerp -\params^{*\myPerp} \|^2 \big] = \eta p (\vectorization{\Identity})^\transpose \big( 2 \CC -\eta \DD \big)^{\dagger} \vectorization{\CovMat_{\grad}^{\myPerp}} }$;
        \item $ { \underset{t\to \infty}{\lim} \E \big[\lossApprox(\params_t) \big] -\lossApprox(\params^*) = \frac{1}{2} \eta p (\vectorization{\HH})^\transpose \big( 2 \CC -\eta \DD \big)^{\dagger} \vectorization{\CovMat_{\grad}^{\myPerp}} } $;
        \item $ { \underset{t\to \infty}{\lim} \E\big[ \|\nabla \lossApprox (\params_t) \|^2 \big] =  \eta p \left(\vectorization{\HH^2}\right)^\transpose \big( 2 \CC -\eta \DD \big)^{\dagger} \vectorization{\CovMat_{\grad}^{\myPerp}} } $.
    \end{enumerate}
\end{corollary}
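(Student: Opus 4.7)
The plan is to reduce each of the three claims to Theorem~\ref{thm:final covariance limit} by expressing the relevant quantity as a trace against $\CovMat_t$, collapsing the trace onto the orthogonal complement of the null space of $\HH$, and then rewriting it as an inner product with $\vectorization{\CovMat_t^{\myPerp}}$ so that the limit is immediately read off.

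First I would convert the three expectations to trace form. Using the Taylor expansion~\eqref{eq:TaylorLtilde} and the identity $\nabla \lossApprox(\params) = \HH(\params - \params^*)$, one gets
\begin{align*}
\E\big[\|\params_t^\myPerp - \params^{*\myPerp}\|^2\big] &= \E\big[(\params_t-\params^*)^\transpose \PP^\myPerp (\params_t-\params^*)\big] = \Tr(\PP^\myPerp \CovMat_t), \\
\E\big[\lossApprox(\params_t)\big]-\lossApprox(\params^*) &= \tfrac{1}{2}\Tr(\HH \CovMat_t), \qquad
\E\big[\|\nabla \lossApprox(\params_t)\|^2\big] = \Tr(\HH^2 \CovMat_t),
\end{align*}
where $\PP^\myPerp$ denotes the orthogonal projector onto $\myRangeOrtho{\HH}^{\perp}$ (the complement of $\myNull{\HH}$).

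Next I would eliminate the null-space part of $\CovMat_t$. Since $\HH\PP^\myPar = 0$, we have $\HH = \PP^\myPerp \HH \PP^\myPerp$ and $\HH^2 = \PP^\myPerp \HH^2 \PP^\myPerp$, and by cyclicity of the trace,
\begin{equation*}
\Tr(\HH \CovMat_t) = \Tr\big(\HH\, \PP^\myPerp \CovMat_t \PP^\myPerp\big) = \Tr(\HH \CovMat_t^{\myPerp}),
\end{equation*}
with $\CovMat_t^{\myPerp} \triangleq \PP^\myPerp \CovMat_t \PP^\myPerp$, and the same reduction for $\HH^2$. For part~1, $\Tr(\PP^\myPerp \CovMat_t) = \Tr(\PP^\myPerp \CovMat_t \PP^\myPerp) = \Tr(\Identity \cdot \CovMat_t^{\myPerp})$, since $(\PP^\myPerp)^2 = \PP^\myPerp$. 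I would then use the standard identity $\Tr(AB) = \vectorization{A}^\transpose \vectorization{B}$ for symmetric $A$ to rewrite each trace as
\begin{equation*}
\Tr(\Identity \cdot \CovMat_t^{\myPerp}) = \vectorization{\Identity}^\transpose \vectorization{\CovMat_t^{\myPerp}}, \quad
\Tr(\HH \CovMat_t^{\myPerp}) = \vectorization{\HH}^\transpose \vectorization{\CovMat_t^{\myPerp}}, \quad
\Tr(\HH^2 \CovMat_t^{\myPerp}) = \vectorization{\HH^2}^\transpose \vectorization{\CovMat_t^{\myPerp}}.
\end{equation*}

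Finally I would invoke Theorem~\ref{thm:final covariance limit}, which gives $\lim_{t\to\infty} \vectorization{\CovMat_t^{\myPerp}} = \eta p (2\CC - \eta \DD)^{\dagger} \vectorization{\CovMat_\grad^{\myPerp}}$ whenever $0 < \eta < \etaVar$, and pass the limit through the (linear, finite-dimensional) inner product to obtain the three stated formulas. I do not anticipate a serious obstacle here; the only bookkeeping subtlety is keeping consistent the notational convention $\CovMat_t^{\myPerp} = \PP^\myPerp \CovMat_t \PP^\myPerp$ inherited from Theorem~\ref{thm:final covariance limit}, and confirming that although $\CovMat_t$ itself need not converge (its null-space block can undergo a random walk by Theorem~\ref{thm:stability threshold expectation regular minima}), the three traces above depend only on $\CovMat_t^{\myPerp}$ and therefore admit finite limits under $\eta < \etaVar$.
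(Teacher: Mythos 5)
Your proposal is correct and follows essentially the same route as the paper: reduce each quantity to a trace against $\CovMat_t^{\myPerp}$ using $\HH = \PP_{\myNullOrtho{\HH}}\HH\PP_{\myNullOrtho{\HH}}$ (and likewise for $\HH^2$ and the projector itself), rewrite the trace as $(\vectorization{\cdot})^\transpose \vectorization{\CovMat_t^{\myPerp}}$, and pass to the limit via Theorem~\ref{thm:final covariance limit}. The only cosmetic slip is your description of $\PP^\myPerp$ as the projector onto ``$\myRangeOrtho{\HH}^{\perp}$''; since $\HH$ is symmetric this coincides with $\myNullOrtho{\HH}$, which is what you intend and what the argument uses.
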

We see that these values depend linearly on the covariance matrix of the gradients. Specifically, if~$ \CovMat_{\grad} = \zeroVec  $ then we recover the results of interpolating minima. Moreover, note that for $ \eta \ll \etaVar $, we have that $ 2 \CC -\eta \DD \approx 2\CC$. Therefore, the main dependence on $\eta$ comes from the factor of $ \eta $ preceding these expressions. We thus get that when decreasing the learning rate, the loss level drops, and the parameters $ \params_t $ get closer to the minimum. This explains the empirical behavior observed when decreasing the learning rate in neural network training,  which causes the loss level to drop.

\subsection{Derivation of the stability threshold $ \etaVar $}\label{sec:Stability threshold proof outline}
In this section, we give a sketch of the derivation of the stability threshold $ \etaVar $ in the simple case of interpolating minima (\ie the second statement of Thm.~\ref{thm:stability threshold expectation}). For the full proof, please see App.~\ref{app:stability threshold expectation proof}. In interpolating minima, the gradient vanishes for each sample, \ie $\grad_i = \zeroVec$ for all $i \in [n]$. Therefore, from the linearized dynamics \eqref{eq:linearized dynamics} we get
\begin{equation}\label{eq:dynamics proof sketch}
    \params_{t+1} - \params^* =  \Big( \Identity - \smash{\frac{\eta}{B} \sum_{i \in  \batch_t }} \HH_i \Big) (\params_t - \params^* ) = \TransMat_t (\params_t - \params^* ),
\end{equation}
where $ \TransMat_t \triangleq \Identity - \frac{\eta}{B} \sum_{i \in  \batch_t } \HH_i $. Note that $\{\TransMat_t \}$ are i.i.d.\ and that $\params_{t}$ is constructed from $\TransMat_0,\ldots,\TransMat_{t-1}$, so that $\params_{t}$ and $\TransMat_t$ are statistically independent. Therefore, the covariance of the dynamics evolves as
\begin{align}
    \CovMat_{t+1} = \EE\left[ \left(\params_{t+1} - \params^*\right)\left(\params_{t+1} - \params^*\right)^\transpose \right]
    & = \EE\left[ \TransMat_t \left(\params_{t} - \params^*\right)\left(\params_{t} - \params^*\right)^\transpose \TransMat_t^\transpose \right] \nonumber\\
    & = \EE\left[ \TransMat_t \E\left[ \left(\params_{t} - \params^*\right)\left(\params_{t} - \params^*\right)^\transpose \middle| \TransMat_t  \right] \TransMat_t^\transpose \right]  \\
    & = \EE\left[ \TransMat_t \E\left[ \left(\params_{t} - \params^*\right)\left(\params_{t} - \params^*\right)^\transpose\right] \TransMat_t^\transpose \right]
    = \smash[b]{\EE\left[ \TransMat_t \CovMat_t \TransMat_t^\transpose \right]}, \nonumber
\end{align}
where in the first line we used \eqref{eq:dynamics proof sketch}, in the second we used the law of total expectation, and in the third we used the fact that $ 
\params_t $ is statistically independent of $ \TransMat_t $. Using vectorization we get
\begin{equation}\label{eq:covariance evolution sketch}
    \smash{\vectorization{\CovMat_{t+1}} = \EE\left[ \TransMat_t \otimes \TransMat_t \right] \vectorization{\CovMat_t}.}
\end{equation}
\citet{ma2021on} showed that $ \EE[ \TransMat_t \otimes \TransMat_t ] = \CovEvo $, where $ \CovEvo $ is given in \eqref{eq:Covariance evolution matrix}. Since $ \CovMat_t$ is PSD by definition, we only care about the effect of $\CovEvo$ on vectorizations of PSD matrices. Thus, $ \{ \CovMat_{t} \} $ are bounded if and only if (see proof in \citep{ma2021on})
\begin{equation}\label{eq:impicit SGD stability condition restated}
    \max_{\CovMat \in \PSDset{d}} \frac{  \norm{\CovEvo(\eta, B) \; \vectorization{\CovMat}}}{\normF{\CovMat}} \leq 1.
\end{equation}
This constrained optimization problem is hard to solve. Yet, if we ignore the constraint then the solution becomes simple -- it is the spectral radius of $ \CovEvo $ (since $ \CovEvo $ is symmetric). Surprisingly, it turns out that removing the constraint does not affect the solution because the matrix $ \CovMat \in \R^{d \times d} $ that maximizes the objective in \eqref{eq:impicit SGD stability condition restated} without constraints is guaranteed to be PSD, so that it also maximizes the objective under the constraint~$ \CovMat \in \PSDset{d} $. Proving this fundamental algebraic property is a main challenge and a key contribution of our work (see proof in App.~\ref{app:Symmetric Kronecker systems}).
\begin{theorem}[Symmetric Kronecker systems]\label{thm:Symmetric Kronecker systems}
    Let $ \smash{\{\myMat_i\}} $ be symmetric matrices in $ \smash{\R^{d\times d}}$. Define
    \begin{equation}\label{eq:Symmetric Kronecker systems}
        \CovEvo = \Big. \smash{\sum_{i=1}^M} \myMat_i\otimes \myMat_i, \Big.
    \end{equation}
    and let $ \smash{\zz_{\max}}$ be a top eigenvector of $ \smash{\CovEvo} $. Then
    \begin{enumerate}
        \setlength\itemsep{-0.2em}
        \item there always exists a complete set of eigenvectors $ \smash{\{  \zz_j \} }$ for $ \CovEvo $ such that each $ \smash{\ZZ_j = \mathrm{vec}^{-1}(\zz_j) }$ is either a symmetric or a skew-symmetric matrix;
        \item the top eigenvalue is a dominant eigenvalue\footnote{By ``top'' we refer to the largest eigenvalue, and by ``dominant'' we refer to the largest eigenvalues in absolute value.}, i.e., the spectral radius $ \smash{ \rho(\CovEvo) = \lambda_{\max} (\CovEvo) } $;
        \item there exists a top eigenvector corresponding to a PSD matrix, i.e., $ \smash{\mathrm{vec}^{-1}(\zz_{\max}) \in \PSDset{d} }$.
    \end{enumerate}
\end{theorem}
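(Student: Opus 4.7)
The plan is to recast $\CovEvo$ as a linear operator on the space of $d\times d$ matrices. Define $T:\R^{d\times d}\to\R^{d\times d}$ by $T(\XX)=\sum_{i=1}^M \myMat_i \XX \myMat_i$. Using the identity $(\myMat_i\otimes \myMat_i)\vectorization{\XX}=\vectorization{\myMat_i \XX \myMat_i^\transpose}$ together with $\myMat_i=\myMat_i^\transpose$, we get $\CovEvo\,\vectorization{\XX}=\vectorization{T(\XX)}$, so the eigen-analysis of $\CovEvo$ reduces to that of $T$. Since $T(\XX)^\transpose=T(\XX^\transpose)$, the operator $T$ leaves the subspaces of symmetric and skew-symmetric matrices invariant, and a cyclic-trace computation shows $T$ is self-adjoint with respect to the Frobenius inner product $\langle A,B\rangle=\Tr(A^\transpose B)$. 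These two invariant subspaces are Frobenius-orthogonal, so concatenating orthonormal eigenbases of $T|_{\mathrm{sym}}$ and $T|_{\mathrm{skew}}$ produces the desired basis and establishes statement~1.

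For statements~2 and~3 the idea is to prove the following ``modulus lemma'': for every $\ZZ\in\R^{d\times d}$ that is either symmetric or skew-symmetric, there exists $\widetilde\ZZ\in\PSDset{d}$ with $\normF{\widetilde\ZZ}=\normF{\ZZ}$ such that
\begin{equation*}
    \left|\langle \ZZ, T(\ZZ)\rangle\right|\;\le\;\langle \widetilde\ZZ, T(\widetilde\ZZ)\rangle.
\end{equation*}
For symmetric $\WW$ I would write $\WW=\WW_+-\WW_-$ with $\WW_+,\WW_-\succeq 0$ and $\WW_+\WW_-=\zeroVec$, and set $\widetilde\WW=\WW_++\WW_-$. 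Writing $\WW_\pm=\bS_\pm\bS_\pm^\transpose$, the cross term $\Tr(\WW_+\myMat_i\WW_-\myMat_i)$ equals $\normF{\bS_+^\transpose\myMat_i\bS_-}^2$ and the diagonal terms $\Tr(\WW_\pm\myMat_i\WW_\pm\myMat_i)$ equal $\normF{\bS_\pm^\transpose\myMat_i\bS_\pm}^2$; the bound then follows by comparing the expansions of $\langle \WW,T(\WW)\rangle$ and $\langle \widetilde\WW,T(\widetilde\WW)\rangle$, which differ only by a sign flip on the cross term. For skew $\XX$ I would use the real canonical form $\XX=\OO\bLambda\OO^\transpose$ with $\OO$ orthogonal and $\bLambda$ block-diagonal with $2\times 2$ blocks $\sigma_k J$, $J=\bigl(\begin{smallmatrix}0&1\\-1&0\end{smallmatrix}\bigr)$, and set $\widetilde\XX=\OO|\bLambda|\OO^\transpose$ where $|\bLambda|$ has diagonal blocks $|\sigma_k|\Identity_2$. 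Conjugating by $\OO$ and partitioning each $\OO^\transpose\myMat_i\OO$ into $2\times 2$ blocks $\{A_{k\ell}\}$ (with $A_{\ell k}=A_{k\ell}^\transpose$ by symmetry of $\myMat_i$), a short computation yields $\Tr\bigl((\sigma_kJ)A_{k\ell}(\sigma_\ell J)A_{\ell k}\bigr)=-2\sigma_k\sigma_\ell\det(A_{k\ell})$, whereas $\Tr\bigl((|\sigma_k|\Identity_2)A_{k\ell}(|\sigma_\ell|\Identity_2)A_{\ell k}\bigr)=|\sigma_k||\sigma_\ell|\normF{A_{k\ell}}^2$. The per-block inequality $2|\det A|\le\normF{A}^2$ for any $2\times 2$ real $A$ (immediate from AM--GM on $|ad|+|bc|$) then closes the argument after summing over $(k,\ell)$ and over $i$.

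Given the modulus lemma, statements~2 and~3 follow quickly. Let $\lambda^*=\max\{\langle \WW,T(\WW)\rangle:\WW\in\PSDset{d},\normF{\WW}=1\}$; this maximum is attained because the constraint set is compact and the objective is continuous. Since $T(\bS\bS^\transpose)=\sum_i(\myMat_i\bS)(\myMat_i\bS)^\transpose\succeq\zeroVec$, the operator $T$ preserves $\PSDset{d}$; combined with self-adjointness and the modulus lemma (which forces the Rayleigh quotient of $T|_{\mathrm{sym}}$ over $\PSDset{d}$ to coincide with that over all symmetric matrices), this implies that every PSD maximizer of the Rayleigh quotient is a genuine eigenvector of $T|_{\mathrm{sym}}$ with eigenvalue $\lambda^*$. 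On the other hand, applying the modulus lemma to any unit-norm (sym or skew) eigenvector of $\CovEvo$ gives $|\lambda|\le\lambda^*$, hence $\rho(\CovEvo)\le\lambda^*$. The chain $\lambda^*\le\lambda_{\max}(\CovEvo)\le\rho(\CovEvo)\le\lambda^*$ collapses to equality, establishing $\lambda_{\max}(\CovEvo)=\rho(\CovEvo)$ (statement~2) together with the existence of a PSD top eigenvector (statement~3).

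The main obstacle is the skew-symmetric case of the modulus lemma. The symmetric case decomposes transparently into a sum of manifest squared-Frobenius-norm terms, but for skew $\XX$ the sample-wise trace $\Tr(\XX\myMat_i\XX\myMat_i)$ need not be a sum of squares and can carry either sign, so no direct SOS argument applies. The reduction to $2\times 2$ skew blocks is the essential trick: it collapses the entire question to the tight determinant-vs.-Frobenius inequality $2|\det A|\le\normF{A}^2$, whose extremal case $A=J$ is precisely what allows a skew eigenvalue to match, but never exceed, the symmetric top eigenvalue.
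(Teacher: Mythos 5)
Your proposal is correct, and while it follows the same overall skeleton as the paper's proof (split into symmetric/skew-symmetric eigenvectors, show skew quadratic forms are dominated by symmetric ones via a $2\times 2$ block-determinant bound, then show the symmetric maximum is attained on a PSD matrix by taking absolute values of the spectrum), the technical execution is genuinely different in each step. For statement 1 you argue via the operator $T(\XX)=\sum_i\myMat_i\XX\myMat_i$ being self-adjoint with the symmetric and skew-symmetric subspaces invariant and orthogonal, which is a cleaner packaging of the paper's per-eigenvector argument that $\mathrm{vec}(\ZZ^\transpose)$ shares the eigenvalue of $\mathrm{vec}(\ZZ)$. For statements 2--3 the paper works coordinate-wise in a canonical basis: it expresses the symmetric form as $\bs_{\text{sym}}^\transpose\sum_i\MM_i^{\odot 2}\bs_{\text{sym}}$ and the skew form through block determinants $\TT_i$, bounds $|\TT_{i[\ell,p]}|$ by entries of $\PP\MM_i^{\odot 2}\PP^\transpose$, invokes Cauchy interlacing to compare with $\lambda_{\max}(\sum_i\MM_i^{\odot 2})$, and finally flips signs of the spectrum of a dominant symmetric eigenvector using nonnegativity of the Hadamard-squared entries. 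Your ``modulus lemma'' replaces all of this with a direct comparison matrix of the same Frobenius norm: $\widetilde\WW=\WW_++\WW_-$ for symmetric $\WW$, with the cross term $\Tr(\WW_+\myMat_i\WW_-\myMat_i)=\normF{\bS_+^\transpose\myMat_i\bS_-}^2\ge 0$ doing the work (a trace-positivity argument in place of the paper's entrywise one), and $\widetilde\XX=\OO|\bLambda|\OO^\transpose$ for skew $\XX$, which avoids the compression matrix $\PP$ and interlacing entirely; the shared kernel of both proofs is the same elementary inequality $2|\det A|\le\normF{A}^2$. Your wrap-up via compactness of the unit-norm PSD set and the Rayleigh-quotient characterization of $\lambda_{\max}(T|_{\mathrm{sym}})$ is a valid substitute for the paper's explicit eigenvector manipulation, and the chain $\lambda^*\le\lambda_{\max}(\CovEvo)\le\rho(\CovEvo)\le\lambda^*$ correctly delivers both statements at once (the overall sign ambiguity in your skew expansion, coming from $\XX^\transpose=-\XX$, is immaterial since you only use absolute values). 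What your route buys is a shorter, more conceptual argument without Hadamard powers or the Poincar\'e separation theorem; what the paper's buys is an explicit canonical-form computation of the quadratic forms (equations for $\bs_{\text{sym}}$ and $\bs_{\text{skew}}$) that it reuses later, e.g.\ in the monotonicity proof of Proposition 1.
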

Taking $ \{\myMat_i\} $ to be the $ \binom{n}{B} $ realizations that $\TransMat_t$ can take, we obtain that $ \CovEvo = \EE[ \TransMat_t \otimes \TransMat_t ] $  is of the form~\eqref{eq:Symmetric Kronecker systems}. Note that the realizations of $\TransMat_t$ are symmetric, so that Thm.~\ref{thm:Symmetric Kronecker systems} applies. Therefore, the matrix that attains the maximum in the optimization problem \eqref{eq:impicit SGD stability condition restated} without the constraint, whose vectorization is generally a dominant eigenvector of $\CovEvo$, is guaranteed to be a PSD matrix. Furthermore, the corresponding objective value is $ \lambda_{\max}(\CovEvo)$. This implies that the linear system in \eqref{eq:covariance evolution sketch} is stable if and only if $ \lambda_{\max}(\CovEvo) \leq 1 $. Since~$ \CovEvo $ is symmetric, $ \lambda_{\max}(\CovEvo) \leq 1 $ is equivalent to $ \uu^\transpose  \CovEvo \uu \leq 1  $ for all $ \uu \in \Sb^{d^2-1} $. It is easy to show that~$ \CovEvo = \Identity - 2\eta \CC + \eta^2 \DD $ (see \eqref{eq:Q in terms of C and D}). In App.~\ref{app:stability threshold expectation proof} we prove that 	$ \smash{\uu^\transpose  \CovEvo \uu 	 = 1 - 2\eta \uu^\transpose\CC\uu + \eta^2 \uu^\transpose\DD\uu \leq 1} $ holds for all $ \uu \in \Sb^{d^2-1} $ if and only if 
\begin{equation}
    \eta \leq {\frac{2}{\lambda_{\max}\left(\CC^{\dagger} \DD \right)}} = \etaVar.
\end{equation}

\section{Experiments}\label{sec:Experiments}

In this section, we experimentally validate our theoretical results in a setting with nonlinear dynamics. We trained single hidden-layer ReLU networks with varying step sizes and batch sizes on a subset of the MNIST dataset (see App.~\ref{App:Additional experimental results}). Since training with cross-entropy in overparametrized networks results in infima rather than minima, we used the quadratic loss. Specifically, each class was labeled with a one-hot vector, and the network was trained to predict the label without softmax. Our primary goal in this experiment is to test the stability threshold of SGD; hence, we initialized the training with large weights to ensure that the minimum closest to the starting point is unstable (large weights imply large Hessians, and are thus more likely to violate the stability criterion). We used the same initial point for all the training runs to eliminate initialization effects. To avoid divergence, we started with a very small step size and gradually increased it until it reached its designated value (\ie learning rate warm-up). Together, large initialization and warm-up force SGD out of the unstable region until it finds a stable minimum and converges as closely as possible to the stability threshold. Convergence was determined when the loss remained below~$10^{-6}$ for $ 200 $ consecutive epochs.

\begin{figure}[t]%
    \subfigure[Sharpness vs. step size][b]{\label{subfig:Sharpness vs. step size}%
    \includegraphics[width=0.5\linewidth]{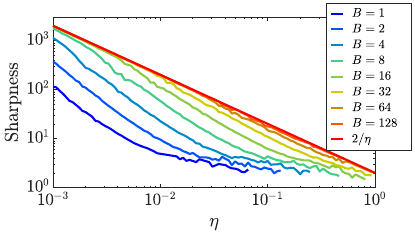}}%
    \subfigure[Sharpness vs. batch size][b]{\label{subfig:Sharpness vs. batch size}%
    \includegraphics[width=0.5\linewidth]{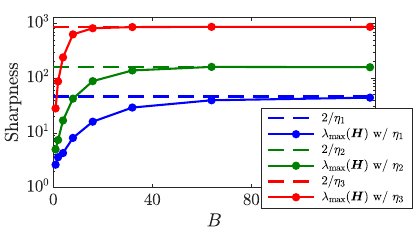}}%
    \caption{\textbf{Sharpness vs. step size and batch size.} We trained single hidden-layer ReLU networks using varying step sizes and batch sizes on a subset of MNIST. Panel~\protect\subfigref{subfig:Sharpness vs. step size} visualizes the sharpness of the converged minima versus learning rate for different batch sizes. For small batch sizes, $\lambda_{\max}(\HH)$ deviates significantly from $2/\eta$. Yet as the batch size increases to a moderate value, these curves coincide, indicating that in terms of stability, SGD behaves similarly to GD. Panel~\protect\subfigref{subfig:Sharpness vs. batch size} plots the sharpness against the batch size for three different learning rates $ \eta_1 = 0.043, \eta_2 = 0.012, \eta_3 = 0.002$. Here we see a similar trend where SGD behaves like GD for~$ B \geq 32 $.}
    \label{Fig:B}%
    \vspace{-12pt}
\end{figure}

Figure~\ref{subfig:Sharpness vs. step size} visualizes the sharpness of the converged minima versus the learning rate for several values of $ B $. Here we observe that for small batch sizes, $ \lambda_{\max}(\HH) $ is far from $2/\eta$. Yet for moderate batch sizes and above (\eg $ B \geq 32 $), these curves virtually coincide, indicating that, in the context of stability, SGD behaves like GD. Figure~\ref{subfig:Sharpness vs. batch size} shows the sharpness versus the batch size for three step sizes. Here the stability threshold of SGD rapidly converges to that of GD as the batch size increases.

Apart for the sharpness $ \lambda_{\max}( \HH ) $, we also want to compare the generalized sharpness $ \lambda_{\max}(\CC^{\dagger} \DD ) $ to $2/\eta$. Since computing the generalized sharpness is impractical in this task, we underestimate it via a lower bound, which results in a tighter necessary condition than \eqref{eq:necessary condition lambda max}. The bound corresponds to restricting the optimization problem in \eqref{eq:impicit SGD stability condition} to rank one PSD matrices, and is given by (see App.~\ref{sec: evaluating v otimes v on Q})
\begin{equation}\label{eq:optimized bound}
    \frac{2}{\etaVar} = \lambda_{\max}\left( \CC^{\dagger} \DD \right) 
    \geq \max_{\vv : \|\vv\| = 1 } \left\{\vv^\transpose \HH \vv+p \frac{\frac{1}{n} \sum_{i = 1}^n ( \vv^\transpose \HH_i \vv - \vv^\transpose \HH \vv)^2 }{ \vv^\transpose \HH \vv } \right\}.
\end{equation}
We solve this optimization problem numerically, by using GD on the unit sphere with predetermined scheduled geodesic step size. In the following, we present graphs of the sharpness $\lambda_{\max}(\HH)$ at the minima to which we converged, as well as the bounds \eqref{eq:optimized bound} and \eqref{eq:necessary condition lambda max} on the generalized sharpness~$\lambda_{\max}(\CC^{\dagger}\DD)$. Using the color coding of Fig.~\ref{Fig:A}, these correspond to
\begin{align}\label{eq:relationship}
    {\color{myRed}  \frac{2}{\eta}}
    & \geq 
    \lambda_{\max}\left( \CC^{\dagger} \DD \right) \quad\left( = \frac{2}{\etaVar} \right)\nonumber \\
    & \geq
    {\color{myPurple}\max_{\vv : \|\vv\| = 1 }\left\{\vv^\transpose \HH \vv+p \frac{\frac{1}{n} \sum_{i = 1}^n ( \vv^\transpose \HH_i \vv - \vv^\transpose \HH \vv)^2 }{ \vv^\transpose \HH \vv } \right\}} \nonumber\\
    & \geq
    {\color{myBlue} \lambda_{\max}(\HH)+ p \frac{\frac{1}{n}\sum_{i = 1}^n ( \vv_{\max}^\transpose \HH_i \vv_{\max} - \lambda_{\max}(\HH))^2}{\lambda_{\max}(\HH)} }\nonumber \\
    & \geq \smash{\color{myYellow}\lambda_{\max}(\HH)},
\end{align}
where $ \vv_{\max} $ denotes the top eigenvector of $ \HH $.

\begin{figure}[t]%
    \centering%
    \rotatebox[origin=l]{90}{\hspace{1.1cm}\fontsize{8}{10}\selectfont (Generalized) Sharpness}%
    \subfigure[$ B = 1 $][b]{%
    \includegraphics[trim=0 0.01in 0 0.1in, clip, width=1.8in]{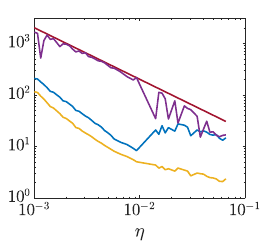}}%
    \subfigure[$ B = 2 $][b]{%
    \includegraphics[trim=0 0.01in 0 0.1in, clip, width=1.8in]{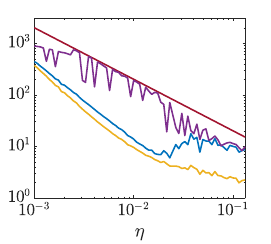}}%
    \subfigure[$ B = 4 $][b]{%
    \includegraphics[trim=0 0.01in 0 0.1in, clip, width=1.8in]{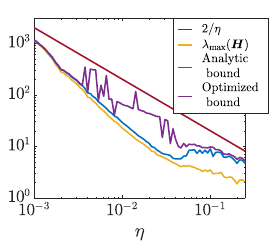}}\\%
    \rotatebox[origin=l]{90}{\hspace{1.1cm}\fontsize{8}{10}\selectfont (Generalized) Sharpness}%
    \subfigure[$ B = 8 $][b]{%
    \includegraphics[trim=0 0.01in 0 0.1in, clip, width=1.8in]{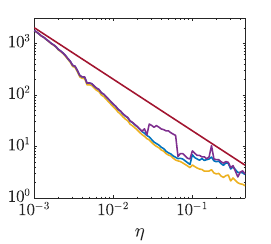}}%
    \subfigure[$ B = 16 $][b]{%
    \includegraphics[trim=0 0.01in 0 0.1in, clip, width=1.8in]{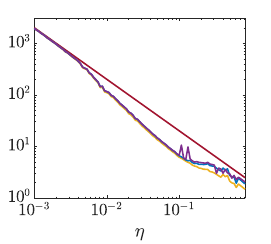}}%
    \subfigure[$ B = 32 $][b]{%
    \includegraphics[trim=0 0.01in 0 0.1in, clip, width=1.8in]{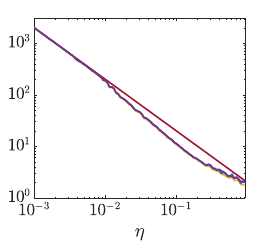}}
    \caption{\textbf{(Generalized) Sharpness vs. step size.} We trained single hidden-layer ReLU networks using varying step sizes and batch sizes on MNIST dataset. For each pair of hyper-parameters $ (\eta, B) $, we measured the sharpness of the minimum (yellow), our necessary condition for stability (blue), and the optimized bound (purple), which their relations are given in \eqref{eq:relationship}. We see that for small batch sizes $ B = 1 $ and $ B = 2 $, the optimized bound \eqref{eq:optimized bound} coincides with $ 2/\eta $, confirming that SGD converged at the edge of stability ($\eta = \etaVar$). For additional insights and detail, see  Sec.~\ref{sec:Experiments}.}%
    \label{Fig:A}%
\end{figure}

Figure~\ref{Fig:A} depicts the expressions in \eqref{eq:relationship} versus the step size for six batch sizes. We see that for~$ B~=~1 $ and $B=2$, the gap between $ 2/\eta $ (red) and the optimized bound \eqref{eq:optimized bound} (purple) upon convergence is small. Particularly, they coincide over a wide range of step sizes $\eta$. Since the generalized sharpness $ {\lambda_{\max}( \CC^{\dagger} \DD ) }$ must reside between those two curves, we can deduce two things: (a)~Our theory correctly predicts the stability threshold, while SGD converged at the edge of stability (as designed in our experiment); (b)~For small batches, the second order moment matrix that maximizes \eqref{eq:impicit SGD stability condition} is rank one. As the batch size increases, the two curves draw apart, indicating that the rank of the dominant second-order moment matrix becomes larger. Furthermore, the gap between our simple necessary condition \eqref{eq:necessary condition lambda max} (blue) and the trivial bound of $ 2/\lambda_{\max}(\HH) $ (yellow) is large for high learning rates and small for small step sizes. This gap represents the variance of the widths of the minima of the per-sample losses (corresponding to the widths of the quadratic functions $ \{  (\params-\params^* )^\transpose \HH_i (\params - \params^*) \} $) in the direction of $ \vv_{\max} $, the top eigenvector of~$\HH$. Thus we find that for small learning rates, this variance is small and the model is aligned in this direction, and for large learning rates, this variance is high.

A comment is in place regarding the fluctuation of the optimized bound. As described above, the value of this bound is obtained by an optimization problem which we solved using GD for each pair of step size and batch size. It may be that we have not found the global optimum for every step size, and got stuck at local maxima for some set of hyperparameters. This can explain why the curve falls down and then comes up again at some of the learning rates. Additionally, as we mentioned, the optimized bound is equivalent to restricting the optimization problem from \eqref{eq:impicit SGD stability condition} to rank one symmetric matrices. It is possible that for some minima of the loss, the optimal matrix of \eqref{eq:impicit SGD stability condition} is rank one, and therefore the bound is tight, while for others the rank is higher and thus the bound is not tight. Further study of SGD is needed to determine the cause of this behavior, which we leave for future work. For more details and experimental results, please see App.~\ref{App:Additional experimental results}.

\section{Related work}\label{sec:RelatedWork}

% theoretical
The stability of SGD in the vicinity of minima has been previously studied in multiple works. On the theoretical side, \citet{wu2018sgd} examined stability in the mean square sense and gave an implicit sufficient condition. \citet{granziol2022learning} used random matrix theory to find the maximal stable learning rate as a function of the batch size. Their work assumes some conditions on the Hessian's noise caused by batching, and the result holds in the limit of an infinite number of samples and batch size. \citet{velikanov2023a} examined SGD with momentum and derived a bound on the maximal learning rate. Their derivation uses ``spectrally expressible'' approximations and the result is given implicitly through a moment-generating function. \citet{ma2021on} studied the dynamics of higher moments of SGD and gave an implicit necessary and sufficient condition for stability (see Thm.~\ref{thm:stability expectation general} and the discussion following it). \citet{wu2022alignment} gave a necessary condition for stability via the alignment property. However, the result assumes and uses a lower bound on a property they coin ``alignment'' but an analytic bound for this alignment property is lacking for the general case. \citet{ziyin2023probabilistic} studied the stability of SGD in probability, rather than in mean square. Since convergence in probability is a weaker requirement, theoretically, SGD can converge with high probability to minima which are unstable in the mean square sense. Indeed, their theory predicts that SGD can converge far beyond GD's threshold. Yet this did not happen in extensive experiments done in \citet[App.~G]{cohen2021gradient} and \citet{gilmer2022a}. Finally, \citet{mulayoffNeurips} analyzed the stability in non-differentiable minima, and gave a necessary condition for a minimum to be ``strongly stable'', \ie such that SGD does not escape a ball with a given radius from the minimum.

% related stuff
\citet{pmlr-v139-liu21ad} studied the covariance matrix of the stationary distribution of the iterates in the vicinity of minima. \citet{ziyin2022strength} improved their results while deriving an implicit equation that relates this covariance to the covariance of the gradient noise. However, these works do not discuss the conditions under which the dynamics converge to the stationary state. \citet{lee2023a} studied the stability of SGD along its trajectory and gave an explicit exact condition. Yet their result does not apply to minima, since the denominator in their condition vanishes at minima.

% empirical
On the empirical side, \citet{cohen2021gradient} examined the behavior of GD, and showed that it typically converges at the edge of stability. Additionally, for SGD (see their App.~G) they found that with large batches, the sharpness behaves similarly to full-batch gradient descent. Moreover, they found that the smaller the batch size, the lower the sharpness at the converged minimum. \citet{gilmer2022a} studied how the curvature of the loss affects the training dynamics in multiple settings. They observed that SGD \emph{with momentum} is stable only when the optimization trajectory primarily resides in a region of parameter space where $ \eta \lesssim 2/\lambda_{\max}(\HH) $. Further experimental results in \citet{Jastrzebski2020The, jastrzebski2018on} show that the sharpness along the trajectory of SGD is implicitly regularized.

\section{Conclusion}\label{sec:Conclusion}
We presented an explicit threshold on SGD's step size, which is both necessary and sufficient for guaranteeing mean-square stability. We showed that this threshold is a monotonically non-decreasing function of the batch size, which implies that decreasing the batch size can only make the process less stable. Additionally, we interpreted the role of the batch size~$B$ through an equivalent process that takes in each iteration either a full batch gradient step or a single sample gradient step. Our interpretation highlights that even with moderate batch sizes, SGD's stability threshold is very close to that of GD. We also proved simpler necessary conditions for stability, which depend on the batch size, and are easier to compute. Finally, we verified our theory through experiments on MNIST.

% Acknowledgments---Will not appear in anonymized version
\acks{The research of Rotem Mulayoff was partially supported by the Planning and Budgeting Committee of the Israeli Council for Higher Education, and by the Andrew and Erna Finci Viterbi Graduate Fellowship. The research of Tomer Michaeli was partially supported by the Israel Science Foundation (grant no. 2318/22) and by the Ollendorff Minerva Center, ECE faculty, Technion.}

% Bibliography
\bibliography{StableMinimaRef}

\clearpage
\appendix
\section{Notations and the Kronecker product}\label{app:notations and Kronecker}
Throughout our derivations, we use the following notations.
{\renewcommand{\arraystretch}{1}
\begin{table}[h]
	\begin{tabular}{c l}
		$ a $ 								& Lower case non-bold letters for scalars \\ 
		$ \boldsymbol{a} $ 					& Lower case bold letters for vectors \\ 
		$ \boldsymbol{A} $ 					& Upper case bold for matrices \\
  	$ \boldsymbol{a}_{[p]}, \ \boldsymbol{A}_{[\ell,p]} $ & $p$'th element of $\boldsymbol{a}$, $(\ell,p)$ element of $ \boldsymbol{A} $  \\
		$ \boldsymbol{A}^{\transpose} $ 	& Transpose of $ \boldsymbol{A} $ \\
		$ \boldsymbol{A}^{\dagger} $ 		& Moore–Penrose inverse of $ \boldsymbol{A} $ \\
		$ \PP_{\mathcal{V}} $ 				& Orthogonal projection matrix onto the subspace $ \mathcal{V} $ \\
		$ \myNull{\boldsymbol{A}} $ 		& Null space of $ \boldsymbol{A} $ \\
		$ \myNullOrtho{\boldsymbol{A}} $ 	& Orthogonal complement of the null space of $ \boldsymbol{A} $ \\
		$ \myRange{\boldsymbol{A}} $ 		& Range of $ \boldsymbol{A} $ \\
		$ \myRangeOrtho{\boldsymbol{A}} $ 	& Orthogonal complement of the range of $ \boldsymbol{A} $ \\
		$ \otimes $ 						& Kronecker product \\
		$ \oplus $ 							& Kronecker sum \\
        $ ^{ \odot k } $                    & $k$'th Hadamard power \\
		$ \E $ 								& Expectation \\
		$ \prob $ 							& Probability \\
		$ \norm{\boldsymbol{a}} $ 			& Euclidean norm of $\boldsymbol{a} $ \\
		$ \norm{\boldsymbol{A}} $ 			& Top singular value of $\boldsymbol{A} $ \\
		$ \normF{\boldsymbol{A}} $ 			& Frobenius norm of $\boldsymbol{A} $ \\
		$ \rho(\boldsymbol{A}) $ 			& Spectral radius of $\boldsymbol{A} $ \\
		$ \mathrm{vec}(\boldsymbol{A}) $ 	& Vectorization of $\boldsymbol{A} $ (column stack) \\
		$ \mathrm{vec}^{-1}(\boldsymbol{a})$& Reshaping $\boldsymbol{a} $ back to $ d \times d $ matrix \\
		$ \loss $ 							& Loss function \\
		$ \params $ 						& Parameters vector of the loss \\
		$ \params^{*} $ 					& Minimum point of the loss \\
		$ d $ 								& Dimension of $ \params $ \\
		$ n $ 								& Number of training samples  \\
		$ \eta $ 							& Step size \\
		$ B $ 								& Batch size \\
		$ p $ 								& Defined to be $(n-B)/\big(B(n-1)\big) $ \\
        $\Identity_d $                      & The $d \times d $ identity matrix (when the dimensions are clear, the subscript is omitted) \\
		$ \CovMat $ 						& Second moment matrix \\
		$ \HH $ 							& Hessian of the full loss at $ \params^{*} $ \\
		$ \HH_i $ 							& Hessian of the loss of the sample $ i $ at $ \params^{*} $ \\
		$ \grad_i $ 						& Gradient of the loss of the sample $ i $ at $ \params^{*} $ \\
		$ \boldsymbol{a}^{\myPar} $ 		& Projection of $ \boldsymbol{a} $ onto the null space of $ \HH $ \\ 
		$ \boldsymbol{a}^{\myPerp} $ 		& Projection of $ \boldsymbol{a} $ onto the orthogonal complement of the null space of $ \HH $ \\
		$ \PSDset{d} $ 						& The set of all positive semi-definite (PSD) matrices over $ \R^{d \times d} $ \\
		$ \Sb^{d-1} $						& Unit sphere in $ \R^{d} $
	\end{tabular}
	\caption{Table of notations}
\end{table}
}

{\noindent Further notations that we use are given below.}
\begin{align}\label{eq:CovDef}
    \MeanVec_t \triangleq \EE\left[\params_{t} - \params^*\right], \qquad & \qquad \CovMat_t \triangleq \EE \left[\left(\params_{t} - \params^*\right)\left(\params_{t} - \params^*\right)^\transpose\right],\nonumber\\
    \MeanVec_t^{\myPerp} \triangleq \EE\left[\params_{t}^{\myPerp} - \params^{*\myPerp} \right], \qquad & \qquad \CovMat_t^{\myPerp} \triangleq \EE \left[\left(\params_{t}^{\myPerp} - \params^{*\myPerp}\right)\left(\params_{t}^{\myPerp} - \params^{*\myPerp}\right)^\transpose\right],\nonumber\\
    \MeanVec_t^{\myPar} \triangleq \EE\left[\params_{t}^{\myPar} - \params^{*\myPar}\right], \qquad & \qquad \CovMat_t^{\myPar} \triangleq \EE \left[\left(\params_{t}^{\myPar} - \params^{*\myPar}\right)\left(\params_{t}^{\myPar} - \params^{*\myPar}\right)^\transpose\right].
\end{align}
Additionally, we make extensive use of the following properties of the Kronecker product throughout the derivations. For any matrices $\MM_1,\MM_2,\MM_3,\MM_4$,
\begin{align*}
    \vectorization{\MM_1 \MM_2 \MM_3}  & = \big( \MM_3^\transpose \otimes \MM_1 \big) \vectorization{ \MM_2} \tag{P1}, \label{eq:KroneckerProperty1} \\
    \big( \MM_1 \otimes \MM_2 \big)^\transpose  & =  \MM_1^\transpose \otimes \MM_2^\transpose \tag{P2} \label{eq:KroneckerProperty2}, \\
    \big( \MM_1 \otimes \MM_2 \big) \big( \MM_3 \otimes \MM_4 \big) & = \big( \MM_1 \MM_3 \big) \otimes \big( \MM_2 \MM_4 \big) \tag{P3}, \label{eq:KroneckerProperty3} \\
    \left[\vectorization{\MM_1}\right]^\transpose (\MM_2 \otimes \MM_3) \vectorization{\MM_4}
    & = \Tr\left(\MM_1^\transpose \MM_3 \MM_4 \MM_2^\transpose \right) \tag{P4} .  \label{eq:quadratic form over kronecker}
\end{align*}
Finally, we give here the definition of Kronecker sum. If $\MM_1$ is $ d_1 \times d_1 $, $ \MM_2 $ is $d_2 \times d_2$ and $\Identity_d $ denotes the $d \times d $ identity matrix then
\begin{equation} \label{eq:Kronecker sum}
    \MM_1 \oplus \MM_2 = \MM_1 \otimes \Identity_{d_2} + \Identity_{d_1} \otimes \MM_2.
\end{equation}

\section{Stability of the first and second moments}\label{app:stability conditions proof}
Using our notation (see App.~\ref{app:notations and Kronecker}), for all $ \vv \in \R^d $  we have $ \vv^{\myPerp} = \PP_{\myNullOrtho{\HH}} \vv $ and $ \vv^{\myPar} = \PP_{\myNull{\HH}} \vv $ . Since $ \HH $ is symmetric,
\begin{equation}\label{eq:projection and Hessian}
    \PP_{\myNull{\HH}}\HH = \HH \PP_{\myNull{\HH}} = \zeroVec, \qquad \text{and} \qquad \PP_{\myNullOrtho{\HH}}\HH = \HH \PP_{\myNullOrtho{\HH}} = \HH.
\end{equation}
If $ \HH_i \in \PSDset{d} $ for all $ i \in [n] $, then the null space of $\HH$ is contained in the null space of each $\HH_i$, and therefore we also have that
\begin{equation}\label{eq:projection and PSD Hessians}
    \PP_{\myNull{\HH}}\HH_i = \HH_i \PP_{\myNull{\HH}} = \zeroVec, \qquad \text{and} \qquad \PP_{\myNullOrtho{\HH}}\HH_i = \HH_i \PP_{\myNullOrtho{\HH}} = \HH_i.
\end{equation}

\subsection{Linearized dynamics}
The linearized dynamics near $ \params^* $ is
\begin{equation}
    \params_{t+1} = \params_{t} - \frac{\eta}{B} \sum_{i \in  \batch_t } \HH_i (\params_t - \params^* ) -\frac{\eta}{B} \sum_{i \in  \batch_t } \grad_i.
\end{equation}
Therefore,
\begin{align}\label{eq:Linearized full dynamics}
    \params_{t+1} - \params^*
    & = \params_t -\params^* - \frac{\eta}{B} \sum_{i \in  \batch_t } \HH_i (\params_t - \params^* ) -\frac{\eta}{B} \sum_{i \in  \batch_t } \grad_i \nonumber \\
    & = \left( \Identity - \frac{\eta}{B} \sum_{i \in  \batch_t } \HH_i \right) (\params_t - \params^* ) -\frac{\eta}{B} \sum_{i \in  \batch_t } \grad_i.
\end{align}
Here we assume that the batches are chosen uniformly at random, independently across iterations.

\paragraph{Linearized dynamics in the orthogonal complement.}
Under the assumption that $ \HH_i \in \PSDset{d} $ for all $ i \in [n] $, the linearized dynamics in the orthogonal complement is given by
\begin{align}\label{eq:Linearized dynamics over the orthogonal complement}
    \params^{\myPerp}_{t+1} - \params^{*\myPerp}
    & = \PP_{\myNullOrtho{\HH}} \left(\params_{t+1} - \params^* \right) \nonumber \\
    & = \PP_{\myNullOrtho{\HH}} \left( \Identity - \frac{\eta}{B} \sum_{i \in  \batch_t } \HH_i \right) (\params_t - \params^* ) -\frac{\eta}{B} \sum_{i \in  \batch_t } \PP_{\myNullOrtho{\HH}} \grad_i \nonumber \\
    & =  \left( \PP_{\myNullOrtho{\HH}} - \frac{\eta}{B} \sum_{i \in  \batch_t } \PP_{\myNullOrtho{\HH}} \HH_i \right) (\params_t - \params^* ) -\frac{\eta}{B} \sum_{i \in  \batch_t } \grad_i^{\myPerp} \nonumber \\
    & =  \left( \PP_{\myNullOrtho{\HH}} - \frac{\eta}{B} \sum_{i \in  \batch_t } \HH_i \PP_{\myNullOrtho{\HH}} \right) (\params_t - \params^* ) -\frac{\eta}{B} \sum_{i \in  \batch_t } \grad_i^{\myPerp} \nonumber \\
    & =  \left( \Identity - \frac{\eta}{B} \sum_{i \in  \batch_t } \HH_i  \right) \PP_{\myNullOrtho{\HH}}(\params_t - \params^* ) -\frac{\eta}{B} \sum_{i \in  \batch_t } \grad_i^{\myPerp} \nonumber \\
    & =  \left( \Identity - \frac{\eta}{B} \sum_{i \in  \batch_t } \HH_i  \right) \left( \params^{\myPerp}_{t+1} - \params^{*\myPerp} \right) -\frac{\eta}{B} \sum_{i \in  \batch_t } \grad_i^{\myPerp}.
\end{align}
Here, in the second step, we used \eqref{eq:Linearized full dynamics}, and in the fourth we used \eqref{eq:projection and PSD Hessians}.

\paragraph{Linearized dynamics in the null space.}
Under the assumption that $ \HH_i \in \PSDset{d} $ for all $ i \in [n] $, the linearized dynamics in the null space is given by
\begin{align}\label{eq:Linearized dynamics over the null space}
    \params^{\myPar}_{t+1} - \params^{*\myPar}
    & = \PP_{\myNull{\HH}} \left(\params_{t+1} - \params^* \right) \nonumber \\
    & = \PP_{\myNull{\HH}} \left( \Identity - \frac{\eta}{B} \sum_{i \in  \batch_t } \HH_i \right) (\params_t - \params^* ) -\frac{\eta}{B} \sum_{i \in  \batch_t } \PP_{\myNull{\HH}} \grad_i \nonumber \\
    & = \left( \PP_{\myNull{\HH}} - \frac{\eta}{B} \sum_{i \in  \batch_t } \PP_{\myNull{\HH}} \HH_i \right) (\params_t - \params^* ) -\frac{\eta}{B} \sum_{i \in  \batch_t } \grad_i^{\myPar} \nonumber \\
    & = \left( \PP_{\myNull{\HH}} - \zeroVec \right) (\params_t - \params^* ) -\frac{\eta}{B} \sum_{i \in  \batch_t } \grad_i^{\myPar} \nonumber \\
    & = \PP_{\myNull{\HH}} (\params_t - \params^* ) -\frac{\eta}{B} \sum_{i \in  \batch_t } \grad_i^{\myPar} \nonumber \\
    & = \params^{\myPar}_{t} - \params^{*\myPar} -\frac{\eta}{B} \sum_{i \in  \batch_t } \grad_i^{\myPar}.
\end{align}
Again, in the second step, we used \eqref{eq:Linearized full dynamics}, and in the fourth we used \eqref{eq:projection and PSD Hessians}. Overall,
\begin{equation}\label{eq:Linearized dynamics over the null space general case}
    \params^{\myPar}_{t+1} = \params^{\myPar}_{t} -\frac{\eta}{B} \sum_{i \in  \batch_t } \grad_i^{\myPar}.
\end{equation}
Note that if $ \grad_i^{\myPar} = \zeroVec $ for all $ i \in [n] $ then
\begin{equation}\label{eq:linearized dynamics in the null space}
    \params^{\myPar}_{t+1} = \params^{\myPar}_{t}.
\end{equation}

\subsection{Mean dynamics (proof of Theorem~\ref{thm:stability first moment})}\label{app:Mean dynamics}
First, we compute the mean of the linearized dynamics.
\begin{align}\label{eq:Mean dynamics}
    \MeanVec_{t+1} = \EE\left[\params_{t+1} - \params^*\right]
    & = \EE\left[\left( \Identity - \frac{\eta}{B} \sum_{i \in  \batch_t } \HH_i \right) (\params_t - \params^* ) \right] - \EE\left[ \frac{\eta}{B} \sum_{i \in  \batch_t } \grad_i \right] \nonumber \\
    & = \EE\left[\left( \Identity - \frac{\eta}{B} \sum_{i \in  \batch_t } \HH_i \right) \EE\left[ (\params_t - \params^* ) \middle|  \batch_t \right] \right] - \frac{\eta}{n} \sum_{i= 1}^n \grad_i \nonumber \\
    & = \left( \Identity - \eta \HH \right) \EE\left[ (\params_t - \params^* ) \right] \nonumber \\
    & = \left( \Identity - \eta \HH \right) \MeanVec_{t},
\end{align}
where in the second step we used the law of total expectation, and in the third step we used \eqref{eq:gradEqZero}. This system is stable if and only if the spectral radius $ \rho(\Identity - \eta \HH) \leq 1 $. This condition is equivalent to $ \lambda_{\max}(\HH) \leq 2/\eta $ (see proof in, \eg \cite{cohen2021gradient,mulayoffNeurips}), thus proving point 2 of Thm.~\ref{thm:stability first moment}.

\paragraph{Mean dynamics in the orthogonal complement.}
In a similar manner, taking the expectation of both sides of \eqref{eq:Linearized dynamics over the orthogonal complement} and using \eqref{eq:gradEqZero}, we get
\begin{equation}
    \MeanVec_{t+1}^{\myPerp} = \left( \Identity - \eta \HH \right)\MeanVec_{t}^{\myPerp},
\end{equation}
Note that for all $ t \geq 0 $, 
\begin{equation}\label{eq:projected transition matrix first moment}
    \MeanVec_{t+1}^{\myPerp} = \PP_{\myNullOrtho{\HH}} \MeanVec_{t+1}^{\myPerp} = \PP_{\myNullOrtho{\HH}} \left( \Identity - \eta \HH \right)\MeanVec_{t}^{\myPerp} =  \left( \PP_{\myNullOrtho{\HH}} - \eta \HH \right)\MeanVec_{t}^{\myPerp} .
\end{equation}
Namely, $ \MeanVec_{t}^{\myPerp} = \left( \PP_{\myNullOrtho{\HH}} - \eta \HH \right)^t\MeanVec_{0}^{\myPerp} $, and thus
\begin{equation}
    \norm{ \MeanVec_{t}^{\myPerp} } = \norm{  \left( \PP_{\myNullOrtho{\HH}} - \eta \HH \right)^t\MeanVec_{0}^{\myPerp}} \leq  \norm{   \PP_{\myNullOrtho{\HH}} - \eta \HH }^t\norm{\MeanVec_{0}^{\myPerp}}.
\end{equation}
It is easy to show that
\begin{equation}
    \norm{ \PP_{\myNullOrtho{\HH}} - \eta \HH } = \max_{\lambda_i(\HH) \neq 0 } \left\{ \left|1-\eta \lambda_{i}(\HH) \right| \right\}.
\end{equation}
Therefore, if $ 0 < \eta < 2/\lambda_{\max} $, we have that $ \norm{  \PP_{\myNullOrtho{\HH}} - \eta \HH } < 1  $ and thus
\begin{equation}
    \lim_{t \to \infty} \norm{ \MeanVec_{t}^{\myPerp} } \leq  \lim_{t \to \infty} \norm{\PP_{\myNullOrtho{\HH}} - \eta \HH }^t\norm{\MeanVec_{0}^{\myPerp}} = 0.
\end{equation}
This proves point 3 of Thm.~\ref{thm:stability first moment}.

\paragraph{Mean dynamics in the null space.}
Taking the expectation of both sides of \eqref{eq:Linearized dynamics over the null space} and using \eqref{eq:gradEqZero}, we obtain
\begin{equation}    
    \MeanVec_{t+1}^{\myPar} = \MeanVec_{t}^{\myPar}.
\end{equation}
This demonstrates that for all $t \geq 0$,
\begin{equation}
    \EE\left[\params_{t}^{\myPar} - \params^{*\myPar}\right]
    = \MeanVec_{t}^{\myPar}
    = \MeanVec_{0}^{\myPar}
    = \EE\left[\params_{0}^{\myPar} - \params^{*\myPar}\right],
\end{equation}
so that
\begin{equation}
    \EE\left[\params_{t}^{\myPar}\right] = \EE\left[\params_{0}^{\myPar}\right].
\end{equation}
This proves Point 1 of Thm.~\ref{thm:stability first moment}.

\subsection{Covariance dynamics for the orthogonal complement}
Before providing a complete proof for Thm.~\ref{thm:stability threshold expectation} (see App.~\ref{app:stability threshold expectation proof}) and Thm.~\ref{thm:stability threshold expectation regular minima} (see App.~\ref{app:stability threshold expectation regular minima proof}), we next examine the evolution over time of the covariance of the parameter vector. We start by focusing on the orthogonal complement space. Define
\begin{equation}\label{eq:transition matrix and vector v}
    \TransMat_t =  \Identity - \frac{\eta}{B} \sum_{i \in  \batch_t } \HH_i \qquad \text{and} \qquad \vv_t =  \frac{\eta}{B} \sum_{i \in  \batch_t } \grad_i,
\end{equation}
so that \eqref{eq:Linearized dynamics over the orthogonal complement} can be compactly written as
\begin{equation}\label{eq:Linearized dynamics over the orthogonal complement new}
    \params^{\myPerp}_{t+1} - \params^{*\myPerp} = \TransMat_t \left( \params^{\myPerp}_{t} - \params^{*\myPerp} \right) -\vv_t^{\myPerp}.
\end{equation}
Recall that this holds under the assumption that $ \HH_i \in \PSDset{d} $ for all $ i \in [n] $. Note that $\{\TransMat_t \}$ are i.i.d.\ and that $\params^{\myPerp}_{t}$ is constructed from $\TransMat_0,\ldots,\TransMat_{t-1}$, so that $\params^{\myPerp}_{t}$ is independent of $\TransMat_t$. We therefore have
\begin{align}
    \CovMat_{t+1}^{\myPerp}
    & = \EE\left[ \left( \params^{\myPerp}_{t+1} - \params^{*\myPerp} \right) \left( \params^{\myPerp}_{t+1} - \params^{*\myPerp}\right)^\transpose \right] \nonumber \\
    &  = \EE\left[ \left( \TransMat_t \left( \params^{\myPerp}_{t} - \params^{*\myPerp} \right) -\vv_t^{\myPerp} \right) \left(  \TransMat_t \left( \params^{\myPerp}_{t} - \params^{*\myPerp} \right) -\vv_t^{\myPerp} \right)^\transpose \right] \nonumber\\
    & = \EE\left[ \TransMat_t \left( \params^{\myPerp}_{t} - \params^{*\myPerp}\right) \left( \params^{\myPerp}_{t} - \params^{*\myPerp}\right)^\transpose \TransMat^\transpose_t \right]
    - \EE \left[ \TransMat_t \left( \params^{\myPerp}_{t} - \params^{*\myPerp} \right) (\vv^\myPerp_t)^{\transpose} \right]\nonumber\\
    & \quad - \EE \left[ \vv_t^{\myPerp}\left( \params^{\myPerp}_{t} - \params^{*\myPerp} \right)^\transpose \TransMat^\transpose_t \right]
    +  \EE\left[ \vv_t^{\myPerp}(\vv^\myPerp_t)^{\transpose} \right] \nonumber \\
    & = \EE\left[ \TransMat_t \EE \left[\left( \params^{\myPerp}_{t} - \params^{*\myPerp}\right) \left( \params^{\myPerp}_{t} - \params^{*\myPerp}\right)^\transpose \right] \TransMat^\transpose_t \right]
    - \EE \left[ \TransMat_t \EE\left[ \params^{\myPerp}_{t} - \params^{*\myPerp} \right] (\vv^\myPerp_t)^{\transpose} \right]\nonumber\\
    & \quad - \EE \left[ \vv_t^{\myPerp}\EE \left[ \params^{\myPerp}_{t} - \params^{*\myPerp} \right]^\transpose \TransMat^\transpose_t \right]
    +  \CovMat_{\vv}^{\myPerp}  \nonumber \\
    & = \EE\left[ \TransMat_t \CovMat_{t}^{\myPerp} \TransMat^\transpose_t \right] 
    - \EE \left[ \TransMat_t \MeanVec^{\myPerp}_{t} (\vv^\myPerp_t)^{\transpose} \right]
    - \EE \left[ \vv_t^{\myPerp} (\MeanVec^{\myPerp}_{t})^\transpose \TransMat^\transpose_t \right]
    +  \CovMat_{\vv}^{\myPerp} ,
\end{align}
where in the second equality we used \eqref{eq:Linearized dynamics over the orthogonal complement new}, and in the fourth the fact that $\TransMat_t $ is independent of $\params^{\myPerp}_{t} $. Using vectorization, the above equation can be written as
\begin{align}\label{eq:second moment evolution}
    \vectorization{\CovMat_{t+1}^{\myPerp}}
    & = \EE\left[ \vectorization{\TransMat_t \CovMat_{t}^{\myPerp} \TransMat^\transpose_t} \right]
    - \EE \left[ \vectorization{\TransMat_t \MeanVec_{t}^{\myPerp} (\vv^\myPerp_t)^{\transpose}} \right]
    - \EE \left[ \vectorization{ \vv^\myPerp_t (\MeanVec^{\myPerp}_{t})^\transpose \TransMat^\transpose_t} \right]
    + \vectorization{\CovMat_{\vv}^{\myPerp}} \nonumber \\
    & = \EE\left[\TransMat_t  \otimes \TransMat_t \right]  \vectorization{ \CovMat_{t}^{\myPerp}}
    - \EE \left[ \vv^\myPerp_t \otimes \TransMat_t \right] \MeanVec_{t}^{\myPerp}
    - \EE \left[ \TransMat_t \otimes \vv^\myPerp_t \right] \MeanVec_{t}^{\myPerp}
    + \vectorization{\CovMat_{\vv}^{\myPerp}} \nonumber \\
    & = \CovEvo \vectorization{ \CovMat_{t}^{\myPerp}}
    - \left(\EE \left[ \vv^\myPerp_t \otimes \TransMat_t \right] + \EE \left[ \TransMat_t \otimes \vv^\myPerp_t \right] \right) \MeanVec_{t}^{\myPerp}
    + \vectorization{\CovMat_{\vv}^{\myPerp}},
\end{align}
where we denoted 
\begin{equation}\label{eq:Q}
	\CovEvo \triangleq  \EE\left[\TransMat_t  \otimes \TransMat_t \right]. 
\end{equation}
Overall, the joint dynamics of $ \CovMat_{t}^{\myPerp} $ and $ \MeanVec_{t}^{\myPerp} $ is given by
\begin{equation}\label{eq:joint dynamics}
    \begin{pmatrix}
        \MeanVec_{t+1}^{\myPerp} \\
        \vectorization{\CovMat_{t+1}^{\myPerp}}
    \end{pmatrix}
    =
    \begin{pmatrix}
        \Identity - \eta \HH  &  \zeroVec \\
        - \EE \left[ \vv_t^{\myPerp} \otimes \TransMat_t \right] - \EE \left[ \TransMat_t \otimes \vv_t^{\myPerp} \right] & \CovEvo
    \end{pmatrix}
    \begin{pmatrix}
        \MeanVec_{t}^{\myPerp} \\
        \vectorization{\CovMat_{t}^{\myPerp}}
    \end{pmatrix}
    +
    \begin{pmatrix}
        \zeroVec \\
        \vectorization{\CovMat_{\vv}^{\myPerp}}
    \end{pmatrix}.
\end{equation}
In some cases, it is easier to look at a projected version of the transition matrix. In \eqref{eq:projected transition matrix first moment} we showed that 
\begin{equation}
	\MeanVec_{t+1}^{\myPerp}  = \left( \PP_{\myNullOrtho{\HH}} - \eta \HH \right)\MeanVec_{t}^{\myPerp} .
\end{equation}
Moreover, from \eqref{eq:second moment evolution},
\begin{align}\label{eq:covariance evolution in orthogonal subspace}
	\vectorization{\CovMat_{t+1}^{\myPerp}}
	& = \vectorization{\PP_{\myNullOrtho{\HH}} \CovMat_{t+1}^{\myPerp} \PP_{\myNullOrtho{\HH}}} \nonumber \\
	& = \left(\PP_{\myNullOrtho{\HH}} \otimes \PP_{\myNullOrtho{\HH}} \right) \vectorization{\CovMat_{t+1}^{\myPerp}} \nonumber \\
	& = \left(\PP_{\myNullOrtho{\HH}} \otimes \PP_{\myNullOrtho{\HH}} \right) \left(\CovEvo \vectorization{ \CovMat_{t}^{\myPerp}}
	- \left(\EE \left[ \vv^\myPerp_t \otimes \TransMat_t \right] + \EE \left[ \TransMat_t \otimes \vv^\myPerp_t \right] \right) \MeanVec_{t}^{\myPerp}
	+ \vectorization{\CovMat_{\vv}^{\myPerp}}\right) \nonumber \\
	& = \left(\PP_{\myNullOrtho{\HH}} \otimes \PP_{\myNullOrtho{\HH}} \right) \CovEvo \vectorization{ \CovMat_{t}^{\myPerp}} \nonumber \\
	& \quad - \left(\PP_{\myNullOrtho{\HH}} \otimes \PP_{\myNullOrtho{\HH}} \right) \left(\EE \left[ \vv^\myPerp_t \otimes \TransMat_t \right] + \EE \left[ \TransMat_t \otimes \vv^\myPerp_t \right] \right) \MeanVec_{t}^{\myPerp}
	+ \vectorization{\CovMat_{\vv}^{\myPerp}} .
\end{align}
Therefore, the linear system in \eqref{eq:joint dynamics} can be written as
\begin{align}\label{eq:projected joint dynamics}
    &\begin{pmatrix}
        \MeanVec_{t+1}^{\myPerp} \\
        \vectorization{\CovMat_{t+1}^{\myPerp}}
    \end{pmatrix}
    =\nonumber\\
    &\begin{pmatrix}
        \PP_{\myNullOrtho{\HH}} - \eta \HH  &  \zeroVec \\
        - \left(\PP_{\myNullOrtho{\HH}} \otimes \PP_{\myNullOrtho{\HH}} \right) (\EE \left[ \vv_t^{\myPerp} \otimes \TransMat_t \right] + \EE \left[ \TransMat_t \otimes \vv_t^{\myPerp} \right] )& \left(\PP_{\myNullOrtho{\HH}} \otimes \PP_{\myNullOrtho{\HH}} \right) \CovEvo
    \end{pmatrix}
    \begin{pmatrix}
        \MeanVec_{t}^{\myPerp} \\
        \vectorization{\CovMat_{t}^{\myPerp}}
    \end{pmatrix} \nonumber	\\
    & \qquad +
    \begin{pmatrix}
        \zeroVec \\
        \vectorization{\CovMat_{\vv}^{\myPerp}}
    \end{pmatrix}.
\end{align}

\subsection{The transition matrix of the covariance dynamics}
We now proceed to develop an explicit expression for the covariance transition matrix $\CovEvo$ of \eqref{eq:Q}. We have
\begin{align}
	\CovEvo = \EE\left[\TransMat_t  \otimes \TransMat_t \right]
	& = \EE\left[ \left(\Identity - \frac{\eta}{B} \sum_{i \in  \batch_t } \HH_i \right) \otimes \left(\Identity - \frac{\eta}{B} \sum_{i \in  \batch_t } \HH_i \right) \right] \nonumber \\
	& = \EE\left[ \Identity - \frac{\eta}{B} \sum_{i \in  \batch_t } \left( \Identity \otimes \HH_i + \HH_i \otimes \Identity \right) + \frac{\eta^2}{B^2} \sum_{i,j \in  \batch_t } \HH_i \otimes \HH_j \right] \nonumber \\
	& = \Identity - \eta \left( \Identity \otimes \HH + \HH \otimes \Identity \right) + \eta^2  \EE\left[ \frac{1}{B^2} \sum_{i,j \in  \batch_t }  \HH_i \otimes \HH_j \right].
\end{align}
Note that
\begin{align}
	\EE\left[ \frac{1}{B^2} \sum_{i,j \in  \batch_t }  \HH_i \otimes \HH_j \right]
	& = \EE\left[ \frac{1}{B^2} \sum_{i \neq j \in  \batch_t }  \HH_i \otimes \HH_j + \frac{1}{B^2}\sum_{i \in  \batch_t } \HH_i \otimes \HH_i \right] \nonumber \\
	& = \frac{1}{B^2} \times B(B-1) \EE\left[\HH_i \otimes \HH_j  \middle| i \neq j  \in \batch_t \right] + \frac{1}{B^2} \E\left[\sum_{i \in  \batch_t } \HH_i \otimes \HH_i \right] \nonumber \\
	& =  \frac{B-1}{B} \EE\left[ \HH_i \otimes \HH_j \middle| i \neq j  \in \batch_t \right]  + \frac{1}{nB}\sum_{i = 1}^n  \HH_i \otimes \HH_i.
\end{align}
Specifically using symmetry and \eqref{eq:Hessian Def}, 
\begin{align}
	\EE\left[ \HH_i \otimes \HH_j \middle| i \neq j  \in \batch_t \right]
	&  = \sum_{i \neq j = 1 }^n  \frac{1}{n(n-1)} \HH_i \otimes \HH_j \nonumber \\
	& = \frac{n}{(n-1)} \frac{1}{n^2} \sum_{i \neq j = 1 }^n \HH_i \otimes \HH_j \nonumber \\
	& = \frac{n}{(n-1)} \left( \HH \otimes \HH - \frac{1}{n^2} \sum_{i = 1 }^n \HH_i \otimes \HH_i \right).
\end{align}
Hence,
\begin{align}
	\frac{B-1}{B} \EE\left[ \HH_i \otimes \HH_j \middle| i \neq j  \in \batch_t \right]
	& = \frac{n(B-1)}{B(n-1)} \left( \HH \otimes \HH - \frac{1}{n^2} \sum_{i = 1 }^n \HH_i \otimes \HH_i \right)  \nonumber \\
	& = \frac{n(B-1)}{B(n-1)}  \HH \otimes \HH - \frac{B-1}{Bn(n-1)} \sum_{i = 1 }^n \HH_i \otimes \HH_i \nonumber \\
	& = \HH \otimes \HH - \frac{n-B}{B(n-1)}  \HH \otimes \HH - \frac{B-1}{Bn(n-1)} \sum_{i = 1 }^n \HH_i \otimes \HH_i.
\end{align}
Overall,
\begin{align}
	\EE\left[ \frac{1}{B^2} \sum_{i,j \in  \batch_t }  \HH_i \otimes \HH_j \right]
	& = \HH \otimes \HH - \frac{n-B}{B(n-1)}  \HH \otimes \HH - \frac{B-1}{Bn(n-1)} \sum_{i = 1 }^n \HH_i \otimes \HH_i  \nonumber \\ 
	& \quad + \frac{1}{nB}\sum_{i = 1}^n  \HH_i \otimes \HH_i \nonumber \\
	& = \HH \otimes \HH - \frac{n-B}{B(n-1)}  \HH \otimes \HH + \frac{n-B}{B(n-1)}\times \frac{1}{n}\sum_{i = 1}^n  \HH_i \otimes \HH_i \nonumber \\
	& = \HH \otimes \HH  + \frac{n-B}{B(n-1)} \left( \frac{1}{n}\sum_{i = 1}^n  \HH_i \otimes \HH_i - \HH \otimes \HH \right).
\end{align}
Therefore, we have that  $ \CovEvo $ is given by
\begin{align}\label{eq:Covariance evolution matrix app}
    \CovEvo(B, \eta)
    & = \Identity - \eta \left( \Identity \otimes \HH + \HH \otimes \Identity \right) + \eta^2  \HH \otimes \HH  + \eta^2 \frac{n-B}{B(n-1)} \left( \frac{1}{n}\sum_{i = 1}^n  \HH_i \otimes \HH_i - \HH \otimes \HH \right) \nonumber \\
    & = (\Identity-\eta\HH)\otimes(\Identity-\eta\HH) + \frac{n-B}{B(n-1)} \frac{\eta^2}{n}  \smash{\sum_{i = 1}^n} (\HH_i \otimes \HH_i -\HH \otimes \HH ),
    \end{align}
which reproduce the result of \eqref{eq:Covariance evolution matrix}. Here we give an alternative form of $\CovEvo$, which is useful in many derivations. We set
\begin{equation}
	p = \frac{n-B}{B(n-1)},
\end{equation}
and continue from the first line in \eqref{eq:Covariance evolution matrix app}.
\begin{align}\label{eq:Covariance evolution matrix alternative}
	\CovEvo(B, \eta)
	& = \smash[t]{\Identity - \eta \left( \Identity \otimes \HH + \HH \otimes \Identity \right) + \eta^2  \HH \otimes \HH  + \eta^2 p \left( \frac{1}{n}\sum_{i = 1}^n  \HH_i \otimes \HH_i - \HH \otimes \HH \right) } \nonumber \\
	& = \Identity - \eta \left( \Identity \otimes \HH + \HH \otimes \Identity \right) + (1-p) \eta^2 \HH \otimes \HH  +  p \times \frac{\eta^2}{n}\sum_{i = 1}^n  \HH_i \otimes \HH_i \nonumber \\
    & = (1-p)\Identity - (1-p)\eta \left( \Identity \otimes \HH + \HH \otimes \Identity \right) + (1-p) \eta^2 \HH \otimes \HH \nonumber \\
    & \quad + p \Identity - p \eta \left( \Identity \otimes \HH + \HH \otimes \Identity \right) + p \times \frac{\eta^2}{n}\sum_{i = 1}^n  \HH_i \otimes \HH_i \nonumber \\
    & = (1-p)\bigg[\Identity - \eta \left( \Identity \otimes \HH + \HH \otimes \Identity \right) + \eta^2 \HH \otimes \HH\bigg] \nonumber \\
    & \quad + p \left[ \Identity - \eta \left( \Identity \otimes \HH + \HH \otimes \Identity \right) +  \frac{\eta^2}{n}\sum_{i = 1}^n  \HH_i \otimes \HH_i \right]\nonumber \\
    & = (1-p)\left(\Identity - \eta \HH \right) \otimes\left(\Identity - \eta \HH \right)  \nonumber \\
    & \quad + p \left[ \Identity - \eta \left( \Identity \otimes \Big(\frac{1}{n}\sum_{i = 1}^n  \HH_i \Big) + \Big(\frac{1}{n}\sum_{i = 1}^n  \HH_i \Big) \otimes \Identity \right) + \frac{\eta^2}{n}\sum_{i = 1}^n  \HH_i \otimes \HH_i \right]\nonumber \\
    & = (1-p)\left(\Identity - \eta \HH \right) \otimes\left(\Identity - \eta \HH \right)  
    % \nonumber \\  & \quad 
    + p \frac{1}{n}\sum_{i = 1}^n \bigg[ \Identity - \eta \left( \Identity \otimes \HH_i +   \HH_i \otimes \Identity \right) +  \eta^2 \HH_i \otimes \HH_i \bigg]\nonumber \\
	& = (1-p) (\Identity-\eta\HH)\otimes(\Identity-\eta\HH) + p \frac{1}{n} \sum_{i = 1}^n (\Identity-\eta\HH_i)\otimes(\Identity-\eta\HH_i),
\end{align}
where in the third step we add and subtract $ p \Identity - p \eta \left( \Identity \otimes \HH + \HH \otimes \Identity \right) $, and in the fifth we used \eqref{eq:Hessian Def}. Another useful form is the following. First, observe that
\begin{align}\label{eq:kronecker variance}
     \frac{1}{n}\sum_{i = 1}^n  \HH_i \otimes \HH_i - \HH \otimes \HH
    & = \frac{1}{n}\sum_{i = 1}^n  \HH_i \otimes \HH_i - \HH \otimes \HH -  \HH \otimes \HH +\HH \otimes \HH\nonumber\\
    & = \frac{1}{n}\sum_{i = 1}^n  \HH_i \otimes \HH_i - \HH \otimes \HH -  \HH \otimes \HH +\HH \otimes \HH\nonumber\\
    & = \frac{1}{n}\sum_{i = 1}^n  \HH_i \otimes \HH_i  -  \left(\frac{1}{n}\sum_{i = 1}^n  \HH_i\right) \otimes \HH - \HH \otimes \left(\frac{1}{n}\sum_{i = 1}^n  \HH_i\right) +\HH \otimes \HH\nonumber\\
    & = \frac{1}{n}\sum_{i = 1}^n  \HH_i \otimes \HH_i  - \frac{1}{n}\sum_{i = 1}^n  \HH_i \otimes \HH - \frac{1}{n}\sum_{i = 1}^n \HH \otimes \HH_i +\HH \otimes \HH\nonumber\\
    & = \frac{1}{n}\sum_{i = 1}^n \left( \HH_i - \HH\right)\otimes \left( \HH_i - \HH\right)
\end{align}
Then, starting from the first line in \eqref{eq:Covariance evolution matrix app} and using \eqref{eq:kronecker variance}, we have
\begin{align}\label{eq:Covariance evolution matrix alternative 2}
    \CovEvo(B, \eta)
    & = \Identity - \eta \left( \Identity \otimes \HH + \HH \otimes \Identity \right) + \eta^2  \HH \otimes \HH  + p \eta^2  \left( \frac{1}{n}\sum_{i = 1}^n  \HH_i \otimes \HH_i - \HH \otimes \HH \right) \nonumber \\
    & = \left( \Identity - \eta\HH \right) \otimes \left( \Identity - \eta\HH \right)  + \eta^2 p \left( \frac{1}{n}\sum_{i = 1}^n \left( \HH_i - \HH\right)\otimes \left( \HH_i - \HH\right) \right)
\end{align}
Finally, we give the $ \CovEvo $ in terms of $\CC$ and $\DD$. Again, we start from the first line in \eqref{eq:Covariance evolution matrix app}.
\begin{align}\label{eq:Q in terms of C and D}
	\CovEvo
	& = \Identity - \eta \left( \Identity \otimes \HH + \HH \otimes \Identity \right) + \eta^2  \HH \otimes \HH  + \eta^2 p \left( \frac{1}{n}\sum_{i = 1}^n  \HH_i \otimes \HH_i - \HH \otimes \HH \right) \nonumber \\
	& = \Identity - \eta \left( \Identity \otimes \HH + \HH \otimes \Identity \right) + \eta^2 \left[ (1-p) \times \HH \otimes \HH  + p \times \frac{1}{n}\sum_{i = 1}^n  \HH_i \otimes \HH_i \right] \nonumber \\
	& = \Identity + 2\eta \CC + \eta^2 \DD.
\end{align}

\subsection{Covariance matrix of the gradient noise}
We now develop an explicit expression for the covariance matrix of the gradient noise $\vv$ of \eqref{eq:transition matrix and vector v}. We have 
\begin{align}
    \CovMat_{\vv} = \EE\left[ \vv_t \vv_t^\transpose \right]
    & = \left(\frac{\eta}{B} \right)^2 \EE\left[ \sum_{ i,j \in  \batch_t } \grad_i\grad_j^\transpose\right] \nonumber\\
    & = \left(\frac{\eta}{B} \right)^2 \EE\left[ \sum_{ i \neq j \in  \batch_t } \grad_i\grad_j^\transpose +\sum_{ i \in  \batch_t } \grad_i\grad_i^\transpose \right] \nonumber\\
    & = \left(\frac{\eta}{B} \right)^2 \left( B(B-1) \EE\left[ \grad_i \grad_j^\transpose \middle| i \neq j \in  \batch_t  \right] + \frac{B}{n} \sum_{ i =  1}^n \grad_i\grad_i^\transpose\right).
\end{align}
Observe that
\begin{align}
    \EE\left[ \grad_i \grad^\transpose_j \middle| i \neq j  \in \batch_t \right]
    & = \sum_{i \neq j = 1 }^n  \frac{1}{n(n-1)} \grad_i \grad^\transpose_j \nonumber \\
    & = \frac{1}{n(n-1)} \left(\sum_{i, j = 1 }^n \grad_i \grad^\transpose_j -\sum_{ i =  1}^n \grad_i \grad^\transpose_i \right) \nonumber \\
    & = \frac{1}{n(n-1)} \left( \left( \sum_{i = 1 }^n \grad_i\right)\left( \sum_{i = 1 }^n \grad_i\right)^\transpose -\sum_{ i =  1}^n \grad_i \grad^\transpose_i \right) \nonumber \\
    & = -\frac{1}{n(n-1)}\sum_{ i =  1}^n \grad_i \grad^\transpose_i,
\end{align}
where in the last step we used \eqref{eq:gradEqZero}. Thus,
\begin{align}\label{eq:covariance of v}
    \CovMat_{\vv}
    & = \left(\frac{\eta}{B} \right)^2 \left(\frac{B}{n} -  \frac{B(B-1)}{n(n-1)}\right) \sum_{i=1}^n \grad_i\grad_i^\transpose \nonumber\\
    & = \left(\frac{\eta}{B} \right)^2 \times \frac{B(n-B)}{n(n-1)} \sum_{i=1}^n \grad_i\grad_i^\transpose\nonumber\\
    & = \eta^2\frac{n-B}{B(n-1)} \times \frac{1}{n}\sum_{ i =  1}^n \grad_i\grad_i^\transpose \nonumber\\
    & = \eta^2 p \CovMat_{\grad},
\end{align}
where we denoted
\begin{equation}
    \CovMat_{\grad} = \frac{1}{n}\sum_{i=1}^n \grad_i\grad_i^\transpose.
\end{equation}

\subsection{The Null spaces of $ \CC , \ \DD $ and $ \bE $}\label{app:Null space}
Let us now analyze the relation between the null spaces of  $ \CC$ ,  $\DD $ and $ \bE $. First, it is easy to see that under the assumption that $ \HH_i \in \PSDset{d} $ for all $ i \in [n] $, 
\begin{equation}\label{eq:null space of H and H_i s}
    \myNull{\HH} = \bigcap_{i = 1}^n \myNull{\HH_i},
\end{equation}
where $ \myNull{\cdot} $ denotes null space of a matrix. Here we show the following.
\begin{lemma}\label{lemma:Null space lemma}
    Assume that $ \HH_i \in \PSDset{d} $ for all $ i \in [n] $ and let
    \begin{align}
        \CC & = \frac{1}{2} \HH \oplus \HH, \nonumber \\
        \DD & = (1-p)\, \HH \otimes \HH + p \,\frac{1}{n}\sum_{i=1}^n\HH_i \otimes \HH_i, \nonumber \\
        \bE & = \frac{1}{n}\sum_{i=1}^n (\HH_i -\HH) \otimes (\HH_i -\HH).
    \end{align}
    Then $ \myNull{\CC} \subseteq \myNull{\DD} $ and $ \myNull{\CC} \subseteq \myNull{\bE} $.
\end{lemma}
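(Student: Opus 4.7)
The plan is to characterize $\myNull{\CC}$ in matrix form via vectorization, show that every vectorized matrix in $\myNull{\CC}$ is supported on $\myNull{\HH} \times \myNull{\HH}$ (that is, fixed under left and right projection onto $\myNull{\HH}$), and then exploit the property \eqref{eq:projection and PSD Hessians} to kill both $\DD$ and $\bE$.

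First, I would translate $\CC \zz = \zeroVec$ into a matrix equation. Setting $\ZZ = \vectorization{}^{-1}(\zz)$ and using \eqref{eq:Kronecker sum} together with \eqref{eq:KroneckerProperty1}, I get
\begin{equation*}
\CC \zz \;=\; \tfrac{1}{2}\big(\HH \otimes \Identity + \Identity \otimes \HH\big)\vectorization{\ZZ} \;=\; \tfrac{1}{2}\vectorization{\HH \ZZ + \ZZ \HH}.
\end{equation*}
Thus $\zz \in \myNull{\CC}$ iff $\HH \ZZ + \ZZ \HH = \zeroVec$.

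Next, I would diagonalize $\HH = \sum_{k} \lambda_k \vv_k \vv_k^\transpose$ with $\lambda_k \geq 0$ (which holds because $\HH$ is an average of PSDs), and expand $\ZZ = \sum_{k,\ell} a_{k\ell} \vv_k \vv_\ell^\transpose$. The equation $\HH \ZZ + \ZZ \HH = \zeroVec$ becomes $a_{k\ell}(\lambda_k + \lambda_\ell) = 0$ for all $k,\ell$. Since $\lambda_k, \lambda_\ell \geq 0$, this forces $a_{k\ell} = 0$ whenever $\lambda_k > 0$ or $\lambda_\ell > 0$, so $\ZZ$ is supported on eigenvectors of $\HH$ with eigenvalue $0$. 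Equivalently,
\begin{equation*}
\ZZ \;=\; \PP_{\myNull{\HH}} \ZZ \PP_{\myNull{\HH}}.
\end{equation*}

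Finally, for any such $\ZZ$, I use \eqref{eq:projection and PSD Hessians} (which gives $\HH \PP_{\myNull{\HH}} = \HH_i \PP_{\myNull{\HH}} = \zeroVec$, valid precisely because each $\HH_i$ is PSD). This yields $\HH \ZZ \HH = \zeroVec$ and $\HH_i \ZZ \HH_i = \zeroVec$ for every $i$, and similarly $(\HH_i - \HH)\ZZ(\HH_i - \HH) = \zeroVec$. Applying \eqref{eq:KroneckerProperty1} termwise,
\begin{equation*}
\DD \zz \;=\; (1-p)\vectorization{\HH \ZZ \HH} + \tfrac{p}{n}\sum_{i=1}^n \vectorization{\HH_i \ZZ \HH_i} \;=\; \zeroVec,
\end{equation*}
and $\bE \zz = \tfrac{1}{n}\sum_{i=1}^n \vectorization{(\HH_i - \HH)\ZZ(\HH_i - \HH)} = \zeroVec$. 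This proves $\myNull{\CC} \subseteq \myNull{\DD} \cap \myNull{\bE}$.

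The only mild subtlety is step two, where I need to be sure $\HH \ZZ + \ZZ \HH = \zeroVec$ really does force $\ZZ$ to be a $\myNull{\HH}$-to-$\myNull{\HH}$ map; the argument hinges on $\HH$ being PSD so that $\lambda_k + \lambda_\ell = 0$ implies both summands vanish. After that the rest is a one-line consequence of \eqref{eq:projection and PSD Hessians}.
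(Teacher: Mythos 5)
Your proof is correct and follows the same overall architecture as the paper's: translate $\CC\zz=\zeroVec$ into the matrix equation $\HH\ZZ+\ZZ\HH=\zeroVec$, conclude that $\ZZ$ is annihilated by $\HH$ on both sides (equivalently $\ZZ=\PP_{\myNull{\HH}}\ZZ\PP_{\myNull{\HH}}$), and then use $\myNull{\HH}\subseteq\myNull{\HH_i}$, which holds because each $\HH_i$ is PSD, to kill every term of $\DD\zz$ and $\bE\zz$. The only point where you diverge is the sub-step showing that $\HH\ZZ+\ZZ\HH=\zeroVec$ forces $\HH\ZZ=\ZZ\HH=\zeroVec$: you expand $\ZZ$ in the eigenbasis of $\HH$ and use $\lambda_k+\lambda_\ell=0\Rightarrow\lambda_k=\lambda_\ell=0$, whereas the paper expands the Frobenius norm of $\HH\ZZ+\ZZ\HH$ and uses nonnegativity of $\Tr(\HH^{1/2}\ZZ^\transpose\HH\ZZ\HH^{1/2})$; both arguments hinge identically on $\HH\succeq 0$ and are equally valid, so this is a cosmetic rather than substantive difference.
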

\begin{proof}   
Let $ \uu \in \myNull{\CC} $ and denote $ \UU = \mathrm{vec}^{-1}(\uu) $, then
\begin{equation}
    \zeroVec = 2 \CC \uu = \HH \oplus \HH \uu = (\HH \otimes\Identity+\Identity \otimes \HH) \uu.
\end{equation}
In matrix form we get
\begin{equation}
    \UU \HH + \HH \UU = \zeroVec.
\end{equation}
Let us take the Frobenius norm, then
\begin{equation}
    \normF{\UU \HH + \HH \UU}^2 = \normF{\UU \HH}^2 + \normF{\HH \UU}^2+2\Tr\left((\UU \HH)^\transpose \HH \UU\right) = 0,
\end{equation}
where
\begin{equation}
    \Tr\left((\UU \HH)^\transpose \HH \UU\right) = \Tr\left(\HH \UU^\transpose \HH \UU\right)= \Tr\left(\HH^{\frac{1}{2}} \UU^\transpose \HH \UU \HH^{\frac{1}{2}}\right)\geq 0
\end{equation}
because $\HH^{\frac{1}{2}} \UU^\transpose \HH \UU \HH^{\frac{1}{2}}=(\HH^{\frac{1}{2}} \UU \HH^{\frac{1}{2}})^\transpose (\HH^{\frac{1}{2}}\UU \HH^{\frac{1}{2}})$ is PSD. This implies that
\begin{equation}
    \normF{\UU \HH}^2  = \normF{\HH \UU}^2 = 0.
\end{equation}
Thus, $ \uu \in \myNull{\CC} $ if and only if $ \UU \HH = \zeroVec $ and $ \HH \UU = \zeroVec $. Since the null space of $ \HH $ is the intersection of $ \{ \HH_i \} $ \eqref{eq:null space of H and H_i s} , we have that $ \UU $ also satisfies $ \HH_i \UU = \UU \HH_i = \zeroVec $ for all $ i \in [n] $. Now,
\begin{equation}
    \DD \uu  = (1-p)\, \HH \otimes \HH\uu + p \,\frac{1}{n}\sum_{i=1}^n\HH_i \otimes \HH_i\uu,
\end{equation}
and in matrix form,
\begin{equation}
    \mathrm{vec}^{-1}(\DD \uu) = (1-p)\, \HH \UU \HH + p \,\frac{1}{n}\sum_{i=1}^n\HH_i \UU \HH_i  = \zeroVec. 
\end{equation}
Namely, $ \uu \in \myNull{\DD} $. Similarly,
\begin{equation}
    \bE \uu  = \frac{1}{n}\sum_{i=1}^n (\HH_i -\HH) \otimes (\HH_i -\HH)\uu,
\end{equation}
and in matrix form,
\begin{equation}
    \mathrm{vec}^{-1}(\bE \uu) = \frac{1}{n}\sum_{i=1}^n (\HH_i -\HH) \UU (\HH_i -\HH)  = \zeroVec. 
\end{equation}
Namely, $ \uu \in \myNull{\bE} $.
\end{proof}

\subsection{Positivity of $ \CC $ and $ \DD $}\label{app:Positivity of C and D}

\subsubsection{Positivity of $ \CC $}
The eigenvalues of a Kronecker \emph{sum} are the pairwise sums of the eigenvalues of the summands \citep[Thm. 13.16]{laub2004matrix}. In App.~\ref{app:Recovering GD's stability condition} we explicitly show this for $\CC$, where we derive that the eigenvalues of $ \CC = \frac{1}{2} \HH \oplus \HH  $ are $ \frac{1}{2} \big( \lambda_i(\HH) + \lambda_j(\HH)  \big), \  i=1,\ldots,d ,\, j=1,\ldots,d $. Note that $ \HH $ is the Hessian of the loss at a minimum, and is therefore PSD. Therefore all eigenvalues of $ \HH $ are nonnegative, and as a consequence, the eigenvalues of $ \CC $ are nonnegative, \ie $ \CC $ is PSD.

\subsubsection{Positivity of $ \DD$}
The eigenvalues of a Kronecker \emph{product} are the pairwise products of the eigenvalues of the multiplicands \citep[Thm. 13.12]{laub2004matrix}. This property asserts that for any PSD matrix $ \MM $, namely with nonnegative eigenvalues, the Kronecker product $ \MM \otimes \MM $ is PSD. Note that $ \DD $ is defined as
\begin{equation}
	\DD = (1-p) \HH \otimes \HH + p \frac{1}{n}\sum_{i=1}^n  \HH_i \otimes \HH_i,
\end{equation}
with $ p \in [0,1] $. In our settings, \ie regular and interpolating minima, we consider Hessian matrices $ \{ \HH_i \} $ that are PSD. By the property above, all $ \{ \HH_i \otimes \HH_i \} $ are PSD, and also $ \{ \HH \otimes \HH \} $ is PSD. Therefore, $ \DD $ is a convex combination of PSD matrices, which is PSD.

\subsection{Proof of Theorem~\ref{thm:stability threshold expectation}}\label{app:stability threshold expectation proof}
We are now ready to prove Thm.~\ref{thm:stability threshold expectation}. 

\paragraph{First statement.} In \eqref{eq:linearized dynamics in the null space} we showed that for interpolating minima $ \params^{\myPar}_{t+1} = \params^{\myPar}_{t} $, which completes the proof for the first statement of the theorem.

\paragraph{Second statement.}
\citet{ma2021on} showed that the second moment $ \CovMat_t = [(\params_t-\params^*)(\params_t-\params^*)^\transpose] $ for interpolating minima evolves over time as
\begin{equation}
    \vectorization{\CovMat_{t+1}} = \CovEvo \ \vectorization{\CovMat_{t}},
\end{equation}
where $ \CovEvo $ is given in \eqref{eq:Covariance evolution matrix}. Since $ \CovMat_t$ is PSD by definition, we only care about the effect of $\CovEvo$ on vectorizations of PSD matrices. Therefore, we have that $ \{ \CovMat_{t} \} $ are bounded if and only if (see proof in \citep{ma2021on})
\begin{equation}\label{eq:impicit SGD stability condition restated 3}
    \max_{\CovMat \in \PSDset{d}} \frac{  \norm{\CovEvo(\eta, B) \; \vectorization{\CovMat}}}{\normF{\CovMat}} \leq 1.
\end{equation}
To obtain the stability threshold of SGD in the mean square sense we first rearrange the terms in $ \CovEvo $ as (see \eqref{eq:Covariance evolution matrix alternative})
\begin{equation}
    \CovEvo(\eta, B) = (1-p) \times (\Identity-\eta\HH)\otimes(\Identity-\eta\HH) + p \times \frac{1}{n} \sum_{i = 1}^n (\Identity-\eta\HH_i)\otimes(\Identity-\eta\HH_i).
\end{equation}
Here we explicitly see that $ \CovEvo $ can be written as a sum of Kronecker products, where each product is of a symmetric matrix with itself, as required by Thm.~\ref{thm:Symmetric Kronecker systems}. Applying this theorem, we have that the spectral radius of $ \CovEvo $ equals its top eigenvalue, and the corresponding top eigenvector is a vectorization of a PSD matrix. Note that since $ \CovEvo $ is symmetric, its spectral radius $ \rho(\CovEvo) $ is given by the \emph{unconstrained} optimization problem
\begin{equation}
    \rho(\CovEvo) = \max_{\CovMat \in \R^{d\times d}} \frac{  \norm{\CovEvo(\eta, B) \; \vectorization{\CovMat}}}{\normF{\CovMat}}.
\end{equation}
Theorem~\ref{thm:Symmetric Kronecker systems} tells us that the top eigenvector of $ \CovEvo $ maximizes this unconstrained problem, and more importantly, it always corresponds to a PSD matrix. Therefore, this top eigenvector also maximizes the objective while restricting to the subset of PSD matrices, which is given by the constraint in \eqref{eq:impicit SGD stability condition restated 3}. Thus, we have that the maximizer for the constrained optimization problem in \eqref{eq:impicit SGD stability condition restated 3} is, in fact, the top eigenvalue of $ \CovEvo $. Hence, the linear system is stable if and only if $ \lambda_{\max}(\CovEvo) \leq 1 $. Writing $ \CovEvo $ in terms of $\CC$ and $\DD$ gives (see \eqref{eq:Q in terms of C and D})
\begin{equation} \label{eq:RegatheringQ}
    \CovEvo = \Identity - 2\eta \CC + \eta^2 \DD.
\end{equation}
Because $ \CovEvo $ is symmetric, the condition $ \lambda_{\max}(\CovEvo) \leq 1 $ is equivalent to the requirement that $ \uu^\transpose  \CovEvo \uu \leq 1  $ for all $ \uu \in \Sb^{d^2-1} $. In App.~\ref{app:Null space} we show that $ \myNull{\CC} \subseteq \myNull{\DD} $. Therefore, if $ \uu \in \myNull{\CC} $ then also~$ \uu \in \myNull{\DD} $ and we get
\begin{equation}
    \uu^\transpose  \CovEvo \uu
    = 1 - 2\eta \uu^\transpose\CC\uu + \eta^2 \uu^\transpose\DD\uu = 1.
\end{equation}
Namely, directions in the null space of $ \CC $ do not impose any constraint on the learning rate, and thus can be ignored. Additionally, if $ \uu \in \myNull{\DD}  $ but $ \uu \notin \myNull{\CC} $, then
\begin{equation}
    \uu^\transpose  \CovEvo \uu
    = 1 - 2\eta \uu^\transpose\CC\uu + \eta^2 \uu^\transpose\DD\uu = 1 - 2\eta \uu^\transpose\CC\uu \leq 1,
\end{equation}
holds for all $ \eta \geq 0 $, because $ \CC $ is PSD (see App.~\ref{app:Positivity of C and D}). Now,
\begin{equation}\label{eq:proof start}
    \uu^\transpose  \CovEvo \uu
     = 1 - 2\eta \uu^\transpose\CC\uu + \eta^2 \uu^\transpose\DD\uu \leq 1
\end{equation}
holds for all $ \uu \notin \myNull{\DD} $ (which also results in $  \uu \notin \myNull{\CC} $), if and only if
\begin{equation}
    \forall \uu \notin \myNull{\DD} \qquad \eta  \uu^\transpose  \DD \uu \leq 2 \uu^\transpose  \CC \uu.
\end{equation}
Since $ \DD $ is PSD (see App.~\ref{app:Positivity of C and D}), and we assume that $ \uu \notin \myNull{\DD} $, we can divide both sides of this equation by $ 
\uu^\transpose  \DD \uu > 0 $ to get a condition on the learning rate as
\begin{equation}
    0 \leq \eta \leq 2 \inf_{\uu \in \Sb^{d^2-1} : \uu \notin \myNull{\DD} } \left\{ \frac{  \uu^\transpose  \CC \uu}{\uu^\transpose \DD \uu} \right\}.
\end{equation}
Therefore, the stability threshold $ \etaVar $ is given by
\begin{equation}\label{eq:StabilityConditionFirstMove}
    \etaVar
    = 2 \inf_{\uu \in \Sb^{d^2-1} : \uu \notin \myNull{\DD} } \left\{ \frac{  \uu^\transpose  \CC \uu}{\uu^\transpose \DD \uu} \right\}
    = 2\left(\sup_{\uu \in \Sb^{d^2-1} : \uu \notin \myNull{\DD}} \left\{ \frac{\uu^\transpose \DD \uu}{  \uu^\transpose  \CC \uu} \right\}\right)^{-1}.
\end{equation}
Note that the norm of $ \uu $ has no effect, and therefore we can remove the constraint $ \uu \in \Sb^{d^2-1} $.
Additionally, we can also relax the constraint $ \uu \notin \myNull{\DD} $ to $ \uu \notin \myNull{\CC} $, because the supremum in~\eqref{eq:StabilityConditionFirstMove} is over a nonnegative function (both $\CC$ and $\DD$ are PSD, see App.~\ref{app:Positivity of C and D}), and will not be affected by adding to the domain points at which the function vanishes. Since $ \myNull{\CC} \subseteq \myNull{\DD} $ we have that~$ \myNullOrtho{\DD} \subseteq \myNullOrtho{\CC} $ and therefore
\begin{equation}\label{eq:D and projection on C}
    \qquad \PP_{\myNullOrtho{\CC}} \DD = \DD \PP_{\myNullOrtho{\CC}} = \DD,
\end{equation}
where $ \PP_{\myNullOrtho{\CC}} $ is the projection matrix onto the orthogonal complement of the null space of $ \CC $.  Additionally, $ \CC $ is PSD (see App.~\ref{app:Positivity of C and D}), and therefore $ \CC^{\frac{1}{2}} $ exists and is also PSD, so that
\begin{equation}\label{eq:projection on C and its pseudo inverse}
    \PP_{\myNullOrtho{\CC}} = \left(\CC^{\frac{1}{2}}\right)^{\dagger} \CC^{\frac{1}{2}} = \CC^{\frac{1}{2}} \left(\CC^{\frac{1}{2}}\right)^{\dagger}.
\end{equation}
Therefore,
\begin{align}\label{eq:From frac to lambda max}
    \sup_{\uu \in \Sb^{d^2-1} : \uu \notin \myNull{\DD}} \left\{ \frac{\uu^\transpose \DD \uu}{  \uu^\transpose  \CC \uu} \right\}
    & = \sup_{\uu \notin \myNull{\CC}} \left\{ \frac{\uu^\transpose \DD \uu}{  \uu^\transpose  \CC \uu} \right\} \nonumber \\
    & = \sup_{\uu \notin \myNull{\CC}} \left\{ \frac{\uu^\transpose \PP_{\myNullOrtho{\CC}} \DD \PP_{\myNullOrtho{\CC}} \uu}{  \uu^\transpose  \CC \uu} \right\} \nonumber \\
    & = \sup_{\uu \notin \myNull{\CC}} \left\{ \frac{\uu^\transpose \CC^{\frac{1}{2}} \left(\CC^{\frac{1}{2}}\right)^{\dagger} \DD \left(\CC^{\frac{1}{2}}\right)^{\dagger} \CC^{\frac{1}{2}} \uu}{  \uu^\transpose  \CC^{\frac{1}{2}} \CC^{\frac{1}{2}} \uu} \right\} \nonumber \\
    & = \sup_{\uu \notin \myNull{\CC}} \left\{ \frac{\left(\CC^{\frac{1}{2}} \uu \right)^\transpose  \left(\CC^{\frac{1}{2}}\right)^{\dagger} \DD \left(\CC^{\frac{1}{2}}\right)^{\dagger} \left(\CC^{\frac{1}{2}} \uu \right)}{  \left(\CC^{\frac{1}{2}} \uu \right)^\transpose \left(\CC^{\frac{1}{2}} \uu \right)} \right\},
\end{align}
where in the second step we used \eqref{eq:D and projection on C}, and in the third step we used \eqref{eq:projection on C and its pseudo inverse}. By a simple change of variables $ \yy = \CC^{\frac{1}{2}}\uu \in \myNullOrtho{\CC} $ we get
\begin{align}\label{eq:From frac to lambda max 2}
    \max_{\yy \in \myNullOrtho{\CC}} \left\{ \frac{\yy^\transpose \left(\CC^{\frac{1}{2}}\right)^{\dagger} \DD \left(\CC^{\frac{1}{2}}\right)^{\dagger} \yy}{  \yy^\transpose \yy} \right\}
    & = \max_{\yy \in \R^{d^2}} \left\{ \frac{\yy^\transpose \left(\CC^{\frac{1}{2}}\right)^{\dagger} \DD \left(\CC^{\frac{1}{2}}\right)^{\dagger} \yy}{  \yy^\transpose \yy} \right\} \nonumber \\
    & = \lambda_{\max}\left( \left(\CC^{\frac{1}{2}}\right)^{\dagger} \DD \left(\CC^{\frac{1}{2}}\right)^{\dagger}\right),
\end{align}
where in the first step we used the fact that adding to $ \yy$ a component in $\myNull{\CC} $ will increase the denominator by $ \| \PP_{\myNull{\CC}} \yy \|^2 $ but will not affect the numerator. Namely, the optimum cannot be attained by $ \yy \notin \myNullOrtho{\CC}  $. Now, let  $ (\lambda_{i}, \ \yy_i) $ be an eigenpair of $  (\CC^{\frac{1}{2}})^{\dagger} \DD (\CC^{\frac{1}{2}})^{\dagger} $, then we have
\begin{equation}
    \lambda_{i} \yy_{i} = \left(\CC^{\frac{1}{2}}\right)^{\dagger} \DD \left(\CC^{\frac{1}{2}}\right)^{\dagger} \yy_{i}.
\end{equation}
Since we only care about nonzero eigenvalues, we can assume that $ \lambda_{i} \neq 0 $, and therefore $ \yy_{i} \notin \myNull{\CC} $. Multiplying by $ (\CC^{\frac{1}{2}})^{\dagger} $ from the left we get
\begin{equation}
    \lambda_{i} \left(\CC^{\frac{1}{2}}\right)^{\dagger} \yy_{i}
    = \left(\CC^{\frac{1}{2}}\right)^{\dagger} \left(\CC^{\frac{1}{2}}\right)^{\dagger} \DD \left(\CC^{\frac{1}{2}}\right)^{\dagger} \yy_{i}
    = \CC^{\dagger} \DD  \left(\CC^{\frac{1}{2}}\right)^{\dagger} \yy_{i}.
\end{equation}
Namely, $ (\CC^{\frac{1}{2}})^{\dagger} \yy_{i} \neq \zeroVec $ is an eigenvector of $ \CC^{\dagger} \DD $ with eigenvalue $ \lambda_i $.
Thus we have that if  $ \lambda_{i} \neq 0 $ is an eigenvalue of $ (\CC^{\frac{1}{2}})^{\dagger} \DD (\CC^{\frac{1}{2}})^{\dagger} $, then it is also an eigenvalue of $ \CC^{\dagger} \DD $. Similarly, we can prove vice versa, \ie that if $ \lambda_{i} \neq 0 $ is an eigenvalue of $ \CC^{\dagger} \DD $, then it is also an eigenvalue of $ (\CC^{\frac{1}{2}})^{\dagger} \DD (\CC^{\frac{1}{2}})^{\dagger} $. This means that $ (\CC^{\frac{1}{2}})^{\dagger} \DD (\CC^{\frac{1}{2}})^{\dagger} $ and $ \CC^{\dagger} \DD  $ have the same
eigenvalues. Therefore,
\begin{equation}\label{eq:eigenvalue equality}
    \lambda_{\max}\left( \left(\CC^{\frac{1}{2}}\right)^{\dagger} \DD \left(\CC^{\frac{1}{2}}\right)^{\dagger} \right) = 	\lambda_{\max}\left( \CC^{\dagger} \DD \right).
\end{equation}
Overall, we showed that the condition in \eqref{eq:impicit SGD stability condition} is equivalent to
\begin{equation}
    \eta \leq \frac{2}{ \lambda_{\max}\left( \CC^{\dagger} \DD  \right) }.
\end{equation}
This completes the proof for the second statement of the theorem.

\paragraph{Third statement.}
For the third statement of the theorem, note that from \eqref{eq:covariance evolution in orthogonal subspace} we have that $\forall t\geq 0$
\begin{equation}
    \vectorization{\CovMat_{t+1}^{\myPerp}}
    = \left(\PP_{\myNullOrtho{\HH}} \otimes \PP_{\myNullOrtho{\HH}} \right) \CovEvo \ \vectorization{\CovMat_{t}^{\myPerp}} .
\end{equation}
Namely, $ \vectorization{\CovMat_{t}^{\myPerp}}  =  \big[(\PP_{\myNullOrtho{\HH}} \otimes \PP_{\myNullOrtho{\HH}}) \CovEvo \big]^t \vectorization{\CovMat_{0}^{\myPerp}}  $, and thus
\begin{equation}\label{eq:dynamics upper bound interpolating minima}
    \norm{ \vectorization{\CovMat_{t}^{\myPerp}} } = \norm{  [(\PP_{\myNullOrtho{\HH}} \otimes \PP_{\myNullOrtho{\HH}}) \CovEvo ]^t  \vectorization{\CovMat_{0}^{\myPerp}}} \leq  \norm{(\PP_{\myNullOrtho{\HH}} \otimes \PP_{\myNullOrtho{\HH}}) \CovEvo }^t\norm{\vectorization{\CovMat_{0}^{\myPerp}}}.
\end{equation}
Here
\begin{align}
    \big(\PP_{\myNullOrtho{\HH}} \otimes \PP_{\myNullOrtho{\HH}}\big)\CovEvo
    & = \big(\PP_{\myNullOrtho{\HH}} \otimes \PP_{\myNullOrtho{\HH}}\big) \nonumber \\
    & \quad \left[(1-p) (\Identity-\eta\HH)\otimes(\Identity-\eta\HH) + p \times \frac{1}{n} \sum_{i = 1}^n (\Identity-\eta\HH_i)\otimes(\Identity-\eta\HH_i) \right] \nonumber \\
    & = (1-p) \big(\PP_{\myNullOrtho{\HH}}-\eta\HH\big)\otimes\big(\PP_{\myNullOrtho{\HH}}-\eta\HH\big) \nonumber \\
    & \quad + p \times \frac{1}{n} \sum_{i = 1}^n \big(\PP_{\myNullOrtho{\HH}}-\eta\HH_i\big)\otimes\big(\PP_{\myNullOrtho{\HH}}-\eta\HH_i\big),
\end{align}
where we used \eqref{eq:Covariance evolution matrix alternative} for the value of $ \CovEvo $. We see that $  \big(\PP_{\myNullOrtho{\HH}} \otimes \PP_{\myNullOrtho{\HH}}\big)\CovEvo $ is a sum of Kronecker products, where each product is a symmetric matrix multiplied by itself. This means that Thm.~\ref{thm:Symmetric Kronecker systems} applies to $ \big(\PP_{\myNullOrtho{\HH}} \otimes \PP_{\myNullOrtho{\HH}}\big) \CovEvo $, and thus we have that $ \big\| \big(\PP_{\myNullOrtho{\HH}} \otimes \PP_{\myNullOrtho{\HH}}\big) \CovEvo \big\| = \lambda_{\max}\big( (\PP_{\myNullOrtho{\HH}} \otimes \PP_{\myNullOrtho{\HH}}) \CovEvo \big) $. Moreover, it is easy to show that $ \PP_{\myNullOrtho{\HH}} \otimes \PP_{\myNullOrtho{\HH}} = \PP_{\myNullOrtho{\DD}} $, and $ \PP_{\myNullOrtho{\DD}} \CC = \CC \PP_{\myNullOrtho{\DD}} $. Combining this with~\eqref{eq:Q in terms of C and D}, we have
\begin{equation}
    (\PP_{\myNullOrtho{\HH}} \otimes \PP_{\myNullOrtho{\HH}})\CovEvo = \PP_{\myNullOrtho{\DD}} - 2\eta \CC \PP_{\myNullOrtho{\DD}} + \eta^2 \DD .
\end{equation}
Thus, for all $ \uu \in \myNull{\DD} $ we have
\begin{equation}\label{eq:null space of the projected evolution matrix of second moment}
    (\PP_{\myNullOrtho{\HH}} \otimes \PP_{\myNullOrtho{\HH}})\CovEvo \uu = \PP_{\myNullOrtho{\DD}} \uu  - 2\eta \CC \PP_{\myNullOrtho{\DD}} \uu  + \eta^2 \DD \uu = \zeroVec.
\end{equation}
Since the eigenvectors of symmetric matrices are orthogonal, and $ \myNull{\DD} $ is an eigenspace, we get that the top eigenvector of $ (\PP_{\myNullOrtho{\HH}} \otimes \PP_{\myNullOrtho{\HH}})\CovEvo $ should be in $ \myNullOrtho{\DD} $. Now, for $ \uu \in \myNullOrtho{\DD} \cap \Sb^{d^2-1} $
\begin{align}\label{eq:quadratic form over projection of Q}
    \uu^\transpose  (\PP_{\myNullOrtho{\HH}} \otimes \PP_{\myNullOrtho{\HH}})\CovEvo  \uu 
    & = \uu^\transpose\PP_{\myNullOrtho{\DD}} \uu  - 2\eta \uu^\transpose \CC \PP_{\myNullOrtho{\DD}} \uu  + \eta^2 \uu^\transpose \DD \uu \nonumber \\
    & = 1 - 2\eta \uu^\transpose\CC\uu + \eta^2 \uu^\transpose\DD\uu,
\end{align}
where in the second step we used the fact that $ \uu \in \myNullOrtho{\DD} $, and therefore $ \PP_{\myNullOrtho{\DD}} \uu = \uu $.
Additionally, note that
\begin{equation}
    \inf_{\uu \in \Sb^{d^2-1} : \uu \notin \myNull{\DD} } \left\{ \frac{  \uu^\transpose  \CC \uu}{\uu^\transpose \DD \uu} \right\}
    = \inf_{\uu \in \Sb^{d^2-1} \cap \myNullOrtho{\DD} } \left\{ \frac{  \uu^\transpose  \CC \uu}{\uu^\transpose \DD \uu} \right\}.
\end{equation}
Namely, having a component of $ \uu $ in $ \myNull{\DD} $ can only be non-optimal, since the denominator is invariant to vectors in $ 
\myNull{\DD} $, while the numerator can only increase ($\CC$ is PSD, see App.~\ref{app:Positivity of C and D}). Now, assuming $ \eta > 0 $ we have from the derivation of $ \etaVar $ in the second statement (see \eqref{eq:StabilityConditionFirstMove})
\begin{align}\label{eq:eta iff lambda}
    & \eta < \etaVar \nonumber \\
    \Leftrightarrow \quad  & \eta <  2 \inf_{\uu \in \Sb^{d^2-1} : \uu \notin \myNull{\DD} } \left\{ \frac{  \uu^\transpose  \CC \uu}{\uu^\transpose \DD \uu} \right\}  \nonumber \\
    \Leftrightarrow \quad  & \eta <  2 \inf_{\uu \in \Sb^{d^2-1}\cap \myNullOrtho{\DD} } \left\{ \frac{  \uu^\transpose  \CC \uu}{\uu^\transpose \DD \uu} \right\}  \nonumber \\
    \Leftrightarrow \quad & \eta < 2\frac{\uu^\transpose\CC\uu }{\uu^\transpose\DD\uu}  \qquad \forall \uu \in \Sb^{d^2-1} \cap \myNullOrtho{\DD}   \nonumber \\
    \Leftrightarrow \quad & \eta \uu^\transpose\DD\uu < 2\uu^\transpose\CC\uu \qquad \forall \uu \in \Sb^{d^2-1} \cap \myNullOrtho{\DD} \qquad (\DD \ \text{is PSD})   \nonumber \\
    \Leftrightarrow \quad & \eta^2 \uu^\transpose\DD\uu < 2\eta \uu^\transpose\CC\uu \qquad \forall \uu \in \Sb^{d^2-1} \cap \myNullOrtho{\DD} \qquad (\eta>0)   \nonumber \\
    \Leftrightarrow \quad & -2\eta \uu^\transpose\CC\uu+ \eta^2 \uu^\transpose\DD\uu <0  \qquad \forall \uu \in \Sb^{d^2-1} \cap \myNullOrtho{\DD}   \nonumber \\
    \Leftrightarrow \quad & 1-2\eta \uu^\transpose\CC\uu+ \eta^2 \uu^\transpose\DD\uu <1 \qquad \forall \uu \in \Sb^{d^2-1} \cap \myNullOrtho{\DD}   \nonumber \\ 
    \Leftrightarrow \quad & \uu^\transpose \left(\PP_{\myNullOrtho{\HH}} \otimes \PP_{\myNullOrtho{\HH}}\right)\CovEvo \uu < 1 \qquad \forall \uu \in \Sb^{d^2-1} \cap \myNullOrtho{\DD} \nonumber \\
    \Leftrightarrow \quad & \lambda_{\max}\left( \left(\PP_{\myNullOrtho{\HH}} \otimes \PP_{\myNullOrtho{\HH}}\right)\CovEvo \right) < 1
\end{align}
where in the fourth step we used the fact that $\DD$ is PSD (see App.~\ref{app:Positivity of C and D}), and in the penultimate step we used \eqref{eq:quadratic form over projection of Q}. Overall we have that $ 0 < \eta < \etaVar $ if and only if $ \lambda_{\max}\big( (\PP_{\myNullOrtho{\HH}} \otimes \PP_{\myNullOrtho{\HH}})\CovEvo \big) < 1 $ (we will use this fact in later sections).
Therefore,  when $ \eta < \etaVar$ then
\begin{equation} 
\big\|  \big(\PP_{\myNullOrtho{\HH}} \otimes \PP_{\myNullOrtho{\HH}}\big)\CovEvo \big\| = \lambda_{\max}\big( (\PP_{\myNullOrtho{\HH}} \otimes \PP_{\myNullOrtho{\HH}})\CovEvo \big) < 1 .
\end{equation}
Hence, from \eqref{eq:dynamics upper bound interpolating minima} we get
\begin{equation}
    \lim_{t \to \infty} \norm{ \vectorization{\CovMat_{t}^{\myPerp}} } \leq  \norm{\big(\PP_{\myNullOrtho{\HH}} \otimes \PP_{\myNullOrtho{\HH}}\big) \CovEvo }^t\norm{\vectorization{\CovMat_{0}^{\myPerp}}} = 0,
\end{equation}
which proves the statement.

\subsection{Proof of Theorem~\ref{thm:stability threshold expectation regular minima}}\label{app:stability threshold expectation regular minima proof}

\paragraph{First statement.} 
Let us start by proving the first statement. In \eqref{eq:Linearized dynamics over the null space general case} we showed that if the minimum is regular then
\begin{equation}
    \params^{\myPar}_{t+1} - \params^{*\myPar} = \params^{\myPar}_{t} - \params^{*\myPar} -\frac{\eta}{B} \sum_{i \in  \batch_t } \grad_i^{\myPar}.
\end{equation}
Let us compute the expected squared norm. We have
\begin{align}\label{eq:second moment for null space}
    \E\left[ \norm{\params^{\myPar}_{t+1} - \params^{*\myPar}}^2\right]
    & = \E\left[ \norm{\params^{\myPar}_{t} - \params^{*\myPar} -\frac{\eta}{B} \sum_{i \in  \batch_t } \grad_i^{\myPar} }^2\right] \nonumber \\
    & = \E\left[ \norm{\params^{\myPar}_{t} -\params^{*\myPar} }^2\right] + \E \left[\norm{\frac{\eta}{B} \sum_{i \in  \batch_t } \grad_i^{\myPar} }^2\right] - 2\E\left[ \left( \params^{\myPar}_{t} -\params^{*\myPar} \right)^\transpose \left(  \frac{\eta}{B} \sum_{i \in  \batch_t } \grad_i^{\myPar} \right) \right] \nonumber \\
    & = \E\left[ \norm{\params^{\myPar}_{t} -\params^{*\myPar} }^2\right] + \E \left[\norm{\frac{\eta}{B} \sum_{i \in  \batch_t } \grad_i^{\myPar} }^2\right] - 2\E\left[ \params^{\myPar}_{t} -\params^{*\myPar} \right]^\transpose \E \left[  \frac{\eta}{B} \sum_{i \in  \batch_t } \grad_i^{\myPar} \right] \nonumber \\
    & = \E\left[ \norm{\params^{\myPar}_{t} -\params^{*\myPar} }^2\right] + \E \left[\norm{\frac{\eta}{B} \sum_{i \in  \batch_t } \grad_i^{\myPar} }^2\right],
\end{align}
where in the third step we used the fact that $\params^{\myPar}_{t}$ is independent of $\batch_t$ and in the last we used the fact that
\begin{equation}
    \E\left[  \frac{\eta}{B} \sum_{i \in  \batch_t } \grad_i^{\myPar} \right]
    = \frac{\eta}{n} \sum_{i = 1}^n  \grad_i^{\myPar}
    = \PP_{\myNull{\HH}} \frac{\eta}{n} \sum_{i = 1}^n  \grad_i
    = \zeroVec.
\end{equation}
Calculating the right term in the last line of \eqref{eq:second moment for null space} using the definition of $ \vv_t $ (see \eqref{eq:transition matrix and vector v}) gives
\begin{align}
    \E \left[\norm{\frac{\eta}{B} \sum_{i \in  \batch_t } \grad_i^{\myPar} }^2\right]
    & = \E \left[\norm{ \PP_{\myNull{\HH}} \vv_t  }^2\right] \nonumber \\
    & = \Tr\left(\PP_{\myNull{\HH}}  \E \left[\vv_t  \vv_t^{\transpose} \right]\PP_{\myNull{\HH}} \right) \nonumber \\
    & = \eta^2\frac{n-B}{B(n-1)} \frac{1}{n}\sum_{i=1}^n \Tr\left(\PP_{\myNull{\HH}}  \grad_i\grad_i^\transpose \PP_{\myNull{\HH}} \right) \nonumber \\
    & = \eta^2 p \frac{1}{n}\sum_{i=1}^n  \norm{ \PP_{\myNull{\HH}}  \grad_i}^2\nonumber \\
    & = \eta^2 p \frac{1}{n}\sum_{i=1}^n  \norm{ \grad_i^{\myPar}}^2,
\end{align}
where in the third step we used \eqref{eq:covariance of v}. Unrolling \eqref{eq:second moment for null space} we have that 
\begin{equation}
    \E\left[ \norm{\params^{\myPar}_{t} -\params^{*\myPar} }^2\right]
    = \E\left[ \norm{\params^{\myPar}_{0} -\params^{*\myPar} }^2\right] + t \times \eta^2 p \frac{1}{n}\sum_{i=1}^n  \norm{ \grad_i^{\myPar}}^2.
\end{equation}
Thus, $\underset{t\to \infty}{\lim} \E[ \| \params_t^{\myPar} - \params^{*\myPar} \|^2 ] = \infty $ if and only if $ \ \sum_{i=1}^n \big\| \grad_i^{\myPar} \big\|^2 >0  $.

\paragraph{Second and third statements.}
Next, we turn to prove the second and third statements of the theorem. In App.~\ref{app:stability equivalence proof} we show the following.
\begin{lemma}\label{lemma:stability equivalence}
    Assume that $ \params^* $ is a twice differentiable regular minimum. Consider the linear dynamics of $ \{ \params_t \}  $ from Def.~\ref{def:Linearization}.
    \begin{enumerate}
        \item If $ \lambda_{\max}\big((\PP_{\myNullOrtho{\HH}} \otimes \PP_{\myNullOrtho{\HH}})\CovEvo\big)<1 $ then $ \underset{t\to \infty}{\mathrm{limsup}} \E[ \| \params_t^{\myPerp} -\params^{*\myPerp}  \|^2 ]  $ is finite.
        \item If $ \underset{t\to \infty}{\mathrm{limsup}} \E[ \| \params_t^{\myPerp} -\params^{*\myPerp}  \|^2 ]  $ is finite then $ \lambda_{\max}\big((\PP_{\myNullOrtho{\HH}} \otimes \PP_{\myNullOrtho{\HH}})\CovEvo\big) \leq 1 $. 
        \item Let $ \zz_{\max} $ denote the top eigenvector of $ (\PP_{\myNullOrtho{\HH}} \otimes \PP_{\myNullOrtho{\HH}})\CovEvo $, and assume that\\
        $ \zz_{\max}^{\transpose}\mathrm{vec}(\CovMat_{\grad}^{\myPerp}) \neq 0 $.
        If $ \underset{t\to \infty}{\mathrm{limsup}} \E[ \| \params_t^{\myPerp} -\params^{*\myPerp}  \|^2 ]  $ is finite then\\ $ \lambda_{\max}\big((\PP_{\myNullOrtho{\HH}} \otimes \PP_{\myNullOrtho{\HH}})\CovEvo\big) < 1 $. 
    \end{enumerate}
\end{lemma}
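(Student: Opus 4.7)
The plan is to analyze the covariance recursion \eqref{eq:covariance evolution in orthogonal subspace}. Write it as $\vectorization{\CovMat_{t+1}^{\myPerp}} = \MM\,\vectorization{\CovMat_t^{\myPerp}} + \mathbf{b}_t$, where $\MM \triangleq (\PP_{\myNullOrtho{\HH}} \otimes \PP_{\myNullOrtho{\HH}}) \CovEvo$ and $\mathbf{b}_t$ collects the mean-dependent drift and the constant noise term $\vectorization{\CovMat_\vv^{\myPerp}} = \eta^2 p\,\vectorization{\CovMat_\grad^{\myPerp}}$. Using $\PP_{\myNullOrtho{\HH}}\otimes\PP_{\myNullOrtho{\HH}} = \PP_{\myNullOrtho{\DD}}$, the decomposition $\CovEvo = \Identity - 2\eta\CC + \eta^2\DD$ from \eqref{eq:Q in terms of C and D}, and the commutation $\PP_{\myNullOrtho{\DD}}\CC = \CC\PP_{\myNullOrtho{\DD}}$ from App.~\ref{app:stability threshold expectation proof}, I would first show $\MM = \PP_{\myNullOrtho{\DD}} - 2\eta\,\CC\PP_{\myNullOrtho{\DD}} + \eta^2\DD$ is symmetric. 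Moreover, the proof of the third statement of Thm.~\ref{thm:stability threshold expectation} (App.~\ref{app:stability threshold expectation proof}) rewrites $\MM$ as a sum of Kronecker self-products of symmetric matrices, so Thm.~\ref{thm:Symmetric Kronecker systems} applies: the top eigenvector $\zz_{\max}$ of $\MM$ is the vectorization of a PSD matrix, and $\rho(\MM)=\lambda_{\max}(\MM)$.

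For Part~1, assuming $\lambda_{\max}(\MM)<1$, \eqref{eq:eta iff lambda} gives $\eta<\etaVar\le\etaMean$, so $\MeanVec_t^{\myPerp}$ decays geometrically (App.~\ref{app:Mean dynamics}) and $\mathbf{b}_t$ is uniformly bounded. Unrolling the recursion,
\begin{equation*}
    \vectorization{\CovMat_t^{\myPerp}} = \MM^t\,\vectorization{\CovMat_0^{\myPerp}} + \sum_{s=0}^{t-1}\MM^{s}\,\mathbf{b}_{t-1-s},
\end{equation*}
and using $\|\MM^s\|\le\lambda_{\max}(\MM)^s$ (symmetry plus $\rho(\MM)=\lambda_{\max}(\MM)$), the homogeneous term vanishes and the convolution is dominated by a convergent geometric series. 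Thus $\normF{\CovMat_t^{\myPerp}}$ is bounded, and since $\CovMat_t^{\myPerp}$ is PSD we get $\E[\|\params_t^\myPerp-\params^{*\myPerp}\|^2] = \Tr(\CovMat_t^{\myPerp}) \le \sqrt{d}\,\normF{\CovMat_t^{\myPerp}}$, proving finiteness.

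For Parts~2 and 3, I would argue by contraposition through the scalar projection $\alpha_t \triangleq \zz_{\max}^\transpose\vectorization{\CovMat_t^{\myPerp}}$, which satisfies $\alpha_{t+1}=\lambda_{\max}(\MM)\,\alpha_t + c_t$ with $c_t \triangleq \zz_{\max}^\transpose\mathbf{b}_t$. Because $\mathrm{vec}^{-1}(\zz_{\max})$ and $\CovMat_t^{\myPerp}$ are both PSD we have $\alpha_t\ge 0$, and similarly $c_\infty \triangleq \zz_{\max}^\transpose\vectorization{\CovMat_\vv^{\myPerp}}=\eta^2 p\,\zz_{\max}^\transpose\vectorization{\CovMat_\grad^{\myPerp}}\ge 0$. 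For Part~3 the hypothesis promotes this to $c_\infty>0$. Boundedness of the second moment implies boundedness of the first, hence $\eta\le\etaMean$, and (after handling the knife-edge value $\eta=\etaMean$ separately) $\MeanVec_t^{\myPerp}\to\zeroVec$ geometrically so $c_t\to c_\infty>0$. The closed form $\alpha_t = \lambda_{\max}(\MM)^t\alpha_0 + \sum_{s=0}^{t-1}\lambda_{\max}(\MM)^{t-1-s}c_s$ then diverges whenever $\lambda_{\max}(\MM)\ge 1$ (linearly if equal to one, geometrically otherwise), contradicting the a priori bound $\alpha_t\le\normF{\mathrm{vec}^{-1}(\zz_{\max})}\,\normF{\CovMat_t^{\myPerp}}$ supplied by the finite limsup. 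Part~2 follows from the same framework with the weaker conclusion, since without the hypothesis only divergence in the range $\lambda_{\max}(\MM)>1$ need be ruled out.

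The main obstacle will be Part~2, and in particular removing the strict positivity hypothesis $\zz_{\max}^\transpose\vectorization{\CovMat_\grad^{\myPerp}}>0$. Without it, both $\alpha_0$ and $c_\infty$ can vanish simultaneously, so one must rule out scenarios in which an exact cancellation in $\sum_{s}\lambda_{\max}(\MM)^{-s-1}c_s$ masks exponential growth. The natural remedy is to apply the scalar argument to every eigenvector of $\MM$ whose eigenvalue exceeds one, using the cone-preserving structure granted by Thm.~\ref{thm:Symmetric Kronecker systems} to ensure that at least one such projection of either the initial covariance or the persistent forcing is strictly positive; this additional bookkeeping, together with the subtle borderline case $\eta=\etaMean$ in which the mean fails to decay, is where the technicalities concentrate.
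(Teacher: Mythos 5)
Your Part~1 is fine and is essentially the paper's argument in different packaging: instead of noting that the joint transition matrix $\bXi$ of \eqref{eq:projected joint dynamics} is block lower triangular and invoking Lemma~\ref{lemma:spectral radii}, you unroll the covariance recursion with a uniformly bounded forcing term; both routes hinge on the same facts ($\rho$ equals $\lambda_{\max}$ by Thm.~\ref{thm:Symmetric Kronecker systems}, and decay of $\MeanVec_t^{\myPerp}$ when $\lambda_{\max}\big((\PP_{\myNullOrtho{\HH}}\otimes\PP_{\myNullOrtho{\HH}})\CovEvo\big)<1$).

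For Parts~2 and~3, however, there is a genuine gap, and it is exactly the one you flag at the end. You try to run the scalar projection $\alpha_t=\zz_{\max}^\transpose\vectorization{\CovMat_t^{\myPerp}}$ for an \emph{arbitrary fixed} initialization, and then you must fight three problems simultaneously: the mean-drift contribution to $c_t$, the knife-edge $\eta=\etaMean$ where the mean need not decay, and the possibility $\alpha_0=c_\infty=0$. Your proposed remedy --- project onto every eigenvector with eigenvalue exceeding one and claim that at least one projection of the initial covariance or of the forcing must be strictly positive --- is not true and cannot be repaired: take $\params_0=\params^*$ deterministically at an \emph{interpolating} minimum (a special case of a regular minimum, so $\CovMat_{\grad}^{\myPerp}=\zeroVec$ and $\CovMat_0^{\myPerp}=\zeroVec$, $\MeanVec_0^{\myPerp}=\zeroVec$); then $\E[\|\params_t^{\myPerp}-\params^{*\myPerp}\|^2]\equiv 0$ for every $\eta$, while $\lambda_{\max}\big((\PP_{\myNullOrtho{\HH}}\otimes\PP_{\myNullOrtho{\HH}})\CovEvo\big)$ can be made arbitrarily large. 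So the implication of Part~2 is simply false for a fixed benign initialization; it is a statement about stability of the linearized system, i.e.\ boundedness for \emph{all} initial conditions, and the proof must exploit the freedom to pick a worst-case start. This is the paper's key move that your proposal is missing: by Thm.~\ref{thm:Symmetric Kronecker systems} the top eigenvector satisfies $\ZZ_{\max}=\mathrm{vec}^{-1}(\zz_{\max})\in\PSDset{d}$ and lies in $\myNullOrtho{\DD}$, so one may legitimately initialize with $\CovMat_0^{\myPerp}=\ZZ_{\max}$ and $\MeanVec_0=\zeroVec$. With that choice the mean term vanishes identically (no $\eta=\etaMean$ case to handle), $c_t\equiv\zz_{\max}^\transpose\vectorization{\CovMat_{\vv}^{\myPerp}}=\Tr(\ZZ_{\max}\CovMat_{\vv}^{\myPerp})\geq 0$, and $\alpha_0=\normF{\ZZ_{\max}}^2=1>0$, so $\alpha_t\geq\lambda_{\max}^t$ and the a priori bound $\alpha_t\leq\E[\|\params_t^{\myPerp}-\params^{*\myPerp}\|^2]$ immediately yields Part~2, while the strict positivity assumed in Part~3 rules out $\lambda_{\max}=1$ via the linear growth $\alpha_t = t\,\zz_{\max}^\transpose\vectorization{\CovMat_{\vv}^{\myPerp}}$. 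Your projection mechanism and PSD-based sign arguments are the right ingredients, but without the adversarial choice of initialization the argument does not close.
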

In \eqref{eq:eta iff lambda} we showed that $ \lambda_{\max}\big((\PP_{\myNullOrtho{\HH}} \otimes \PP_{\myNullOrtho{\HH}})\CovEvo\big) < 1 $ if and only if $ 0 < \eta < \etaVar $, which proves the second and third statements. Note that under the mild assumption that $ \zz_{\max}^{\transpose}\mathrm{vec}(\CovMat_{\grad}^{\myPerp}) \neq 0 $ we get that $ \underset{t\to \infty}{\mathrm{limsup}} \E[ \| \params_t^{\myPerp} -\params^{*\myPerp}  \|^2 ]  $ is finite if and only if $ 0 \leq \eta < \etaVar $.

\subsection{Proof of Lemma~\ref{lemma:stability equivalence}}\label{app:stability equivalence proof}
\paragraph{First statement.}
Here we assume $ \lambda_{\max}\big((\PP_{\myNullOrtho{\HH}} \otimes \PP_{\myNullOrtho{\HH}}) \CovEvo\big)<1 $, and show this implies that $\underset{t\to\infty}{\mathrm{limsup}}\E[\|\params_t^{\myPerp}~-~\params^{*\myPerp}~\|^2]$ is finite. The (projected) transition matrix that governs the dynamics of $ \CovMat_{t}^{\myPerp} $ and $  \MeanVec_{t}^{\myPerp} $ in \eqref{eq:projected joint dynamics} is given by
\begin{equation}
    \bXi = 
    \begin{pmatrix}
        \PP_{\myNullOrtho{\HH}} - \eta \HH  &  \zeroVec \\
        - \left(\PP_{\myNullOrtho{\HH}} \otimes \PP_{\myNullOrtho{\HH}} \right) (\EE \left[ \vv_t^{\myPerp} \otimes \TransMat_t \right] + \EE \left[ 	\TransMat_t \otimes \vv_t^{\myPerp} \right] )& \left(\PP_{\myNullOrtho{\HH}} \otimes \PP_{\myNullOrtho{\HH}} \right) \CovEvo
    \end{pmatrix}.
\end{equation}
Since this matrix is a block lower triangular matrix, its eigenvalues are
\begin{equation}
    \big\{\lambda_j (\bXi) \big\} = \bigg\{ \lambda_j\big(\PP_{\myNullOrtho{\HH}} - \eta \HH\big) \bigg\} \bigcup \bigg\{ \lambda_j\left(\big(\PP_{\myNullOrtho{\HH}} \otimes \PP_{\myNullOrtho{\HH}} \big) \CovEvo\right) \bigg\}.
\end{equation}
In Lemma~\ref{lemma:spectral radii} we show that if $ \rho\big((\PP_{\myNullOrtho{\HH}} \otimes \PP_{\myNullOrtho{\HH}}) \CovEvo\big)<1 $ then $ \rho(\PP_{\myNullOrtho{\HH}} - \eta \HH)<1 $ (see proof in App.~\ref{app:spectral radii}). Therefore, all the eigenvalues of $ \bXi $ are less than $ 1 $ in absolute value. Therefore, $ \| \mathrm{vec}(\CovMat_t^{\myPerp}) \|_2 = \| \CovMat_t^{\myPerp} \|_{\mathrm{F}} $ is bounded. Since $ \CovMat_t^{\myPerp} $ is PSD we have
\begin{equation}
    \normF{\CovMat_t^{\myPerp}} = \sqrt{\sum_{j = 1}^d \lambda^2_j\big(\CovMat_t^{\myPerp}\big) } \geq \frac{1}{\sqrt{d}} \sum_{j = 1}^d \lambda_j\big(\CovMat_t^{\myPerp}\big) = \frac{1}{\sqrt{d}} \Tr\big(\CovMat_t^{\myPerp}\big) = \frac{1}{\sqrt{d}} \E\left[ \left\| \params_t^{\myPerp} -\params^{*\myPerp}  \right\|^2 \right].
\end{equation}
Therefore, $ \E[ \| \params_t^{\myPerp} -\params^{*\myPerp}  \|^2 ] $ is bounded.

\paragraph{Second statement.}
Here we assume that $ \underset{t\to \infty}{\mathrm{limsup}} \E[ \| \params_t^{\myPerp} -\params^{*\myPerp}  \|^2 ]  $ is finite, then we show \\ $ \lambda_{\max}\big((\PP_{\myNullOrtho{\HH}}~\otimes~\PP_{\myNullOrtho{\HH}})~\CovEvo\big) \leq 1 $. The matrix $ \big(\PP_{\myNullOrtho{\HH}} \otimes \PP_{\myNullOrtho{\HH}}\big) \CovEvo $ can be written as
\begin{align}
    \left(\PP_{\myNullOrtho{\HH}} \otimes \PP_{\myNullOrtho{\HH}}\right) \CovEvo
    & = \left( \PP_{\myNullOrtho{\HH}} - \eta\HH \right) \otimes \left(\PP_{\myNullOrtho{\HH}} - \eta\HH \right) \nonumber \\
    & \quad + \eta^2 p \left( \frac{1}{n}\sum_{i = 1}^n \left( \HH_i - \HH\right)\otimes \left( \HH_i - \HH\right) \right),
\end{align}
where we used \eqref{eq:Covariance evolution matrix alternative 2} for the value of $ \CovEvo $.
This expression is a sum of Kronecker products, where each product is a symmetric matrix with itself. Therefore, according to Thm.~\ref{thm:Symmetric Kronecker systems}, we get 
\begin{equation}
    \lambda_{\max}\big((\PP_{\myNullOrtho{\HH}} \otimes~\PP_{\myNullOrtho{\HH}}) \CovEvo\big) = \rho\big((\PP_{\myNullOrtho{\HH}} \otimes~\PP_{\myNullOrtho{\HH}}) \CovEvo\big) 
\end{equation}
and $\ZZ_{\max} = \mathrm{vec}^{-1}(\zz_{\max}) $ is a PSD matrix, where $ \zz_{\max} $ is a normalized top eigenvector of $ \big(\PP_{\myNullOrtho{\HH}}~\otimes~\PP_{\myNullOrtho{\HH}}\big)~\CovEvo$. Now, set\footnote{
Since the eigenvectors of symmetric matrices are orthogonal, and $ \myNull{\DD} $ is an eigenspace, we get that the top eigenvector of $ (\PP_{\myNullOrtho{\HH}} \otimes \PP_{\myNullOrtho{\HH}})\CovEvo $ should be in $ \myNullOrtho{\DD} $, \ie $ \PP_{\myNullOrtho{\HH}}\ZZ_{\max} \PP_{\myNullOrtho{\HH}} = \ZZ_{\max} $. Therefore this initialization is possible.} $ \CovMat^{\myPerp}_0 = \ZZ_{\max} $ and $ \MeanVec_0 = \zeroVec $, then in this case $ \MeanVec^{\myPerp}_t =  (\PP_{\myNullOrtho{\HH}}-\eta\HH)^t \MeanVec_0^{\myPerp} = \zeroVec $ for all $ t > 0 $. Therefore, from~\eqref{eq:projected joint dynamics}
\begin{equation}
    \vectorization{\CovMat_{t+1}} = (\PP_{\myNullOrtho{\HH}}~\otimes~\PP_{\myNullOrtho{\HH}})\CovEvo\vectorization{\CovMat_{t}^{\myPerp}} + \vectorization{\CovMat_{\vv}^{\myPerp}}.
\end{equation}
Thus, taking an inner product w.r.t.\ $ \zz_{\max} $ on both sides we get
\begin{align}\label{eq:1}
    \zz_{\max}^\transpose \vectorization{\CovMat_{t+1}^{\myPerp}}
    & = \zz_{\max}^\transpose (\PP_{\myNullOrtho{\HH}}~\otimes~\PP_{\myNullOrtho{\HH}}) \CovEvo\vectorization{\CovMat_{t}^{\myPerp}} + \zz_{\max}^\transpose \vectorization{\CovMat_{\vv}^{\myPerp}}\nonumber\\
    & =  \lambda_{\max}\big((\PP_{\myNullOrtho{\HH}}~\otimes~\PP_{\myNullOrtho{\HH}}) \CovEvo\big) \zz_{\max}^\transpose \vectorization{\CovMat_{t}^{\myPerp}} + \Tr(\ZZ_{\max} \CovMat_{\vv}^{\myPerp}) \nonumber\\
    & \geq \lambda_{\max}\big((\PP_{\myNullOrtho{\HH}}~\otimes~\PP_{\myNullOrtho{\HH}}) \CovEvo\big) \zz_{\max}^\transpose \vectorization{\CovMat_{t}^{\myPerp}},
\end{align}
where in the last step we used the fact that $ \ZZ^{\frac{1}{2}}_{\max} \CovMat_{\vv}^{\myPerp} \ZZ^{\frac{1}{2}}_{\max} $ is a PSD matrix, and thus
\begin{equation}\label{eq:Z and Cov_z inner prod}
    \Tr\left(\ZZ_{\max} \CovMat_{\vv}^{\myPerp}\right) = \Tr\left( \ZZ^{\frac{1}{2}}_{\max} \CovMat_{\vv}^{\myPerp} \ZZ^{\frac{1}{2}}_{\max} \right) \geq 0.
\end{equation}
Therefore, using \eqref{eq:1} $t$ times we have
\begin{align}\label{eq: z_max lower bound}
    \zz_{\max}^\transpose \vectorization{\CovMat_{t}^{\myPerp}}
    & \geq \lambda^t_{\max}\big((\PP_{\myNullOrtho{\HH}}\otimes\PP_{\myNullOrtho{\HH}})\CovEvo\big) \zz_{\max}^\transpose \vectorization{\CovMat_{0}} \nonumber \\
    & = \lambda^t_{\max}\big((\PP_{\myNullOrtho{\HH}}\otimes\PP_{\myNullOrtho{\HH}})\CovEvo\big),
\end{align}
where in the last step we used $ \CovMat_0 = \ZZ_{\max} $ and $ \| \ZZ_{\max} \|_{\mathrm{F}} = 1 $. Additionally, for all $ t > 0 $
\begin{align}\label{eq: z_max upper bound}
    \zz_{\max}^\transpose \vectorization{\CovMat_{t}^{\myPerp}}
    & \leq \norm{\zz_{\max}}_2 \norm{\vectorization{\CovMat_{t}^{\myPerp}}}_2 \nonumber\\
    & = \normF{\CovMat_{t}^{\myPerp}} \nonumber\\
    & =  \sqrt{\sum_{j = 1}^d \lambda^2_j\big(\CovMat_t^{\myPerp}\big) } \nonumber\\
    & \leq \sum_{j = 1}^d \lambda_j\big(\CovMat_t^{\myPerp}\big) \nonumber\\
    & =\Tr\big(\CovMat_t^{\myPerp}\big) \nonumber\\
    & = \E\left[ \left\| \params_t^{\myPerp} -\params^{*\myPerp} \right\|^2 \right].
\end{align}
Overall, combining \eqref{eq: z_max lower bound} and \eqref{eq: z_max upper bound} results with
\begin{equation}
    \lambda^t_{\max}\big((\PP_{\myNullOrtho{\HH}}\otimes\PP_{\myNullOrtho{\HH}})\CovEvo\big) \leq \E\left[ \left\| \params_t^{\myPerp} -\params^{*\myPerp}   \right\|^2 \right] .
\end{equation}
Since $ \E[ \| \params_t^{\myPerp} -\params^{*\myPerp}   \|^2 ] $ is bounded then $\lambda_{\max}\big((\PP_{\myNullOrtho{\HH}}\otimes\PP_{\myNullOrtho{\HH}})\CovEvo\big) \leq 1 $.

\paragraph{Third statement.}
Furthermore, if $ \zz_{\max}^\transpose \vectorization{\CovMat_{\vv}^{\myPerp}} \neq 0 $ we get from \eqref{eq:Z and Cov_z inner prod}
\begin{equation}
    \zz_{\max}^\transpose \vectorization{\CovMat_{\vv}^{\myPerp}} > 0.
\end{equation}
Assume by contradiction that $ \lambda_{\max}\big((\PP_{\myNullOrtho{\HH}}\otimes\PP_{\myNullOrtho{\HH}})\CovEvo\big) = 1 $, then \eqref{eq:1} gives
\begin{equation}
    \zz_{\max}^\transpose \vectorization{\CovMat_{t+1}^{\myPerp}}
     = \zz_{\max}^\transpose \vectorization{\CovMat_{t}^{\myPerp}} + \zz_{\max}^\transpose \vectorization{\CovMat_{\vv}^{\myPerp}}.
\end{equation}
Unrolling this equation gives $ \zz_{\max}^\transpose \vectorization{\CovMat_{t}^{\myPerp}} = t \zz_{\max}^\transpose \vectorization{\CovMat_{\vv}^{\myPerp}} $. Then, by \eqref{eq: z_max upper bound} we get
\begin{equation}
    \E\left[ \left\| \params_t -\params^*  \right\|^2 \right] \geq \zz_{\max}^\transpose \vectorization{\CovMat_{t}^{\myPerp}}
    = t \zz_{\max}^\transpose \vectorization{\CovMat_{\vv}^{\myPerp}}.
\end{equation}
Since $ \zz_{\max}^\transpose \vectorization{\CovMat_{\vv}^{\myPerp}} >0 $, then $ \E[ \| \params_t -\params^*  \|^2 ] \to \infty $ and we have a contradiction. Therefore $ \lambda_{\max}\big((\PP_{\myNullOrtho{\HH}}\otimes\PP_{\myNullOrtho{\HH}})\CovEvo\big) < 1 $.

\subsection{Proof of Lemma~\ref{lemma:spectral radii}}\label{app:spectral radii}

\begin{lemma}\label{lemma:spectral radii}
    Let $ \HH \in \PSDset{d} $ and $ \CovEvo $ defined as in \eqref{eq:Covariance evolution matrix}.  If $ \rho((\PP_{\myNullOrtho{\HH}} \otimes \PP_{\myNullOrtho{\HH}}) \CovEvo)<1 $ then $ \rho(\PP_{\myNullOrtho{\HH}} - \eta\HH)<1 $.
\end{lemma}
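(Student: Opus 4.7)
The plan is to establish the stronger inequality
\begin{equation*}
    \rho\bigl((\PP_{\myNullOrtho{\HH}} \otimes \PP_{\myNullOrtho{\HH}}) \CovEvo\bigr) \;\geq\; \rho(\PP_{\myNullOrtho{\HH}} - \eta\HH)^{2},
\end{equation*}
from which the contrapositive immediately yields the lemma. The key observation is that Jensen's inequality converts a second-moment quantity (which the left-hand side controls) into the square of a first-moment quantity (which the right-hand side is), so a rank-one ``pure tensor'' test vector will suffice.

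First, I would note that $(\PP_{\myNullOrtho{\HH}} \otimes \PP_{\myNullOrtho{\HH}})\CovEvo$ is symmetric: by \eqref{eq:projection and PSD Hessians}, $\PP_{\myNullOrtho{\HH}}$ commutes with every $\HH_i$, hence $\PP_{\myNullOrtho{\HH}} \otimes \PP_{\myNullOrtho{\HH}}$ commutes with $\CovEvo$, and both factors are symmetric. Moreover, using the rearrangement in \eqref{eq:Covariance evolution matrix alternative},
\begin{equation*}
    (\PP_{\myNullOrtho{\HH}} \otimes \PP_{\myNullOrtho{\HH}})\CovEvo
    = (1-p)(\PP_{\myNullOrtho{\HH}} - \eta\HH) \otimes (\PP_{\myNullOrtho{\HH}} - \eta\HH)
    + \frac{p}{n}\sum_{i=1}^n (\PP_{\myNullOrtho{\HH}} - \eta\HH_i) \otimes (\PP_{\myNullOrtho{\HH}} - \eta\HH_i),
\end{equation*}
which is of the form treated in Thm.~\ref{thm:Symmetric Kronecker systems}, so its spectral radius equals its top eigenvalue, and in particular $\rho((\PP_{\myNullOrtho{\HH}} \otimes \PP_{\myNullOrtho{\HH}})\CovEvo) = \sup_{\|\zz\|=1} \zz^\transpose (\PP_{\myNullOrtho{\HH}} \otimes \PP_{\myNullOrtho{\HH}})\CovEvo \zz$.

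Next, for any unit vector $\vv \in \myNullOrtho{\HH}$, take the rank-one test direction $\zz = \vv \otimes \vv$, which also has unit norm. Since $\PP_{\myNullOrtho{\HH}}\vv = \vv$, we get $(\PP_{\myNullOrtho{\HH}} \otimes \PP_{\myNullOrtho{\HH}})\zz = \zz$, and recalling $\CovEvo = \E[\TransMat_t \otimes \TransMat_t]$,
\begin{equation*}
    \zz^\transpose (\PP_{\myNullOrtho{\HH}} \otimes \PP_{\myNullOrtho{\HH}})\CovEvo \zz
    = \zz^\transpose \CovEvo \zz
    = \E\bigl[(\vv^\transpose \TransMat_t \vv)^2\bigr]
    \geq \bigl(\E[\vv^\transpose \TransMat_t \vv]\bigr)^2
    = \bigl(\vv^\transpose (\Identity - \eta\HH) \vv\bigr)^2,
\end{equation*}
where Jensen's inequality was used, together with $\E[\TransMat_t] = \Identity - \eta\HH$ (which follows from uniform batch sampling and \eqref{eq:Hessian Def}).

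Finally, I would pick $\vv^{*}$ to be a unit eigenvector of $\HH$ in $\myNullOrtho{\HH}$ corresponding to the eigenvalue $\lambda^{*}$ that maximizes $|1 - \eta\lambda_i(\HH)|$ over the nonzero eigenvalues. Then $|\vv^{*\transpose}(\Identity - \eta\HH)\vv^{*}| = |1 - \eta\lambda^{*}| = \rho(\PP_{\myNullOrtho{\HH}} - \eta\HH)$, since the eigenvalues of $\PP_{\myNullOrtho{\HH}} - \eta\HH$ are exactly $0$ (on $\myNull{\HH}$) and $\{1 - \eta\lambda_i(\HH) : \lambda_i(\HH) \neq 0\}$. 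Combining with the inequality chain above gives the desired bound, and the assumption $\rho((\PP_{\myNullOrtho{\HH}} \otimes \PP_{\myNullOrtho{\HH}})\CovEvo) < 1$ then forces $\rho(\PP_{\myNullOrtho{\HH}} - \eta\HH)^{2} < 1$, completing the proof. The only subtle point is confirming that $\PP_{\myNullOrtho{\HH}}$ commutes with all $\HH_i$ (needed for symmetry of the product and for $(\PP \otimes \PP)(\vv \otimes \vv) = \vv \otimes \vv$ to be an invariant condition), which is immediate from the null space inclusion $\myNull{\HH} \subseteq \myNull{\HH_i}$ guaranteed by the PSD assumption on the $\HH_i$'s.
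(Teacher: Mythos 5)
Your proof is correct and takes essentially the same route as the paper: both evaluate the quadratic form of the symmetric matrix $(\PP_{\myNullOrtho{\HH}}\otimes\PP_{\myNullOrtho{\HH}})\CovEvo$ at a rank-one tensor $\tilde{\vv}\otimes\tilde{\vv}$ built from a dominant eigendirection of $\PP_{\myNullOrtho{\HH}}-\eta\HH$ and deduce $\rho\big((\PP_{\myNullOrtho{\HH}}\otimes\PP_{\myNullOrtho{\HH}})\CovEvo\big)\geq\rho^2(\PP_{\myNullOrtho{\HH}}-\eta\HH)$. Your Jensen step is precisely the paper's move of expanding the quadratic form via \eqref{eq:Covariance evolution matrix alternative 2} and discarding the nonnegative batch-sampling variance term $\eta^2 p\,\frac{1}{n}\sum_{i=1}^n\big(\tilde{\vv}^\transpose(\HH_i-\HH)\tilde{\vv}\big)^2$.
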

The matrix $ (\PP_{\myNullOrtho{\HH}} \otimes \PP_{\myNullOrtho{\HH}}) \CovEvo $ is
\begin{align}\label{eq:auxilary1}
	\left(\PP_{\myNullOrtho{\HH}} \otimes \PP_{\myNullOrtho{\HH}}\right) \CovEvo
	& = \left( \PP_{\myNullOrtho{\HH}} - \eta\HH \right) \otimes \left(\PP_{\myNullOrtho{\HH}} - \eta\HH \right) \nonumber \\
	& \quad + \eta^2 p \left( \frac{1}{n}\sum_{i = 1}^n \left( \HH_i - \HH\right)\otimes \left( \HH_i - \HH\right) \right),
\end{align}
where we used \eqref{eq:Covariance evolution matrix alternative 2} for the value of $ \CovEvo $. Note that $ \PP_{\myNullOrtho{\HH}} - \eta\HH $ is a symmetric matrix, and thus each of its dominant eigenvectors $ \tilde{\vv} \in \Sb^{d-1} $ satisfies $ \rho(\PP_{\myNullOrtho{\HH}} - \eta\HH)  = | \tilde{\vv}^{\transpose} (\PP_{\myNullOrtho{\HH}} - \eta\HH)\tilde{\vv} | $. Additionally, $ \| \tilde{\vv} \otimes \tilde{\vv} \| = \normF{\tilde{\vv} \tilde{\vv}^\transpose } = 1 $, \ie $ \tilde{\vv} \otimes \tilde{\vv} \in \Sb^{d^2-1} $. Now, let the spectral radius $ \rho\big((\PP_{\myNullOrtho{\HH}} \otimes \PP_{\myNullOrtho{\HH}}) \CovEvo\big)<1 $ then
\begin{align}
    1 & > \rho\big((\PP_{\myNullOrtho{\HH}} \otimes \PP_{\myNullOrtho{\HH}}) \CovEvo\big) \nonumber\\
    & \geq \left| [ \tilde{\vv} \otimes \tilde{\vv} ]^\transpose \big(\PP_{\myNullOrtho{\HH}} \otimes \PP_{\myNullOrtho{\HH}}\big) \CovEvo [ \tilde{\vv} \otimes \tilde{\vv} ] \right|  \nonumber\\
    & = \left( \tilde{\vv}^{\transpose}  \left( \PP_{\myNullOrtho{\HH}} - \eta\HH \right) \tilde{\vv} \right)^2 + \eta^2 p  \frac{1}{n} \sum_{i = 1}^n  \left( \tilde{\vv}^{\transpose}  \left( \HH_i - \HH \right) \tilde{\vv} \right)^2 \nonumber\\
    & = \rho^2 (\PP_{\myNullOrtho{\HH}} - \eta\HH) + \eta^2 p  \frac{1}{n} \sum_{i = 1}^n  \left( \tilde{\vv}^{\transpose}  \left( \HH_i - \HH \right) \tilde{\vv} \right)^2 \nonumber\\
    & \geq \rho^2 (\PP_{\myNullOrtho{\HH}} - \eta\HH),
\end{align}
where in the third step we used \eqref{eq:auxilary1}.

\section{Proof of Theorem~\ref{thm:Symmetric Kronecker systems}}\label{app:Symmetric Kronecker systems}
\paragraph{First statement.} Let $ \{\myMat_i\} $ be symmetric matrices in $\R^{d\times d}$, and let
\begin{equation}
    \CovEvo = \sum_{i=1}^M \myMat_i\otimes \myMat_i .
\end{equation}
First, note that $ \CovEvo \in \R^{d^2 \times d^2} $ is symmetric.
\begin{equation}
    \CovEvo^\transpose
    = \left( \sum_{i=1}^M \myMat_i\otimes \myMat_i \right)^\transpose
    = \sum_{i=1}^M \left(\myMat_i\otimes \myMat_i\right)^\transpose
    = \sum_{i=1}^M \myMat_i^\transpose\otimes \myMat_i^\transpose
    = \sum_{i=1}^M \myMat_i \otimes \myMat_i
    = \CovEvo,
\end{equation}
where in the third step we used \eqref{eq:KroneckerProperty2} property of the Kronecker product, and in the fourth we used the fact that $\{ \myMat_i \}$ are symmetric. Then, by the spectral theorem, we have that all its eigenvectors~$ \{ \zz_j \} $ and eigenvalues~$ \{ \lambda_j \} $ are real. Given an eigenvector $ \zz \in \R^{d^2} $  of $ \CovEvo $, we can examine its matrix form $ \ZZ = \mathrm{vec}^{-1}(\zz) $, where $ \ZZ \in \R^{d \times d} $. Here we show that $ \CovEvo $ always has a set of $d^2$ eigenvectors that correspond only to either symmetric or skew-symmetric matrices $ \{ \ZZ_j \}$. Let $ (\lambda,\zz) $ be an eigenpair of $ \CovEvo $, \ie $ \lambda \zz = \CovEvo \zz $, and set $ \ZZ = \mathrm{vec}^{-1}(\zz) $. Then, 
\begin{align}
    \lambda \ZZ
    = \mathrm{vec}^{-1}(\lambda \zz) 
    = \mathrm{vec}^{-1}(\CovEvo \zz) 
    = \mathrm{vec}^{-1}\left(\sum_{i=1}^M \myMat_i\otimes \myMat_i \zz\right) 
    & = \sum_{i=1}^M \mathrm{vec}^{-1}\left( \myMat_i\otimes \myMat_i \zz\right) \nonumber \\ 
    & = \sum_{i=1}^M  \myMat_i \ZZ \myMat_i^\transpose ,
\end{align}
where in the penultimate step we used \eqref{eq:KroneckerProperty1} property of the Kronecker product. By taking a transpose on both ends of this equation we have
\begin{equation}
    \lambda \ZZ^\transpose 
    = \left( \sum_{i=1}^M \myMat_i \ZZ \myMat_i^\transpose \right)^\transpose
    = \sum_{i=1}^M  \left( \myMat_i \ZZ \myMat_i^\transpose \right)^\transpose
    = \sum_{i=1}^M \myMat_i \ZZ^\transpose \myMat_i^\transpose.
\end{equation}
Thus, we have that $ \mathrm{vec}(\ZZ^\transpose) $ is also an eigenvector of $ \CovEvo $. If~$ \lambda $ has multiplicity one, then it must be that $ \ZZ^\transpose = \pm \ZZ $, \ie symmetric or skew-symmetric matrix. If the multiplicity is greater than one and $ \ZZ^\transpose \neq \pm \ZZ $, then any linear combination of $ \ZZ $ and $ \ZZ^\transpose $ is also an eigenvector corresponding to~$ \lambda $. In particular,
\begin{equation}
    \hat{\ZZ}_1 = \frac{1}{2}\left(\ZZ+\ZZ^\transpose\right)
    \qquad \text{and} \qquad
    \hat{\ZZ}_2 = \frac{1}{2}\left(\ZZ-\ZZ^\transpose\right).
\end{equation}
By construction, $ \hat{\ZZ}_1 $ and $ \hat{\ZZ}_2 $ are symmetric and skew-symmetric eigenvectors, corresponding to~$ \lambda $. This procedure can be repeated while projecting the next eigenvectors of $ \lambda $ onto the orthogonal complement of the already found vectors, until we find all eigenvectors of~$\lambda$. In this way, we can find a set of eigenvectors comprised solely of vectors that correspond to symmetric or skew-symmetric.

\paragraph{Second and third statement.}
Using the first statement, we can consider a complete set of eigenvectors for $ \CovEvo $ that is comprised solely of vectors that correspond to either symmetric or skew-symmetric matrices. Our next step is to show that there is at least one dominant eigenvector of $ \CovEvo $ that corresponds to a symmetric matrix. Our final step will be to show that among the dominant eigenvectors that correspond to symmetric matrices, at least one corresponds to a PDS matrix.

To this end, we first bring the eigenvalues of $ \CovEvo $, denoted by $ \{ \ZZ_j \} $, to a normal (canonical) form. Here we assume without loss of generality that the eigenvectors are normalized, that is $ \normF{\ZZ_j} = 1  $ for all $j\in [d^2]$. For symmetric matrix $ \ZZ $, we have the spectral decomposition theorem, and thus $ \ZZ = \VV \bS \VV^\transpose  $, where $ \VV $ is an orthogonal matrix and $ \bS $ is diagonal. We can also bring a skew-symmetric matrix to a similar form of $ \ZZ = \VV \bS \VV^\transpose  $ with orthogonal $\VV$, where $ \bS $ is a block diagonal matrix, with $ \lfloor d/2 \rfloor $ blocks of size $ {2 \times 2} $. Specifically, these blocks are in the form of \citep{zumino1962normal}
\begin{equation}
    \begin{bmatrix}
        0 & s_{\ell} \\
        -s_{\ell} & 0        
    \end{bmatrix}.
\end{equation}
If the dimension $ d $ is odd, then the last row and column of $ \bS $ are the zero vectors. For numerical purposes, this normal (canonical) form can be computed using the real Schur decomposition.

For symmetric matrices, we define the vector~$ \bs_{\text{sym}} \in \R^d $ to be the diagonal of $ \bS $, and for skew-symmetric matrices we define $ \bs_{\text{skew}} \in \R^{{\lfloor d/2 \rfloor}} $ to be $ [s_1, s_2,\cdots, s_{\lfloor d/2 \rfloor}]^\transpose $. In App.~\ref{app:Quadratic form calculation} we show that for a symmetric matrix $ \ZZ $, its corresponding vector form $ \zz $ satisfies
\begin{equation}\label{eq:symmetric eigenvalues}
    \zz^\transpose \CovEvo \zz = \bs_{\text{sym}}^\transpose \sum_{i=1}^M \MM_{i}^{ \odot 2 } \bs_{\text{sym}},
\end{equation}
where $\MM_i = \VV^\transpose \myMat_i \VV $, the superscript $ ^{ \odot k } $ denotes the Hadamard power and $ \| \bs_{\text{sym}} \| = 1 $. For skew-symmetric matrices, we define a set of matrices $ \{ \TT_{i} \} $ in $ \R^{ \lfloor d/2 \rfloor \times \lfloor d/2 \rfloor } $, where
\begin{equation}\label{eq:smallDeterminants}
    \TT_{i \, [\ell,p]} = \MM_{i \,[2\ell-1,2p-1]}\MM_{i \,[2\ell,2p]}-\MM_{i \,[2\ell-1,2p]}\MM_{i \,[2\ell,2p-1]},
\end{equation}
for all $1\leq \ell, p \leq \lfloor d/2 \rfloor $. Namely, $ \TT_{i} $ is the determinant of each $ 2 \times 2 $ block of $ \MM_i $ without overlap. We show in App.~\ref{app:Quadratic form calculation} that for a skew-symmetric matrix $ \ZZ $, its corresponding vector form $ \zz $ satisfies
\begin{equation}\label{eq:skew symmetric quadratic form}
    \zz^\transpose \CovEvo \zz =2 \bs_{\text{skew}}^\transpose \sum_{i=1}^M \TT_i \bs_{\text{skew}},
\end{equation}
where $ \| \bs_{\text{skew}} \| = 1/\sqrt{2} $. Let us define the projection matrix $ \PP \in \R^{\lfloor d/2 \rfloor \times d} $ as
\begin{equation}
    \PP = \frac{1}{\sqrt{2}}
    \begin{bmatrix}
        1 & 1 & 0 & 0 & 0 & \cdots & 0 & 0 \\
        0 & 0 & 1 & 1 & 0 & \cdots & 0 & 0 \\
        \vdots & & & \ddots & & & & \vdots \\
        0 & 0 & 0 & 0 & 0 & \cdots & 1 & 1
    \end{bmatrix}.
\end{equation}
If $d$ is odd, then the last column of $\PP$ is the zero vector. This matrix is semi-orthogonal, \ie it satisfies $ \PP \PP^\transpose = \Identity $. Note that
\begin{equation}\label{eq:smallFrobenius}
    \left[\PP\MM_i^{ \odot 2 }\PP^\transpose \right]_{[\ell,p]} = \frac{1}{2} \left( \MM_{i \, [2\ell-1,2p-1]}^2 + \MM_{i \, [2\ell-1,2p]}^2 + \MM_{i \, [2\ell,2p-1]}^2 + \MM_{i \, [2\ell,2p]}^2 \right).
\end{equation}
Therefore, for all $ 1 \leq \ell, p \leq \lfloor d/2 \rfloor $ and $i\in [M]$ we have
\begin{align}\label{eq:T-M inequality}
    \left| \TT_{i \, [\ell,p]} \right|
    & = \left| \MM_{i \,[2\ell-1,2p-1]}\MM_{i \,[2\ell,2p]}-\MM_{i \,[2\ell-1,2p]}\MM_{i \,[2\ell,2p-1]} \right| \nonumber \\
    & \leq \left| \MM_{i \,[2\ell-1,2p-1]}\MM_{i \,[2\ell,2p]}\right|+\left| \MM_{i \,[2\ell-1,2p]}\MM_{i \,[2\ell,2p-1]} \right| \nonumber \\
    & \leq  \frac{1}{2} \left( \MM_{i \, [2\ell-1,2p-1]}^2 + \MM_{i \, [2\ell,2p]}^2\right) + \frac{1}{2} \left(\MM_{i \, [2\ell-1,2p]}^2 + \MM_{i \, [2\ell,2p-1]}^2 \right) \nonumber \\
    & = \left[\PP\MM_i^{ \odot 2 }\PP^\transpose \right]_{[\ell,p]},
\end{align}
where in the first step we used \eqref{eq:smallDeterminants}, in the second we used the triangle inequality, in the third we used $ |ab|\leq \frac{1}{2}(a^2+b^2 $) twice, and in the last step we used \eqref{eq:smallFrobenius}.

Note that any pair of orthogonal matrix $ \VV $ and a vector $ \bs_{\text{skew}} \in \R^{\lfloor d/2 \rfloor} $, such that $ \| \bs_{\text{skew}} \| = 1/\sqrt{2} $, define a skew-symmetric matrix $ \ZZ_{\text{skew}} $ with a vectorization $ \zz_{\text{skew}} $. Similarly, any pair of orthogonal matrix $ \VV $ and a vector $ \bs_{\text{sym}} \in \R^{d} $, such that $ \| \bs_{\text{sym}} \| = 1$, correspond to a symmetric matrix $ \ZZ_{\text{sym}} $ with a vectorization $ \zz_{\text{sym}} $. Here we will show that given $ \VV $, there exists $ \bs_{\text{sym}} \in \Sb^{d-1} $ s.t.
\begin{equation}
    \left| \zz_{\text{skew}}^{\transpose} \CovEvo \zz_{\text{skew}} \right| \leq \zz_{\text{sym}}^{\transpose} \CovEvo \zz_{\text{sym}}
\end{equation}
for any $ \bs_{\text{skew}} \in \R^{\lfloor d/2 \rfloor} $ for which $\|\bs_{\text{skew}} \|=1/\sqrt{2} $. To this end, we set $\rr~=~\sqrt{2}\bs_{\text{skew}}~\in~\Sb^{\lfloor d/2 \rfloor} $,~then
\begin{align}\label{eq:Cauchy interlacing step}
    \left|2 \bs_{\text{skew}}^\transpose \sum_{i=1}^M \TT_i \bs_{\text{skew}} \right|
    & = \left| \rr^\transpose \sum_{i=1}^M \TT_i \rr \right| \nonumber \\
    & \leq \sum_{\ell = 1}^d \sum_{p = 1}^d \sum_{i=1}^M  \left| \rr_{[\ell]} \right| \left| \TT_{i \, [\ell,p]} \right| \left| \rr_{[p]} \right| \nonumber \\
    & \leq \sum_{\ell = 1}^d \sum_{p = 1}^d \sum_{i=1}^M  \left| \rr_{[\ell]} \right|  \left[\PP\MM_{i}^{ \odot 2 }\PP^\transpose\right]_{[\ell,p]} \left| \rr_{[p]} \right|  \nonumber \\
    & = \sum_{\ell = 1}^d \sum_{p = 1}^d  \left| \rr_{[\ell]} \right| \bigg[ \sum_{i=1}^M \PP\MM_{i}^{ \odot 2 }\PP^\transpose \bigg]_{[\ell,p]} \left| \rr_{[p]} \right| \nonumber \\
    & \leq \lambda_{\max} \left(  \sum_{i=1}^M  \PP\MM_{i}^{ \odot 2 }\PP^\transpose \right) \nonumber \\
    & = \lambda_{\max} \left( \PP \bigg(\sum_{i=1}^M  \MM_{i}^{ \odot 2 } \bigg) \PP^\transpose \right) \nonumber \\
    & \leq \lambda_{\max} \left(\sum_{i=1}^M  \MM_{i}^{ \odot 2 } \right),
\end{align}
where in the second step we used the triangle inequality, in the third step we used \eqref{eq:T-M inequality}, in the fifth we used the fact that $ \| \rr \| = 1 $ and bound the quadratic form with the top eigenvalue (note that $ \{ \MM_{i} \} $ are symmetric and thus the top eigenvalue is real), and in the last step we used the Cauchy interlacing theorem (a.k.a.\ Poincaré separation theorem). Now, take~$ \bs_{\text{sym}} \in \Sb^{d-1} $ to be the top eigenvector (normalized) of $ \sum_{i=1}^M  \MM_{i}^{ \odot 2 }  $ (note that $ \{ \MM_{i} \} $ are symmetric and thus this top eigenvector is real), and pair it with the same basis $ \VV $ of~$ \ZZ_{\text{skew}} $ to get a symmetric matrix~$ \ZZ_{\text{sym}} $ that satisfies
\begin{equation}
    \left| \zz_{\text{skew}}^\transpose \CovEvo \zz_{\text{skew}}  \right|
    = \left|2 \bs_{\text{skew}}^\transpose \sum_{i=1}^M \TT_i \bs_{\text{skew}} \right|
    \leq \lambda_{\max}\left( \sum_{i=1}^M  \MM_{i}^{ \odot 2 } \right)
    = \bs_{\text{sym}}^\transpose  \sum_{i=1}^M  \MM_{i}^{ \odot 2 } \bs_{\text{sym}}
    = \zz_{\text{sym}}^\transpose \CovEvo \zz_{\text{sym}},
\end{equation}
where in the first step we used \eqref{eq:skew symmetric quadratic form}, in the second we used \eqref{eq:Cauchy interlacing step}, in the third we used the fact that $ \bs_{\text{sym}} \in \Sb^{d-1} $ is the top eigenvector of $ \sum_{i=1}^M  \MM_{i}^{ \odot 2 }  $, and in the last step we used \eqref{eq:symmetric eigenvalues}. Since this is true for any orthogonal $ \VV $, we get that at least one dominant eigenvector of $ \CovEvo $ corresponds to a symmetric matrix rather than a skew-symmetric one. Hence, from here onwards we can use the fact that there exists a dominant eigenvector of $ \CovEvo $ that corresponds to a symmetric matrix.

Let $ \tilde{\zz} $ be a dominant eigenvector of $\CovEvo$ which correspond to a symmetric matrix, and let $ \tilde{\ZZ} = \tilde{\VV} \tilde{\bS} \tilde{\VV}^\transpose $ be its spectral decomposition. Set 
\begin{equation}
    \bPsi = \sum_{i=1}^M \tilde{\MM}_{i}^{ \odot 2 }, \qquad \text{s.t.} \qquad \tilde{\MM}_{i} = \tilde{\VV}^\transpose \myMat_i \tilde{\VV}.
\end{equation}
Since $ \CovEvo $ is symmetric, then by the spectral theorem we have that all its eigenvectors and eigenvalues are real, and they are given by the quadratic form using the corresponding eigenvectors. Thus,
\begin{align}
    \rho(\CovEvo)
    & = \left| \tilde{\zz}^\transpose \CovEvo \tilde{\zz} \right| \nonumber \\
    & = \left| \tilde{\bs}_{\text{sym}}^\transpose \sum_{i=1}^M \tilde{\MM}_{i}^{ \odot 2 } \tilde{\bs}_{\text{sym}} \right| \nonumber \\
    & = \left| \tilde{\bs}_{\text{sym}}^\transpose \bPsi \tilde{\bs}_{\text{sym}} \right| \nonumber \\
    & = \left| \sum_{\ell=1}^d \sum_{p=1}^d   \tilde{\bs}_{\text{sym}\,[\ell]} \bPsi_{[\ell,p]} \tilde{\bs}_{\text{sym}\,[p]} \right| \nonumber \\
    & \leq \sum_{\ell=1}^d \sum_{p=1}^d   \left| \tilde{\bs}_{\text{sym}\,[\ell]} \right| \bPsi_{[\ell,p]} \left| \tilde{\bs}_{\text{sym}\,[p]} \right| \nonumber \\
    & = \left[\tilde{\bs}_{\text{sym}}^{\text{abs}} \right] ^\transpose \bPsi \left[\tilde{\bs}_{\text{sym}}^{\text{abs}} \right] \nonumber \\
    & = \left[\tilde{\zz}^{\text{abs}} \right]^\transpose \CovEvo \left[\tilde{\zz}^{\text{abs}} \right],
\end{align}
where $\tilde{\bs}_{\text{sym}}^{\text{abs}}$ is the element-wise absolute value of~$ \tilde{\bs}_{\text{sym}} $, and $ \tilde{\zz}^{\text{abs}} = \mathrm{vec}( \tilde{\VV} \tilde{\bS}^{\text{abs}} \tilde{\VV}^\transpose ) $. Namely, the vector $ \tilde{\zz}^{\text{abs}}$ that corresponds to the matrix built from the element-wise absolute value of the spectrum of $ \tilde{\ZZ} $ yields a greater or equal result than the spectral radius of  $ \CovEvo $, while still having a unit Euclidean norm. Thus, either $\tilde{\bs}_{\text{sym}}^{\text{abs}} =  \tilde{\bs}_{\text{sym}} $, in which case $ \tilde{\ZZ} $ is PSD, or $\tilde{\bs}_{\text{sym}}^{\text{abs}} \neq  \tilde{\bs}_{\text{sym}} $, and then both $ \tilde{\zz} $ and $ \tilde{\zz}^{\text{abs}} $ are dominant eigenvectors (or else we get a contradiction).
Note that $ \mathrm{vec}^{-1}(\tilde{\zz}^{\text{abs}}) $ is in fact a PSD matrix. Therefore, there is always a dominant eigenvector for $ \CovEvo $ which corresponds to a PSD matrix. Additionally, since $ \max_j | \lambda_j(\CovEvo) | = \rho(\CovEvo) = [\tilde{\zz}^{\text{abs}}]^\transpose \CovEvo [\tilde{\zz}^{\text{abs}}] $, then $ [\tilde{\zz}^{\text{abs}}] $ is also a top eigenvector which corresponds to $ \lambda_{\max}(\CovEvo) $, \ie $ \rho(\CovEvo) = \lambda_{\max}(\CovEvo) $ ($\lambda_{\max}(\CovEvo)$ is a dominant eigenvalue).

\subsection{Quadratic form calculation for symmetric and skew-symmetric matrices}\label{app:Quadratic form calculation}
Let $ \zz = \mathrm{vec}(\ZZ) $, and assume that $ \ZZ = \VV \bS \VV^\transpose $ where $ \VV $ is orthogonal matrix. Then
\begin{align}
    \zz^\transpose \CovEvo \zz 
    & = \left[\vectorization{\VV \bS \VV^\transpose} \right]^\transpose \CovEvo \; \vectorization{\VV \bS \VV^\transpose} \nonumber \\
    & = \left[ \left(\VV \otimes \VV \right) \vectorization{\bS} \right]^\transpose \CovEvo \left(\VV \otimes \VV \right) \vectorization{\bS} \nonumber \\
    & = \left[ \vectorization{\bS} \right]^\transpose \left(\VV^\transpose \otimes \VV^\transpose \right)   \sum_{i=1}^M \myMat_i\otimes \myMat_i \Big(\VV \otimes \VV \Big) \vectorization{\bS} \nonumber \\
    & = \smash{\left[ \vectorization{\bS} \right]^\transpose   \sum_{i=1}^M \left( \VV^\transpose \myMat_i \VV \right) \otimes \left( \VV^\transpose \myMat_i \VV \right) \vectorization{\bS} }\nonumber \\
    & = \sum_{i=1}^M \left[ \vectorization{\bS} \right]^\transpose \MM_i \otimes \MM_i \; \vectorization{\bS}.
\end{align}
Writing the quadratic form explicitly for each $ i \in [M] $ we have
\begin{equation}
    \left[ \vectorization{\bS} \right]^\transpose \MM_i \otimes \MM_i \; \vectorization{\bS} 
    = \sum_{m = 1}^{d^2} \sum_{k = 1}^{d^2} \left[ \MM_i \otimes \MM_i \right]_{ [m,k]} \vectorization{\bS}_{[m]} \vectorization{\bS}_{[k]} .
\end{equation}
Set $ m = d(m_2-1)+m_1 $ and $ k = d(k_2-1)+k_1 $ where $ m_1,m_2,k_1,k_2 \in [d] $, then
\begin{equation}\label{eq:Kronecker indexing}
    \left[ \MM_i \otimes \MM_i \right]_{ [m,k]} = \left[ \MM_i \otimes \MM_i \right]_{ [d(m_2-1)+m_1,d(k_2-1)+k_1]} = \MM_{i \, [m_1,k_1]} \MM_{i \, [m_2,k_2]}.
\end{equation}
Moreover,
\begin{align}\label{eq:Vectorization indexing}
    & \left[ \vectorization{\bS} \right]_{[m]} = \left[ \vectorization{\bS} \right]_{[d(m_2-1)+m_1]} = \bS_{[m_1,m_2]}, \nonumber \\
    & \left[ \vectorization{\bS} \right]_{[k]} = \left[ \vectorization{\bS} \right]_{[d(k_2-1)+k_1]} = \bS_{[k_1,k_2]}.
\end{align}
Therefore,
\begin{align}\label{eq:quadratic form general case}
    & \left[ \vectorization{\bS} \right]^\transpose \MM_i \otimes \MM_i \; \vectorization{\bS} \nonumber \\
    & = \sum_{m = 1}^{d^2} \sum_{k = 1}^{d^2} \left[ \MM_i \otimes \MM_i \right]_{ [m,k]} \left[ \vectorization{\bS} \right]_{[m]} \left[ \vectorization{\bS} \right]_{[k]} \nonumber \\
    & = \sum_{m_2 = 1}^{d} \sum_{m_1 = 1}^{d} \sum_{k_2 = 1}^{d} \sum_{k_1 = 1}^{d} \left[ \MM_i \otimes \MM_i \right]_{ [d(m_2-1)+m_1,d(k_2-1)+k_1]} \left[ \vectorization{\bS} \right]_{[d(m_2-1)+m_1]} \left[ \vectorization{\bS} \right]_{[d(k_2-1)+k_1]} \nonumber \\ %%
    & = \sum_{m_2 = 1}^{d} \sum_{m_1 = 1}^{d} \sum_{k_2 = 1}^{d} \sum_{k_1 = 1}^{d} \MM_{i \, [m_1,k_1]} \MM_{i \, [m_2,k_2]} \bS_{[m_1,m_2]} \bS_{[k_1,k_2]},
\end{align}
where in the last step we used \eqref{eq:Kronecker indexing} and \eqref{eq:Vectorization indexing}.

\subsubsection{Symmetric eigenvectors}
Assume that $ \ZZ $ is symmetric, then $ \bS $ is a diagonal matrix. Therefore, we only need to consider the terms in the series above for which $ m_1=m_2 = \ell $ and $ k_1 = k_2 = p $. 
\begin{equation}
    \sum_{\ell = 1}^{d} \sum_{p = 1}^{d} \MM_{i \, [\ell,p]} \MM_{i \, [\ell,p]} \bS_{[\ell,\ell]} \bS_{[p,p]} 
    = \sum_{\ell = 1}^{d} \sum_{p = 1}^{d}  \MM^2_{i \, [\ell, p]}  \bs_{\text{sym}\, [\ell]} \bs_{\text{sym}\, [p]}
    = \bs_{\text{sym}}^\transpose  \MM_{i}^{ \odot 2 } \bs_{\text{sym}}.
\end{equation}
Overall,
\begin{equation}
    \zz^\transpose \CovEvo \zz = \bs_{\text{sym}}^\transpose \sum_{i=1}^M \MM_{i}^{ \odot 2 } \bs_{\text{sym}}.
\end{equation}

\subsubsection{Skew-symmetric eigenvectors}
Assume that $ \ZZ $ is skew-symmetric, then $ \bS $ is a block diagonal matrix, where each block is $ 2\times2 $ in the form of 
\begin{equation}
    \begin{bmatrix}
        0 & s_{\ell} \\
        -s_{\ell} & 0        
    \end{bmatrix}.
\end{equation}
If the dimension $ d $ is odd, then $ \bS $ has a row and column at the end filled with zeros. Here, the nonzero elements are located above and below the main diagonal of $\bS$ and they come in pairs. Specifically, the following relations hold.
\begin{equation}
    \bS_{[2\ell-1, 2\ell]} = s_{\ell} = \bs_{\text{skew}\, [\ell]}, \qquad \bS_{[2\ell, 2\ell-1]} = -s_{\ell} = -\bs_{\text{skew}\, [\ell]}.
\end{equation}
Therefore, in the skew-symmetric scenario, we have four different cases to consider in the last line of~\eqref{eq:quadratic form general case}. The four cases are as follows.

\paragraph{Case I: $  m_1 = 2\ell-1,\ m_2=2\ell , \ k_1=2p-1 ,\ k_2 =2p  $.}
\begin{equation}
    \sum_{\ell = 1}^{\lfloor d/2 \rfloor} 
    \sum_{p = 1}^{\lfloor d/2 \rfloor}     \MM_{i \, [2\ell-1, 2p-1]} \MM_{i \, [2\ell,2p]} \bS_{[2\ell-1, 2\ell]} \bS_{[2p-1,2p]}  =  \sum_{\ell = 1}^{\lfloor d/2 \rfloor} \sum_{p = 1}^{\lfloor d/2 \rfloor}  \MM_{i \, [2\ell-1, 2p-1]} \MM_{i \, [2\ell,2p]} \bs_{\text{skew}\, [\ell]} \bs_{\text{skew}\, [p]} .
\end{equation}

\paragraph{Case II: $  m_1 = 2\ell,\ m_2=2\ell-1 , \ k_1=2p-1 ,\ k_2 =2p  $.}
\begin{equation}
    \sum_{\ell = 1}^{\lfloor d/2 \rfloor} \sum_{p = 1}^{\lfloor d/2 \rfloor}  \MM_{i \, [2\ell, 2p-1]} \MM_{i \, [2\ell-1, 2p]} \bS_{[2\ell, 2\ell-1]} \bS_{[2p-1, 2p]} = -\sum_{\ell = 1}^{\lfloor d/2 \rfloor} \sum_{p = 1}^{\lfloor d/2 \rfloor}  \MM_{i \, [2\ell, 2p-1]} \MM_{i \, [2\ell-1, 2p]} \bs_{\text{skew}\, [\ell]} \bs_{\text{skew}\, [p]} .
\end{equation}

\paragraph{Case III: $  m_1 = 2\ell-1,\ m_2=2\ell , \ k_1=2p ,\ k_2 =2p-1  $.}
\begin{equation}
    \sum_{\ell = 1}^{\lfloor d/2 \rfloor} \sum_{p = 1}^{\lfloor d/2 \rfloor} \MM_{i \, [2\ell-1, 2p]} \MM_{i \, [ 2\ell, 2p-1]}  \bS_{[2\ell-1, 2\ell]} \bS_{[2p, 2p-1]} = -\sum_{\ell = 1}^{\lfloor d/2 \rfloor} \sum_{p = 1}^{\lfloor d/2 \rfloor} \MM_{i \, [2\ell-1, 2p]} \MM_{i \, [ 2\ell, 2p-1]} \bs_{\text{skew}\, [\ell]} \bs_{\text{skew}\, [p]} .
\end{equation}

\paragraph{Case IV: $  m_1 = 2\ell,\ m_2=2\ell-1 , \ k_1=2p,\ k_2 =2p-1  $.}
\begin{equation}
    \sum_{\ell = 1}^{\lfloor d/2 \rfloor} \sum_{p = 1}^{\lfloor d/2 \rfloor} \MM_{i \, [2\ell, 2p]} \MM_{i \, [2\ell-1, 2p-1]}  \bS_{[2\ell, 2\ell-1]} \bS_{[2p, 2p-1]} = \sum_{\ell = 1}^{\lfloor d/2 \rfloor} \sum_{p = 1}^{\lfloor d/2 \rfloor} \MM_{i \, [2\ell, 2p]} \MM_{i \, [2\ell-1, 2p-1]} \bs_{\text{skew}\, [\ell]} \bs_{\text{skew}\, [p]}.
\end{equation}

Summing over all these cases we get
\begin{align}
    \zz^\transpose \CovEvo \zz
    & = \sum_{i=1}^M  \sum_{\ell = 1}^{\lfloor d/2 \rfloor} \sum_{p = 1}^{\lfloor d/2 \rfloor} 2 \left( \MM_{i \,[2\ell-1,2p-1]}\MM_{i \,[2\ell,2p]}-\MM_{i \,[2\ell-1,2p]}\MM_{i \,[2\ell,2p-1]}\right)  \bs_{\text{skew}\, [\ell]} \bs_{\text{skew}\, [p]} \nonumber \\
    & = 2 \sum_{i=1}^M  \sum_{\ell = 1}^{\lfloor d/2 \rfloor} \sum_{p = 1}^{\lfloor d/2 \rfloor} \TT_{i \, [\ell, p]}  \bs_{\text{skew}\, [\ell]} \bs_{\text{skew}\, [p]} \nonumber \\
    & = 2 \bs_{\text{skew}}^\transpose  \sum_{i=1}^M \TT_i \bs_{\text{skew}}. 
\end{align}

\section{Proof of Proposition~\ref{thm:Monotonicity}}\label{app:MonotonicityProof}
Here we focus on interpolating minima for simplicity. A similar proof can be derived for regular minima. To begin with, note that (see \eqref{eq:eigenvalue equality})
\begin{equation}
    \lambda_{\max}\left( \CC^{\dagger}  \DD \right) = \lambda_{\max}\left( \left(\CC^{\frac{1}{2}}\right)^{\dagger} \DD \left(\CC^{\frac{1}{2}}\right)^{\dagger} \right).
\end{equation}
Additionally,
\begin{align}\label{eq:D and E}
    \DD 
    & = (1-p)\, \HH \otimes \HH + p\,\frac{1}{n}\sum_{i=1}^n\HH_i \otimes \HH_i \nonumber \\
    & = \HH \otimes \HH + p\left(\frac{1}{n}\sum_{i=1}^n\HH_i \otimes \HH_i -\HH \otimes \HH \right) \nonumber \\
    & = \HH \otimes \HH +p \left(\frac{1}{n}\sum_{i=1}^n(\HH_i-\HH) \otimes (\HH_i-\HH) \right) \nonumber \\
    & = \HH \otimes \HH +p \bE ,
\end{align}
where in the third step we used \eqref{eq:kronecker variance}, and at the last step $ \bE \triangleq \frac{1}{n}\sum_{i=1}^n(\HH_i-\HH) \otimes (\HH_i-\HH) $. Let $ \yy \in \Sb^{d^2-1} $ be the top eigenvector of $ (\CC^{\frac{1}{2}})^{\dagger}  \DD (\CC^{\frac{1}{2}})^{\dagger}  $, then since $ (\CC^{\frac{1}{2}})^{\dagger}  \DD (\CC^{\frac{1}{2}})^{\dagger}  $ is symmetric we have
\begin{align}
    \frac{\partial}{\partial p } \lambda_{\max}\left( \left(\CC^{\frac{1}{2}}\right)^{\dagger}  \DD \left(\CC^{\frac{1}{2}}\right)^{\dagger} \right)
    & = \yy^\transpose \left[ \left(\CC^{\frac{1}{2}}\right)^{\dagger} \left( \frac{\partial}{\partial p } \DD \right) \left(\CC^{\frac{1}{2}}\right)^{\dagger} \right] \yy \nonumber \\
    & = \yy^\transpose \left[ \left(\CC^{\frac{1}{2}}\right)^{\dagger} \bE \left(\CC^{\frac{1}{2}}\right)^{\dagger} \right] \yy.
\end{align}
In App.~\ref{sec:Eigenvector of CC^{dagger}^{1/2} DD CC^{dagger}^{1/2}} we show that $ \yy $ has the form of $ \yy = \CC^{\frac{1}{2}} \uu $ such that $ \mathrm{vec}^{-1}(\uu) \in \PSDset{d} $. Plugging this into the equation above we get
\begin{align}
    \yy^\transpose \left[ \left(\CC^{\frac{1}{2}}\right)^{\dagger} \bE \left(\CC^{\frac{1}{2}}\right)^{\dagger} \right] \yy
    & = \uu^\transpose \CC^{\frac{1}{2}} \left(\CC^{\frac{1}{2}}\right)^{\dagger} \bE \left(\CC^{\frac{1}{2}}\right)^{\dagger} \CC^{\frac{1}{2}} \uu \nonumber \\
    & = \uu^\transpose \PP_{\myNullOrtho{\CC}} \bE \PP_{\myNullOrtho{\CC}} \uu \nonumber \\
    & = \uu^\transpose \bE \uu,
\end{align}
where in the first step we used the fact that $ \CC $ is symmetric. Additionally, in the second step, we used the fact that $ \CC^{\frac{1}{2}} (\CC^{\frac{1}{2}})^{\dagger} $ and $ (\CC^{\frac{1}{2}})^{\dagger} \CC^{\frac{1}{2}} $ are projection matrices onto the column space of $ \CC $. Since the null space of $ \bE $ contains the null space of $ \CC $, we have that these projections can be removed (see App.~\ref{app:Null space}). Note that $ \mathrm{vec}^{-1}(\uu) $ is PSD, and let $ \VV \bS \VV^\transpose $ be its spectral decomposition, then in App.~\ref{app:Quadratic form calculation} we show that in this case
\begin{equation}\label{eq:Mono final step}
    \uu^\transpose \bE \uu = \bs^\transpose \sum_{i=1}^n \MM_{i}^{ \odot 2 } \bs,
\end{equation}
where $\MM_i = \VV^\transpose (\HH_i-\HH) \VV $ and $ \bs $ is a vector containing the eigenvalues of $ \mathrm{vec}^{-1}(\uu) $. Since $ \mathrm{vec}^{-1}(\uu) $ is PSD, we have the right-hand side of \eqref{eq:Mono final step} is a sum over nonnegative terms. Namely,
\begin{equation}
    \frac{\partial}{\partial p } \lambda_{\max}\left( \left(\CC^{\frac{1}{2}}\right)^{\dagger}  \DD \left(\CC^{\frac{1}{2}}\right)^{\dagger} \right)
    = \uu^\transpose \bE \uu = \bs^\transpose \sum_{i=1}^n \MM_{i}^{ \odot 2 } \bs \geq 0.
\end{equation}
Therefore, $ \lambda_{\max}\left( \CC^{\dagger} \DD  \right) $ is monotonically non-decreasing in $ p $, which means that $ \etaVar = 2/\lambda_{\max}\left( \CC^{\dagger} \DD  \right)  $ is monotonically non-decreasing with $B$.

\subsection{Top eigenvector of $ (\CC^{\frac{1}{2}})^{\dagger}  \DD (\CC^{\frac{1}{2}})^{\dagger} $}\label{sec:Eigenvector of CC^{dagger}^{1/2} DD CC^{dagger}^{1/2}}
Using the stability condition of \citet{ma2021on}, we have that $ \{ \E[ \| \params_t -\params  \|^2 ] \} $ is bounded if and only if (see proof in \citep{ma2021on})
\begin{equation}\label{eq:impicit SGD stability condition restated 2}
    \max_{\CovMat \in \PSDset{d}} \frac{  \norm{\CovEvo(\eta, B) \; \vectorization{\CovMat}}}{\normF{\CovMat}} \leq 1.
\end{equation}
Let us repeat the same steps from the proof of Thm.~\ref{thm:stability threshold expectation} in App.~\ref{app:stability threshold expectation proof} but \emph{without} relaxing the constraint of PSD matrices. Specifically, repeating the steps in equations in \eqref{eq:proof start}-\eqref{eq:StabilityConditionFirstMove} without invoking Thm.~\ref{thm:Symmetric Kronecker systems} gives us that
\begin{equation}
	\uu^\transpose  \CovEvo \uu
	 = 1 - 2\eta \uu^\transpose\CC\uu + \eta^2 \uu^\transpose\DD\uu \leq 1
\end{equation}
holds for any $ \uu \in \Sb^{d^2-1} $  such that $ \mathrm{vec}^{-1}(\uu) \in \PSDset{d}$ and $ \uu \notin \myNull{\DD} $, if and only if 
\begin{equation} \label{eq:etaVar alternative}
    \eta \leq \frac{2}{\lambda^*} = \etaVar,
\end{equation}
where
\begin{equation}
	\lambda^* = \sup_{\uu \in \Sb^{d^2-1}} \left\{ \frac{\uu^\transpose \DD \uu}{  \uu^\transpose  \CC \uu} \right\}\quad \text{s.t.} \quad  \mathrm{vec}^{-1}(\uu) \in \PSDset{d} \ \text{and} \  \uu \notin \myNull{\DD} .
\end{equation}
(Note that the case $ \uu \in \myNull{\DD} $ do not contribute any conditions on the learning rate, and therefore can be ignored - see App.~\ref{app:stability threshold expectation proof}). Using change of variables (see \eqref{eq:From frac to lambda max} and \eqref{eq:From frac to lambda max 2}) results with
\begin{equation}
    \lambda^* = \max_{ \yy \in \Sb^{d^2-1}} \yy^\transpose \left(\CC^{\frac{1}{2}}\right)^{\dagger}  \DD \left(\CC^{\frac{1}{2}}\right)^{\dagger} \yy  \quad \text{s.t.} \quad \yy = (\CC^{\frac{1}{2}} \uu) \ \text{and} \  \mathrm{vec}^{-1}(\uu) \in \PSDset{d}.
\end{equation}
Since the alternative form of $ \etaVar $ in \eqref{eq:etaVar alternative} has to be equal to the definition in \eqref{eq:etaVar definition} (or else we will get a contradiction), we get
\begin{equation}
    \lambda_{\max}\left( \left(\CC^{\frac{1}{2}}\right)^{\dagger}  \DD \left(\CC^{\frac{1}{2}}\right)^{\dagger} \right) = \lambda^{*}.
\end{equation}
Namely, the top eigenvector $ \yy $ of $ (\CC^{\frac{1}{2}})^{\dagger}  \DD (\CC^{\frac{1}{2}})^{\dagger} $ has the form of $ \yy = \CC^{\frac{1}{2}} \uu $ such that $ \mathrm{vec}^{-1}(\uu) \in \PSDset{d} $.

\section{Proof of Proposition~\ref{prop:equivalent algorithm}}\label{app:equivalent algorithm proof}
Here we focus on interpolating minima for simplicity. A similar proof can be derived for regular minima. Let $ \{\beta_t\} $ and $ \{ \kappa_t \} $ be i.i.d.\ random variables such that $ \beta_t \thicksim \mathrm{Bernoulli}(p) $ and $ \kappa_t \thicksim \mathcal{U}(\{ 1,...,n \}) $, then
\begin{equation}
    \batch_t =
    \begin{cases}
        \kappa_t & \text{if } \beta_t=1, \\
        \{ 1,...,n \} &  \text{otherwise} .
    \end{cases}
\end{equation}
Let us consider the following stochastic loss function
\begin{equation}
    \stochasticLoss_t(\params) = \frac{1}{| \batch_t |}\sum_{i \in \batch_t } \ell_i(\params),
\end{equation}
where $ | \batch_t | $ denotes the size of $ \batch_t $ (either $1$ or $n$), and define the following notation
\begin{equation}
    \mathcal{A}_t =  \Identity - \frac{\eta}{| \batch_t |} \sum_{i \in  \batch_t } \HH_i.
\end{equation}
First, for interpolating minima we have
\begin{equation}\label{eq:dynamics equivalent algo}
    \params_{t+1} - \params^* = \left( \Identity - \frac{\eta}{| \batch_t |} \sum_{i \in  \batch_t } \HH_i \right) (\params_t - \params^* ) = \mathcal{A}_t(\params_t - \params^* ).
\end{equation}
Thus,
\begin{align}
    \CovMat_{t+1}
    & = \EE\left[ \left(\params_{t+1} - \params^*\right)\left(\params_{t+1} - \params^*\right)^\transpose \right] \nonumber \\
    & =\EE\left[ \mathcal{A}_t \left(\params_{t} - \params^*\right)\left(\params_{t} - \params^*\right)^\transpose \mathcal{A}_t \right] \nonumber \\
    & = \EE\left[ \mathcal{A}_t \E\left[ \left(\params_{t} - \params^*\right)\left(\params_{t} - \params^*\right)^\transpose \middle| \batch_t  \right] \mathcal{A}_t \right] \nonumber \\
    & = \EE\left[ \mathcal{A}_t \E\left[ \left(\params_{t} - \params^*\right)\left(\params_{t} - \params^*\right)^\transpose\right] \mathcal{A}_t \right] \nonumber \\
    & = \EE\left[ \mathcal{A}_t \CovMat_t \mathcal{A}_t \right],
\end{align}
where in the second step we used \eqref{eq:dynamics equivalent algo}, in the third we used the law of total expectation, and in the fourth we used the fact that $ 
\params_t $ is statistically independent of $ \batch_t $. Using vectorization we get
\begin{align}\label{eq:covariance evolution equivalent algo}
    \vectorization{\CovMat_{t+1}}
    & = \EE\left[ \mathcal{A}_t \otimes \mathcal{A}_t \right] \vectorization{\CovMat_t}.
\end{align}
In \eqref{eq:Covariance evolution matrix alternative} we show that for any given (fixed) batch size,
\begin{equation}\label{eq:Covariance evolution matrix alternative restated}
\CovEvo(B, \eta)  =  \left(1-\frac{n-B}{B(n-1)}\right) (\Identity-\eta\HH)\otimes(\Identity-\eta\HH) + \frac{n-B}{B(n-1)} \frac{1}{n} \sum_{i = 1}^n (\Identity-\eta\HH_i)\otimes(\Identity-\eta\HH_i).
\end{equation}
Specifically, 
\begin{align}
    \CovEvo(\eta, B=1) = \frac{1}{n} \sum_{i = 1}^n (\Identity-\eta\HH_i)\otimes(\Identity-\eta\HH_i) , \qquad 
    \CovEvo(\eta, B=n) = (\Identity-\eta\HH)\otimes(\Identity-\eta\HH).
\end{align}
Using this result, let us compute the term in \eqref{eq:covariance evolution equivalent algo}.
\begin{align}
    \EE\left[\mathcal{A}_t  \otimes \mathcal{A}_t \right]
    & = \prob\left( \beta_t = 0\right)\EE\left[\mathcal{A}_t  \otimes \mathcal{A}_t \middle| \beta_t = 0 \right]+\prob \left( \beta_t = 1\right) \EE\left[\mathcal{A}_t  \otimes \mathcal{A}_t \middle| \beta_t = 1 \right] \nonumber \\
    & = (1-p) \CovEvo(\eta, B=n)+ p \CovEvo(\eta, B=1) \nonumber \\
    & = (1-p) \times (\Identity-\eta\HH)\otimes(\Identity-\eta\HH) + p \times \frac{1}{n} \sum_{i = 1}^n (\Identity-\eta\HH_i)\otimes(\Identity-\eta\HH_i).
\end{align}
This is the same matrix that we had for mini-batch SGD with batch size $B$ such that $ p=\frac{n-B}{B(n-1)} $ (see~\eqref{eq:Covariance evolution matrix alternative restated}). Namely, the covariance matrix of the parameters for the mixed process evolves in the same way as mini-batch SGD, with a corresponding batch size. Therefore, the stability threshold of both algorithms is the same.

\section{Proof of Proposition~\ref{prop:stability gap}}\label{app:stability gap proof}

First, since $ \etaVar $ is monotonically non-decreasing with $B$ (Thm.~\ref{thm:Monotonicity}), and for $ B=n $ we have $ \etaVar = \etaMean $ (App.~\ref{app:Recovering GD's stability condition}), we get that $ \etaVar \leq \etaMean $ for all values of $ B $. Now, set $ \varepsilon \in (0,1) $, then $(1-\varepsilon) \etaMean \leq \etaVar $ holds whenever
\begin{align}\label{eq:Gap excact condition}
    & (1-\varepsilon)\frac{2}{\lambda_{\max}(\HH)} \leq \frac{2}{\lambda_{\max}(\CC^{\dagger}\DD)}\nonumber\\
    \Leftrightarrow \quad & (1-\varepsilon)\lambda_{\max}(\CC^{\dagger}\DD) \leq \lambda_{\max}(\HH).
\end{align}
Note that $ \DD = \HH \otimes \HH +p \bE $ (see~\eqref{eq:D and E}), then
\begin{align}\label{eq:Gap first step}
    \lambda_{\max}\left(\CC^{\dagger}\DD\right)
    & = \lambda_{\max} \left(\CC^{\dagger}\HH \otimes \HH + p \CC^{\dagger} \bE \right) \nonumber \\
    & \leq \lambda_{\max} \left(\CC^{\dagger}\HH \otimes \HH \right) + p \lambda_{\max} \left( \CC^{\dagger} \bE \right) \nonumber \\
    & = \lambda_{\max}(\HH) + p \lambda_{\max} \left( \CC^{\dagger} \bE \right),
\end{align}
where we used $ \lambda_{\max} \left(\CC^{\dagger}\HH \otimes \HH \right) = \lambda_{\max}(\HH) $ (see App.~\ref{app:Recovering GD's stability condition}). Using the fact that $  \lambda_{\max} \left( \CC^{\dagger} \bE \right) $ is nonnegative (see App.~\ref{app:lambda_max(C^daggerE) is non-negative}) and that
\begin{equation}
    p = \frac{1}{B} \ \frac{n-B}{n-1} \leq \frac{1}{B},
\end{equation}
we can further bound \eqref{eq:Gap first step} from above by
\begin{align}\label{eq:Gap second step}
    \lambda_{\max}\left(\CC^{\dagger}\DD\right)
    & \leq \lambda_{\max}(\HH) + p \lambda_{\max} \left( \CC^{\dagger} \bE \right) \nonumber \\
    & \leq \lambda_{\max}(\HH) + \frac{1}{B} \lambda_{\max} \left( \CC^{\dagger} \bE \right).
\end{align}
Therefore, if
\begin{equation}\label{eq:Gap simplified}
    (1-\varepsilon)\bigg(  \lambda_{\max}(\HH) + \frac{1}{B} \lambda_{\max} \left( \CC^{\dagger} \bE \right) \bigg) \leq \lambda_{\max}(\HH)
\end{equation}
then \eqref{eq:Gap excact condition} holds. It is easy to show that \eqref{eq:Gap simplified} is equivalent to 
\begin{equation}
    B \geq \frac{1-\varepsilon}{\varepsilon} \  \frac{\lambda_{\max}(\CC^{\dagger}\bE)}{\lambda_{\max}(\HH)}.
\end{equation}
Overall, if the batch size satisfies this inequality then $(1-\varepsilon) \etaMean \leq \etaVar \leq \etaMean $.

\subsection{Proof that $ \lambda_{\max} \left( \CC^{\dagger} \bE \right) $ is nonnegative}\label{app:lambda_max(C^daggerE) is non-negative}
Since $ \etaVar \leq \etaMean $ for all values of $ B $ (see the beginning of this section), then $ \lambda_{\max}(\HH) \leq \lambda_{\max}(\CC^{\dagger}\DD) $. Therefore,
\begin{equation}
    0 \leq  \lambda_{\max}(\CC^{\dagger}\DD) - \lambda_{\max}(\HH) \leq  p \lambda_{\max} \left( \CC^{\dagger} \bE \right),
\end{equation}
where in the last step we used both ends of \eqref{eq:Gap first step}.

\section{Proof of Proposition~\ref{thm:stability expectation necessary}}\label{app:stability necessary proof}
The stability threshold given by Thm.~\ref{thm:stability threshold expectation} and Thm.~\ref{thm:stability threshold expectation regular minima} is 
\begin{equation}
    \etaVar =  \frac{2}{ \lambda_{\max}\left( \CC^{\dagger} \DD  \right) }
\end{equation}
where 
\begin{equation}
    \CC = \frac{1}{2} \HH \oplus \HH , \qquad
    \DD = (1-p)\, \HH \otimes \HH + p\,\frac{1}{n}\sum_{i=1}^n\HH_i \otimes \HH_i.
\end{equation}
This threshold corresponds to a necessary and sufficient condition for stability. Here we derive simplified necessary conditions for stability. In App.~\ref{app:stability threshold expectation proof} we show that (see \eqref{eq:StabilityConditionFirstMove})
\begin{equation}
    \frac{2}{ \lambda_{\max}\left( \CC^{\dagger} \DD  \right) } =  2 \inf_{\uu \in \Sb^{d^2-1} : \uu \notin \myNull{\DD} } \left\{ \frac{  \uu^\transpose  \CC \uu}{\uu^\transpose \DD \uu} \right\}.
\end{equation}
We shall upper bound the stability threshold by considering non-optimal yet interesting vectors $ \uu $.  Specifically, in the following we look at $ \uu = \vv_{\max} \otimes \vv_{\max} $,  where $\vv_{\max}$ is the top eigenvector of $ \HH $, and $ \uu = \mathrm{vec}(\Identity) $ to obtain the results of Proposition~\ref{thm:stability expectation necessary}.

\subsection{Setting $ \uu = \vv_{\max} \otimes \vv_{\max} $}\label{sec: evaluating v otimes v on Q}
Let $ \uu = \vv \otimes \vv \notin \myNull{\DD} $ where $ \| \vv \| = 1 $, then
\begin{align}
    \uu^\transpose \CC \uu
    & = \frac{1}{2} \uu^\transpose \left(\HH \otimes \Identity  + \Identity \otimes \HH \right)\uu \nonumber \\
    & = \frac{1}{2} \left[\left(\vv^\transpose \otimes \vv^\transpose\right) \left(\HH \otimes \Identity\right) \left(\vv \otimes \vv\right)  + \left(\vv^\transpose \otimes \vv^\transpose\right) \left(\Identity \otimes \HH\right) \left(\vv \otimes \vv\right) \right] \nonumber \\
    & = \frac{1}{2} \left[ \left(\vv^\transpose\HH \vv\right) \otimes \left( \vv^\transpose\vv \right)   + \left(\vv^\transpose\vv\right) \otimes \left( \vv^\transpose \HH \vv \right)  \right] \nonumber \\
    & = \frac{1}{2} \left[ \left( \vv^\transpose \HH \vv \right) \otimes 1   + 1 \otimes \left( \vv^\transpose \HH \vv \right)  \right] \nonumber \\
    & = \vv^\transpose \HH \vv .
\end{align}
Similarly,
\begin{align}
    \uu^\transpose \left(\HH \otimes \HH \right)\uu
    & = \left(\vv^\transpose \otimes \vv^\transpose\right) \left(\HH \otimes \HH\right) \left(\vv \otimes \vv\right) \nonumber \\
    & =  \left(\vv^\transpose\HH \vv\right) \otimes \left(\vv^\transpose\HH \vv\right) \nonumber \\
    & = \left( \vv^\transpose \HH \vv \right) \otimes \left( \vv^\transpose \HH \vv \right)\nonumber \\
    & = \left( \vv^\transpose \HH \vv \right)^2.
\end{align}
And again
\begin{align}
    \uu^\transpose \left(\frac{1}{n}\sum_{i=1}^n \HH_i \otimes \HH_i \right)\uu
    & = \frac{1}{n}\sum_{i=1}^n \left(\vv^\transpose \otimes \vv^\transpose\right) \left( \HH_i \otimes  \HH_i\right) \left(\vv \otimes \vv\right) \nonumber \\
    & = \frac{1}{n}\sum_{i=1}^n\left(\vv^\transpose \HH_i \vv\right) \otimes \left(\vv^\transpose \HH_i \vv\right) \nonumber \\
    & = \frac{1}{n}\sum_{i=1}^n \left(\vv^\transpose \HH_i \vv\right)^2 .
\end{align}
Thus,
\begin{align}
    \uu^\transpose \DD \uu
    & = (1-p)\uu^\transpose \left(\HH \otimes \HH \right)\uu + p\uu^\transpose \left(\frac{1}{n}\sum_{i=1}^n \HH_i \otimes \HH_i \right)\uu  \nonumber \\
    & = (1-p)\left( \vv^\transpose \HH \vv \right)^2 + p\frac{1}{n}\sum_{i=1}^n \left(\vv^\transpose \HH_i \vv\right)^2 \nonumber \\
    & = \left( \vv^\transpose \HH \vv \right)^2 + p\left[ \frac{1}{n}\sum_{i=1}^n \left(\vv^\transpose \HH_i \vv\right)^2 - \left( \vv^\transpose \HH \vv \right)^2\right] \nonumber \\
    & = \left( \vv^\transpose \HH \vv \right)^2 + p\left[ \frac{1}{n}\sum_{i=1}^n \left(\vv^\transpose \HH_i \vv\right)^2 -2 \left(\vv^\transpose \HH \vv\right)\left( \vv^\transpose \HH \vv \right)+\left( \vv^\transpose \HH \vv \right)^2\right] \nonumber \\
    & = \left( \vv^\transpose \HH \vv \right)^2 + p\frac{1}{n}\sum_{i=1}^n \left[ \left(\vv^\transpose \HH_i \vv\right)^2 -2 \left(\vv^\transpose \HH_i \vv\right)\left( \vv^\transpose \HH \vv \right)+\left( \vv^\transpose \HH \vv \right)^2\right] \nonumber \\
    & = \left( \vv^\transpose \HH \vv \right)^2 + p\frac{1}{n}\sum_{i=1}^n \left( \vv^\transpose \HH_i \vv-\left( \vv^\transpose \HH \vv \right)\right)^2.
\end{align}
Therefore, for general $ \uu = \vv \otimes \vv $ we get 
\begin{equation}\label{eq:necessary condition rank one general}
    \etaVar \leq 2 \frac{  \uu^\transpose  \CC \uu}{\uu^\transpose \DD \uu} =  \frac{ 2\vv^\transpose \HH \vv }{(\vv^\transpose \HH \vv)^2+\frac{p}{n} \sum_{i = 1}^n( \vv^\transpose \HH_i \vv - \vv^\transpose \HH \vv)^2 }.
\end{equation}
Specifically, for $ \uu = \vv_{\max} \otimes \vv_{\max} $ we get
\begin{equation}
    \etaVar \leq \frac{ 2 \lambda_{\max}(\HH) }{\lambda^2_{\max}(\HH) + p\frac{1}{n}\sum_{i=1}^n \left( \vv^\transpose_{\max} \HH_i \vv_{\max}-\lambda_{\max}(\HH)\right)^2}.
\end{equation}
Finally, from \eqref{eq:necessary condition rank one general} we get the following result which we used in Sec.~\ref{sec:Experiments}.
\begin{equation}
    \lambda_{\max}\left( \CC^{\dagger} \DD  \right) =  \frac{2}{\etaVar} \geq \vv^\transpose \HH \vv+p \frac{\frac{1}{n} \sum_{i = 1}^n ( \vv^\transpose \HH_i \vv - \vv^\transpose \HH \vv)^2 }{ \vv^\transpose \HH \vv } .
\end{equation}
Since this inequality holds for every $ \vv \notin \myNull{\HH} $, we can take the maximum to obtain \eqref{eq:optimized bound}.

\subsection{Setting $ \uu = \vectorization{\Identity}  $}
Let $ \uu = \mathrm{vec}(\Identity) \notin \myNull{\DD} $, then
\begin{align}
    \uu^\transpose \CC \uu
    & = \frac{1}{2} \uu^\transpose \left(\HH \otimes \Identity  + \Identity \otimes \HH \right)\uu \nonumber \\
    & = \frac{1}{2}\left( \left[\vectorization{\Identity}\right]^\transpose \left(\HH \otimes \Identity\right)\vectorization{\Identity}  + \left[\vectorization{\Identity}\right]^\transpose \left(\Identity \otimes \HH \right)\vectorization{\Identity} \right) \nonumber \\
    & = \frac{1}{2}\left( \Tr\big(\HH^\transpose\big) + \Tr\big(\HH\big) \right) \nonumber \\
    & = \Tr(\HH),
\end{align}
where in the third step we used \eqref{eq:quadratic form over kronecker}. Moreover, using \eqref{eq:quadratic form over kronecker} we have
\begin{equation}
    \uu^\transpose \left(\HH \otimes \HH \right)\uu
    = \left[\vectorization{\Identity}\right]^\transpose \left(\HH \otimes \HH \right) \vectorization{\Identity}
    = \Tr(\HH \HH^\transpose)
    = \normF{\HH}^2.
\end{equation}
Similarly,
\begin{equation}
    \uu^\transpose \left(\HH_i \otimes \HH_i \right)\uu
    = \left[\vectorization{\Identity}\right]^\transpose \left(\HH_i \otimes \HH_i \right) \vectorization{\Identity}
    = \Tr(\HH_i \HH_i^\transpose)
    = \normF{\HH_i}^2.
\end{equation}
Then,
\begin{align}
    \uu^\transpose \DD \uu
    & = (1-p)\uu^\transpose \left(\HH \otimes \HH \right)\uu + p \frac{1}{n}\sum_{i=1}^n \uu^\transpose \HH_i \otimes \HH_i \uu  \nonumber \\
    & = (1-p)\normF{\HH}^2 + p\frac{1}{n}\sum_{i=1}^n \normF{\HH_i}^2 .
\end{align}
Therefore,
\begin{equation}
    \etaVar \leq 2\frac{  \uu^\transpose  \CC \uu}{\uu^\transpose \DD \uu} = \frac{ 2 \Tr(\HH) }{(1-p)\normF{\HH}^2 + p\frac{1}{n}\sum_{i=1}^n \normF{\HH_i}^2 }.
\end{equation}

\section{Proof of Theorem~\ref{thm:covariance limit proposition}}\label{app:covariance limit proposition proof}
In this section, we use the following result on the Moore–Penrose inverse of a sum of two matrices.
\begin{theorem}[\citet{fill2000moore}, Thm.~3]\label{thm:pseudo inverse of matrix sum}
    Let $\XX, \YY \in \R^{p \times p }$ with $ \mathrm{rank}(\XX+\YY) = \mathrm{rank}(\XX)+\mathrm{rank}(\YY) $. Then
    \begin{equation}
        (\XX+\YY)^{\dagger} = (\Identity-\LL)\XX^{\dagger}(\Identity -\OO )+\LL \YY^{\dagger} \OO,
    \end{equation}
    where
    \begin{equation}
        \LL = \left( \PP_{\myRange{\YY^\transpose}} \PP_{\myRangeOrtho{\XX^\transpose}} \right)^{\dagger} \qquad \text{and} \qquad \OO = \left( \PP_{\myRangeOrtho{\XX}} \PP_{\myRange{\YY}}  \right)^{\dagger}.
    \end{equation}
\end{theorem}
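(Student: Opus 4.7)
My plan is to verify the four Penrose conditions for the candidate
\begin{equation*}
    \ZZ^{\sharp} := (\Identity-\LL)\XX^{\dagger}(\Identity-\OO)+\LL\YY^{\dagger}\OO,
\end{equation*}
where $\ZZ = \XX+\YY$, and conclude $\ZZ^{\sharp}=\ZZ^{\dagger}$ by uniqueness of the Moore--Penrose inverse. The first preparatory step is to translate the rank hypothesis into the two (generally non-orthogonal) direct-sum decompositions $\myRange{\ZZ}=\myRange{\XX}\oplus\myRange{\YY}$ and $\myRange{\ZZ^{\transpose}}=\myRange{\XX^{\transpose}}\oplus\myRange{\YY^{\transpose}}$, which by a standard dimension count are equivalent to $\myRange{\XX}\cap\myRange{\YY}=\{\zeroVec\}$ and $\myRange{\XX^{\transpose}}\cap\myRange{\YY^{\transpose}}=\{\zeroVec\}$.

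The core step is to establish four algebraic identities:
\begin{equation*}
    \OO\XX=\zeroVec,\quad \OO\YY=\YY,\quad \XX\LL=\zeroVec,\quad \YY\LL=\YY.
\end{equation*}
Writing $\OO=\UU^{\dagger}$ with $\UU=\PP_{\myRangeOrtho{\XX}}\PP_{\myRange{\YY}}$, the first follows because $\myRange{\UU}\subseteq\myRangeOrtho{\XX}$, so $\myRange{\XX}\subseteq\myRange{\UU}^{\perp}=\myNull{\OO}$. The second is the delicate one: for $\yy\in\myRange{\YY}$ I would first observe $\PP_{\myRange{\UU}}\yy=\PP_{\myRangeOrtho{\XX}}\yy$ (since $\yy-\PP_{\myRangeOrtho{\XX}}\yy=\PP_{\myRange{\XX}}\yy\in\myRange{\XX}\subseteq\myRange{\UU}^{\perp}$), and then use the direct-sum condition to argue that $f\colon\myRange{\YY}\to\myRange{\UU}$, $f(\yy)=\PP_{\myRangeOrtho{\XX}}\yy$, is a bijection whose inverse coincides with $\OO$ restricted to $\myRange{\UU}$; this yields $\OO\yy=\UU^{\dagger}f(\yy)=f^{-1}(f(\yy))=\yy$. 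The two $\LL$-identities follow by applying the same argument to the transposed setup and then transposing. As byproducts, $\OO$ is idempotent with $\myRange{\OO}=\myRange{\YY}$, and $\LL$ is idempotent with $\myRange{\LL}=\PP_{\myRangeOrtho{\XX^{\transpose}}}\myRange{\YY^{\transpose}}$.

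With these identities I would show that $\ZZ\ZZ^{\sharp}$ simplifies to $\XX\XX^{\dagger}(\Identity-\OO)+\YY\YY^{\dagger}\OO$ and acts as the identity on $\myRange{\ZZ}=\myRange{\XX}\oplus\myRange{\YY}$ (expand $\vv=\XX\xx+\YY\yy$ and apply the four identities) and as zero on $\myRange{\ZZ}^{\perp}=\myRangeOrtho{\XX}\cap\myRangeOrtho{\YY}$ (any such $\vv$ has $\XX^{\dagger}\vv=\YY^{\dagger}\vv=\zeroVec$ and $\OO\vv=\zeroVec$, the last because $\vv\in\myRange{\UU}^{\perp}$). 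Hence $\ZZ\ZZ^{\sharp}=\PP_{\myRange{\ZZ}}$, which is symmetric and satisfies $\PP_{\myRange{\ZZ}}\ZZ=\ZZ$; this delivers Penrose conditions (1) and (3). The analogous computation produces $\ZZ^{\sharp}\ZZ=(\Identity-\LL)\PP_{\myRange{\XX^{\transpose}}}+\LL\PP_{\myRange{\YY^{\transpose}}}$; evaluating on $\vv=\xx'+\yy'\in\myRange{\XX^{\transpose}}\oplus\myRange{\YY^{\transpose}}$ reduces the claim $\ZZ^{\sharp}\ZZ\vv=\vv$ to the assertion that $\LL$ fixes $\PP_{\myRangeOrtho{\XX^{\transpose}}}\yy'\in\myRange{\LL}$, which is precisely the idempotence of $\LL$ on its range. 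Combined with the trivial action on $\myNull{\ZZ}$, this yields $\ZZ^{\sharp}\ZZ=\PP_{\myRange{\ZZ^{\transpose}}}$, giving Penrose conditions (2) and (4).

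The main obstacle I expect is the identity $\OO\YY=\YY$ (and its transposed analog). Nothing in the formal definition of $\OO$ as the pseudoinverse of a projection product makes this immediate, and it is the precise point where the rank hypothesis enters: the crucial step is the coincidence $\PP_{\myRange{\UU}}\yy=\PP_{\myRangeOrtho{\XX}}\yy$ on $\myRange{\YY}$, which depends on $\myRange{\XX}\cap\myRange{\YY}=\{\zeroVec\}$ both via the containment $\myRange{\XX}\subseteq\myRange{\UU}^{\perp}$ and through the direct-sum structure needed to conclude that the restricted map $f$ is an actual bijection rather than merely an injection.
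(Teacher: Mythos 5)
The paper never proves this statement: it is imported verbatim from \citet{fill2000moore} (their Thm.~3) and used as a black-box tool in the proof of Theorem~\ref{thm:final covariance limit}, so there is no in-paper argument to compare yours against, and I can only judge the proposal on its own merits. Your strategy—verify the four Penrose conditions for $\ZZ^{\sharp}=(\Identity-\LL)\XX^{\dagger}(\Identity-\OO)+\LL\YY^{\dagger}\OO$—is sound, and the core of the argument is right. The translation of rank additivity into $\myRange{\XX}\cap\myRange{\YY}=\{\zeroVec\}$ and $\myRange{\XX^{\transpose}}\cap\myRange{\YY^{\transpose}}=\{\zeroVec\}$ is the standard Marsaglia--Styan fact, and the four identities $\OO\XX=\zeroVec$, $\OO\YY=\YY$, $\XX\LL=\zeroVec$, $\YY\LL=\YY$ do follow as you sketch; for $\OO\YY=\YY$ the content of your bijection argument is that $\myRange{\UU^{\transpose}}\subseteq\myRange{\YY}$ with equality by a dimension count (injectivity of $\PP_{\myRangeOrtho{\XX}}$ on $\myRange{\YY}$), which makes $\UU^{\dagger}\UU$ act as the identity on $\myRange{\YY}$—worth stating explicitly, but correct. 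The simplifications $\ZZ\ZZ^{\sharp}=\XX\XX^{\dagger}(\Identity-\OO)+\YY\YY^{\dagger}\OO$ and $\ZZ^{\sharp}\ZZ=(\Identity-\LL)\PP_{\myRange{\XX^{\transpose}}}+\LL\PP_{\myRange{\YY^{\transpose}}}$ and their identification with $\PP_{\myRange{\ZZ}}$ and $\PP_{\myRange{\ZZ^{\transpose}}}$ also check out (the latter additionally uses that $\LL$ annihilates $\myRangeOrtho{\YY^{\transpose}}$, which is immediate since $\myNull{\LL}$ contains it).

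The one concrete misstep is the claim that $\ZZ^{\sharp}\ZZ=\PP_{\myRange{\ZZ^{\transpose}}}$ ``gives Penrose conditions (2) and (4).'' It gives (4), and together with $\ZZ\ZZ^{\sharp}=\PP_{\myRange{\ZZ}}$ it gives (1) and (3), but condition (2), $\ZZ^{\sharp}\ZZ\ZZ^{\sharp}=\ZZ^{\sharp}$, does not follow from the two projection identities alone: for $\ZZ=\zeroVec$ every matrix satisfies both, yet only $\zeroVec$ is the pseudoinverse. You need the additional fact that $\ZZ^{\sharp}$ annihilates $\myRangeOrtho{\ZZ}=\myRangeOrtho{\XX}\cap\myRangeOrtho{\YY}$ (equivalently $\myRange{\ZZ^{\sharp}}\subseteq\myRange{\ZZ^{\transpose}}$). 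Fortunately this is already implicit in what you wrote for the action on $\myRangeOrtho{\ZZ}$: for such $\vv$ you noted $\XX^{\dagger}\vv=\YY^{\dagger}\vv=\zeroVec$ and $\OO\vv=\zeroVec$, and exactly the same facts give $\ZZ^{\sharp}\vv=\zeroVec$ directly, whence $\ZZ^{\sharp}\ZZ\ZZ^{\sharp}=\ZZ^{\sharp}\PP_{\myRange{\ZZ}}=\ZZ^{\sharp}$. Add that one line and the verification is complete.
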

Moreover, we use the following relations.
\begin{align}\label{eq:range of D and C}
    & \myRange{\DD} = \myRange{ \PP_{\myNullOrtho{\HH}} \otimes \PP_{\myNullOrtho{\HH}}  }, \nonumber \\
    & \myRange{\CC} = \myRange{ \Identity - \PP_{\myNull{\HH}} \otimes \PP_{\myNull{\HH}}  }, \nonumber \\
    & \myRange{ \PP_{\myNull{\DD}} \CC} = \myRange{\PP_{\myNull{\HH}} \otimes \PP_{\myNullOrtho{\HH}}+\PP_{\myNullOrtho{\HH}} \otimes \PP_{\myNull{\HH}}}.
\end{align}

The dynamics of $ \MeanVec_{t}^{\myPerp} $ and $\CovMat_{t}^{\myPerp}$ are given by (see \eqref{eq:projected joint dynamics})
\begin{equation}
    \begin{pmatrix}
        \MeanVec_{t+1}^{\myPerp} \\
        \vectorization{\CovMat_{t+1}^{\myPerp}}
    \end{pmatrix}
    =
    \bXi
    \begin{pmatrix}
        \MeanVec_{t}^{\myPerp} \\
        \vectorization{\CovMat_{t}^{\myPerp}}
    \end{pmatrix}
    +
    \begin{pmatrix}
        \zeroVec \\
        \vectorization{\CovMat_{\vv}^{\myPerp}}
    \end{pmatrix}.
\end{equation}
where
\begin{align}
    \bXi
    & = 
    \begin{pmatrix}
        \PP_{\myNullOrtho{\HH}} - \eta \HH  &  \zeroVec \\
        - \left(\PP_{\myNullOrtho{\HH}} \otimes \PP_{\myNullOrtho{\HH}} \right) (\E \left[ \vv_t^{\myPerp} \otimes \TransMat_t \right] + \E \left[ 	\TransMat_t \otimes \vv_t^{\myPerp} \right] )& \left(\PP_{\myNullOrtho{\HH}} \otimes \PP_{\myNullOrtho{\HH}} \right) \CovEvo
    \end{pmatrix}\nonumber \\
    & \triangleq
    \begin{pmatrix}
        \bXi_{1,1} & \bXi_{1,2} \\
        \bXi_{2,1} & \bXi_{2,2}
    \end{pmatrix}.
\end{align}
In App.~\ref{app:stability threshold expectation regular minima proof} we show that if $ 0< \eta < \etaVar $ then the spectral radius of $ \bXi $ is less then one. Therefore, the dynamical system is stable, and the asymptotic values of $ \MeanVec_{t}^{\myPerp} $ and $ \CovMat_{t}^{\myPerp} $ as $ t \to \infty $ are given by
\begin{equation}
	\lim_{t \to \infty} 
	\begin{pmatrix}
		\MeanVec_{t}^{\myPerp} \\
		\vectorization{\CovMat_{t}^{\myPerp}}
	\end{pmatrix}
	= \left( \Identity - \bXi \right)^{-1}
	\begin{pmatrix}
		\zeroVec \\
		\vectorization{\CovMat_{\vv}^{\myPerp}}
	\end{pmatrix}.
\end{equation}
Using the inversion formula for block matrix and the fact that $ \bXi_{1,2} = \zeroVec $ we have that
\begin{align}
	\left( \Identity - \bXi \right)^{-1}
	= & 
	\begin{pmatrix}
		\Identity-\bXi_{1,1} & -\bXi_{1,2} \\
		-\bXi_{2,1} & \Identity-\bXi_{2,2}
	\end{pmatrix}^{-1}\nonumber \\
	= & 
	\begin{pmatrix}
		\left(\Identity- \bXi_{1,1} - \bXi_{1,2}\left( \Identity- \bXi_{2,2}\right)^{-1} \bXi_{2,1} \right)^{-1}  & \zeroVec \\
		\zeroVec & \left(\Identity-\bXi_{2,2}- \bXi_{2,1}\left( \Identity- \bXi_{1,1} \right)^{-1}\bXi_{1,2} \right)^{-1}
	\end{pmatrix}\nonumber \\
	&
	\begin{pmatrix}
		\Identity & \bXi_{1,2} \left( \Identity-\bXi_{2,2} \right)^{-1} \\
		\bXi_{2,1}\left( \Identity-\bXi_{1,1} \right)^{-1} & \Identity
	\end{pmatrix}
	\nonumber \\
	= & \begin{pmatrix}
		(\Identity- \bXi_{1,1} )^{-1}  & \zeroVec \\
		\zeroVec & \left(\Identity- \bXi_{2,2} \right)^{-1}
	\end{pmatrix}
	\begin{pmatrix}
		\Identity & \zeroVec \\
		\bXi_{2,1}\left( \Identity-\bXi_{1,1} \right)^{-1} & \Identity
	\end{pmatrix}.
\end{align}
Therefore,
\begin{align}
	\lim_{t \to \infty} 
	\begin{pmatrix}
		\MeanVec_{t}^{\myPerp} \\
		\vectorization{\CovMat_{t}^{\myPerp}}
	\end{pmatrix}
	& = 
	\begin{pmatrix}
		(\Identity- \bXi_{1,1} )^{-1}  & \zeroVec \\
		\zeroVec & \left(\Identity- \bXi_{2,2} \right)^{-1}
	\end{pmatrix}
	\begin{pmatrix}
		\Identity & \zeroVec \\
		\bXi_{2,1}\left( \Identity-\bXi_{1,1} \right)^{-1} & \Identity
	\end{pmatrix}
	\begin{pmatrix}
		\zeroVec \\
		\vectorization{\CovMat_{\vv}^{\myPerp}}
	\end{pmatrix}\nonumber\\
	& = 
	\begin{pmatrix}
		(\Identity- \bXi_{1,1} )^{-1}  & \zeroVec \\
		\zeroVec & \left(\Identity- \bXi_{2,2} \right)^{-1}
	\end{pmatrix}
	\begin{pmatrix}
		\zeroVec \\
		\vectorization{\CovMat_{\vv}^{\myPerp}}
	\end{pmatrix}.
\end{align}
Namely,
\begin{equation}\label{eq:AsymptoticValue}
    \lim_{t \to \infty} \MeanVec_t^{\myPerp} = \zeroVec  \qquad \text{and} \qquad \lim_{t \to \infty}  \vectorization{\CovMat_{t}^{\myPerp}} = \left( \Identity - \PP_{\myNullOrtho{\DD}} \CovEvo  \right)^{-1} \vectorization{\CovMat_{\vv}^{\myPerp}}.
\end{equation}
Now,
\begin{align}
    \left( \Identity - \PP_{\myNullOrtho{\DD}} \CovEvo  \right)^{-1}
    & = \left( \Identity - \PP_{\myNullOrtho{\DD}} \CovEvo  \right)^{\dagger} \nonumber \\
    & = \left( \PP_{\myNull{\DD}}+\PP_{\myNullOrtho{\DD}} - \PP_{\myNullOrtho{\DD}} \CovEvo  \right)^{\dagger} \nonumber \\
    & = \left( \PP_{\myNull{\DD}}+\PP_{\myNullOrtho{\DD}} \left( \Identity - \CovEvo \right) \right)^{\dagger}.
\end{align}
Let us apply Thm.~\ref{thm:pseudo inverse of matrix sum} on $ ( \PP_{\DD}+\PP_{\myNullOrtho{\DD}} ( \Identity - \CovEvo ))^{\dagger} $. Here, $ \XX_1 = \PP_{\myNull{\DD}} $ and $ \YY_1 = \PP_{\myNullOrtho{\DD}} (\Identity - \CovEvo) $. Note that $\myRange{\XX_1} = \myRangeOrtho{\DD}$ and $ \myRange{\YY_1} = \myRange{\DD} $ and therefore $ \mathrm{rank}(\XX_1+\YY_1) = \mathrm{rank}(\XX_1)+\mathrm{rank}(\YY_1)  $. Additionally,
\begin{equation}
    \PP_{\myRange{\YY_1^\transpose}} = \PP_{\myNullOrtho{\DD}}, \quad  \PP_{\myRangeOrtho{\XX_1^\transpose}} = \PP_{\myNullOrtho{\DD}}, \quad  \PP_{\myRangeOrtho{\XX_1}} = \PP_{\myNullOrtho{\DD}},  \quad  \PP_{\myRange{\YY_1}} = \PP_{\myNullOrtho{\DD}}.
\end{equation}
Hence,
\begin{align}
    \LL_1 & = \left( \PP_{\myRange{\YY_1^\transpose}} \PP_{\myRangeOrtho{\XX_1^\transpose}} \right)^{\dagger} = \left(\PP_{\myNullOrtho{\DD}} \PP_{\myNullOrtho{\DD}} \right)^{\dagger} = \PP_{\myNullOrtho{\DD}},\nonumber \\
    \OO_1 & = \left( \PP_{\myRangeOrtho{\XX_1}} \PP_{\myRange{\YY_1}}  \right)^{\dagger} = \left(\PP_{\myNullOrtho{\DD}} \PP_{\myNullOrtho{\DD}} \right)^{\dagger} = \PP_{\myNullOrtho{\DD}}.
\end{align}
Therefore,
\begin{align}
    \left( \Identity - \PP_{\myNullOrtho{\DD}} \CovEvo  \right)^{-1}
    & = \left( \PP_{\myNull{\DD}}+\PP_{\myNullOrtho{\DD}} \left( \Identity - \CovEvo \right) \right)^{\dagger}  \nonumber \\
    & = (\XX_1+\YY_1)^{\dagger} \nonumber \\
    & = (\Identity-\LL_1)\XX_1^{\dagger}(\Identity -\OO_1 )+\LL_1 \YY_1^{\dagger} \OO_1 \nonumber \\
    & = (\Identity-\PP_{\myNullOrtho{\DD}})(\PP_{\myNull{\DD}})^{\dagger}(\Identity-\PP_{\myNullOrtho{\DD}}) \nonumber \\
    & \qquad +\PP_{\myNullOrtho{\DD}} \left( \PP_{\myNullOrtho{\DD}} (\Identity - \CovEvo) \right)^{\dagger} \PP_{\myNullOrtho{\DD}} \nonumber \\
    & = (\Identity-\PP_{\myNullOrtho{\DD}})\PP_{\myNull{\DD}}(\Identity-\PP_{\myNullOrtho{\DD}}) \nonumber \\
    & \qquad +\PP_{\myNullOrtho{\DD}} \left( \PP_{\myNullOrtho{\DD}} (\Identity - \CovEvo) \right)^{\dagger} \PP_{\myNullOrtho{\DD}} \nonumber \\
    & = \PP_{\myNull{\DD}}+\PP_{\myNullOrtho{\DD}} \left( \PP_{\myNullOrtho{\DD}} (\Identity - \CovEvo) \right)^{\dagger} \PP_{\myNullOrtho{\DD}},
\end{align}
where in the third step we used Thm.~\ref{thm:pseudo inverse of matrix sum}. Thus we get the following intermediate result
\begin{align}
    \lim_{t \to \infty}  \vectorization{\CovMat_{t}^{\myPerp}}
    & = \left( \Identity - \PP_{\myNullOrtho{\DD}} \CovEvo  \right)^{-1} \vectorization{\CovMat_{\vv}^{\myPerp}} \nonumber \\
    & = \left(\PP_{\myNull{\DD}}+\PP_{\myNullOrtho{\DD}} \left( \PP_{\myNullOrtho{\DD}} (\Identity - \CovEvo) \right)^{\dagger} \PP_{\myNullOrtho{\DD}} \right)\vectorization{\CovMat_{\vv}^{\myPerp}} \nonumber \\
    & = \PP_{\myNullOrtho{\DD}} \left( \PP_{\myNullOrtho{\DD}} (\Identity - \CovEvo) \right)^{\dagger} \PP_{\myNullOrtho{\DD}} \vectorization{\CovMat_{\vv}^{\myPerp}},
\end{align}
where in the final step we used $ \PP_{\myNull{\DD}} \mathrm{vec}(\CovMat_{\vv}^{\myPerp}) = \zeroVec $. Now, note that $ \myRange{ \PP_{\myNull{\DD}} \CC} = \myRange{\PP_{\myNull{\HH}} \otimes \PP_{\myNullOrtho{\HH}}+\PP_{\myNullOrtho{\HH}} \otimes \PP_{\myNull{\HH}}} $, whereas $ \mathrm{vec}(\CovMat_{\vv}^{\myPerp}) \in \myRange{\DD} = \myRange{ \PP_{\myNullOrtho{\HH}} \otimes \PP_{\myNullOrtho{\HH}}} $ and therefore $ (\PP_{\myNull{\DD}} \CC)^{\dagger} \mathrm{vec}(\CovMat_{\vv}^{\myPerp}) = \zeroVec $. Hence,
\begin{align}
    \lim_{t \to \infty}  \vectorization{\CovMat_{t}^{\myPerp}}
    & = \PP_{\myNullOrtho{\DD}} \left( \PP_{\myNullOrtho{\DD}} (\Identity - \CovEvo) \right)^{\dagger} \PP_{\myNullOrtho{\DD}} \vectorization{\CovMat_{\vv}^{\myPerp}} \nonumber \\
    & = \left(  \left( 2 \eta \PP_{\myNull{\DD}}\CC\right)^{\dagger} +\PP_{\myNullOrtho{\DD}} \left( \PP_{\myNullOrtho{\DD}} (\Identity - \CovEvo) \right)^{\dagger} \PP_{\myNullOrtho{\DD}} \right) \vectorization{\CovMat_{\vv}^{\myPerp}}.
\end{align}
Let us apply again Thm.~\ref{thm:pseudo inverse of matrix sum} but in the other direction. This time, $ \XX_2 = 2 \eta \PP_{\myNull{\DD}}\CC $ and $ \YY_2 = \PP_{\myNullOrtho{\DD}} (\Identity - \CovEvo) $. Note that $\myRange{\XX_2} = \myRange{\PP_{\myNull{\HH}} \otimes \PP_{\myNullOrtho{\HH}}+\PP_{\myNullOrtho{\HH}} \otimes \PP_{\myNull{\HH}}}$ and $ \myRange{\YY_2} = \myRange{\PP_{\myNullOrtho{\HH}} \otimes \PP_{\myNullOrtho{\HH}}} $ and therefore $ \mathrm{rank}(\XX_2+\YY_2) = \mathrm{rank}(\XX_2)+\mathrm{rank}(\YY_2)  $. Additionally,
\begin{align}
     & \PP_{\myRange{\YY_2^\transpose}} = \PP_{\myNullOrtho{\HH}} \otimes \PP_{\myNullOrtho{\HH}}, \nonumber \\
     & \PP_{\myRange{\YY_2}} = \PP_{\myNullOrtho{\HH}} \otimes \PP_{\myNullOrtho{\HH}}, \nonumber \\
     & \PP_{\myRangeOrtho{\XX_2^\transpose}} = \PP_{\myNull{\HH}} \otimes \PP_{\myNull{\HH}}+\PP_{\myNullOrtho{\HH}} \otimes \PP_{\myNullOrtho{\HH}}, \nonumber \\
     & \PP_{\myRangeOrtho{\XX_2}} = \PP_{\myNull{\HH}} \otimes \PP_{\myNull{\HH}}+\PP_{\myNullOrtho{\HH}} \otimes \PP_{\myNullOrtho{\HH}}.
\end{align}
Hence,
\begin{align}
    \LL_2
    & = \left( \PP_{\myRange{\YY_2^\transpose}} \PP_{\myRangeOrtho{\XX_2^\transpose}} \right)^{\dagger} \nonumber \\
    & = \left(\PP_{\myNullOrtho{\HH}} \otimes \PP_{\myNullOrtho{\HH}} \left( \PP_{\myNull{\HH}} \otimes \PP_{\myNull{\HH}}+\PP_{\myNullOrtho{\HH}} \otimes \PP_{\myNullOrtho{\HH}} \right)  \right)^{\dagger}  \nonumber \\
    & = \PP_{\myNullOrtho{\HH}} \otimes \PP_{\myNullOrtho{\HH}}  \nonumber \\
    & =\PP_{\myNullOrtho{\DD}},\nonumber \\
    \OO_2
    & = \left( \PP_{\myRangeOrtho{\XX_2}} \PP_{\myRange{\YY_2}}  \right)^{\dagger} \nonumber \\
    & = \left(\left( \PP_{\myNull{\HH}} \otimes \PP_{\myNull{\HH}}+\PP_{\myNullOrtho{\HH}} \otimes \PP_{\myNullOrtho{\HH}} \right) \PP_{\myNullOrtho{\HH}} \otimes \PP_{\myNullOrtho{\HH}}  \right)^{\dagger} \nonumber \\
    & = \PP_{\myNullOrtho{\HH}} \otimes \PP_{\myNullOrtho{\HH}} = \PP_{\myNullOrtho{\DD}}.
\end{align}
Moreover, since $\myRange{\XX_2} = \myRange{\PP_{\myNull{\HH}} \otimes \PP_{\myNullOrtho{\HH}}+\PP_{\myNullOrtho{\HH}} \otimes \PP_{\myNull{\HH}}}$ and $ \myRangeOrtho{\YY_2} = \myRange{\PP_{\myNull{\HH}} \otimes \PP_{\myNullOrtho{\HH}}+\PP_{\myNullOrtho{\HH}} \otimes \PP_{\myNull{\HH}}+\PP_{\myNullOrtho{\HH}} \otimes \PP_{\myNullOrtho{\HH}}} $ we have that $ \myRange{\XX_2} \subseteq \myRangeOrtho{\YY_2} = \myNull{\DD} $. Therefore,
\begin{equation}\label{eq:x_2 projection}
    (\Identity-\LL_2)\XX_2^{\dagger}(\Identity -\OO_2 ) = \left( \Identity - \PP_{\myNullOrtho{\DD}} \right) \XX_2^{\dagger}\left( \Identity - \PP_{\myNullOrtho{\DD}} \right) = \PP_{\myNull{\DD}} \XX_2^{\dagger} \PP_{\myNull{\DD}} = \XX_2^{\dagger}.
\end{equation}
Therefore, applying Thm.~\ref{thm:pseudo inverse of matrix sum} we get
\begin{align}
    \left( 2 \eta \PP_{\myNull{\DD}}\CC\right)^{\dagger} +\PP_{\myNullOrtho{\DD}} & \left( \PP_{\myNullOrtho{\DD}} (\Identity - \CovEvo) \right)^{\dagger} \PP_{\myNullOrtho{\DD}} \nonumber \\
    & = \XX_2^{\dagger}+\LL_2 \YY_2^{\dagger} \OO_2 \nonumber \\
    & = (\Identity-\LL_2)\XX_2^{\dagger}(\Identity -\OO_2 )+\LL_2 \YY_2^{\dagger} \OO_2 \nonumber \\
    & = (\XX_2+\YY_2)^{\dagger} \nonumber \\
    & = \left( 2 \eta \PP_{\myNull{\DD}} \CC+\PP_{\myNullOrtho{\DD}} \left( \Identity - \CovEvo \right) \right)^{\dagger}  \nonumber \\
    & = \left( 2 \eta \PP_{\myNull{\DD}} \CC+ 2 \eta  \PP_{\myNullOrtho{\DD}}\CC - \eta^2 \PP_{\myNullOrtho{\DD}} \DD \right)^{\dagger}  \nonumber \\
    & = \left( 2 \eta\CC - \eta^2 \DD \right)^{\dagger}.
\end{align}
where in the second step we used \eqref{eq:x_2 projection}, and in the third step we used Thm.~\ref{thm:pseudo inverse of matrix sum}. Overall, together with \eqref{eq:covariance of v} we get
\begin{align}
    \lim_{t \to \infty}  \vectorization{\CovMat_{t}^{\myPerp}}
    & = \left( 2 \eta\CC - \eta^2 \DD \right)^{\dagger}  \vectorization{\CovMat_{\vv}^{\myPerp}} \nonumber \\
    & = \left( \frac{1}{\eta}\left(2\CC - \eta \DD \right)^{\dagger} \right) \left( \eta^2 p  \ \vectorization{\CovMat_{\grad}^{\myPerp}} \right) \nonumber \\
    & = \eta p \left(2\CC - \eta \DD \right)^{\dagger} \vectorization{\CovMat_{\grad}^{\myPerp}}.
\end{align}

\section{Proof of Corollary~\ref{thm:covariance limit}}\label{app:covariance limit proof}
From Thm.~\ref{thm:covariance limit proposition} we have that if $ 0 < \eta < \etaVar $ then
\begin{equation}
    \lim_{t \to \infty}  \vectorization{\CovMat_{t}^{\myPerp}} = \eta p \left(2\CC - \eta \DD \right)^{\dagger} \vectorization{\CovMat_{\grad}^{\myPerp}},
\end{equation}
Using this result, we prove Corollary~\ref{thm:covariance limit}.

\paragraph{First statement.} If $ 0<\eta < \etaVar $ then by Prop.~\ref{thm:covariance limit proposition}
\begin{align}
    \lim_{t \to \infty} \E\left[ \left\| \params_t^\myPerp -\params^{*\myPerp}  \right\|^2 \right]
    & = (\vectorization{\Identity})^\transpose  \lim_{t \to \infty} \vectorization{\CovMat_{t}^{\myPerp}} \nonumber\\
    & = (\vectorization{\Identity})^\transpose  \left( \eta p \left(2\CC - \eta \DD \right)^{\dagger} \vectorization{\CovMat_{\grad}^{\myPerp}} \right) \nonumber\\
    & = \eta p (\vectorization{\Identity})^\transpose \big( 2 \CC -\eta \DD \big)^{\dagger} \vectorization{\CovMat_{\grad}^{\myPerp}}.
\end{align}

\paragraph{Second statement.} Similarly, let us compute the limit of the expected value of the loss function to  obtain point $2$. 
\begin{align}
    \lim_{t \to \infty} \E \left[\lossApprox(\params_t) \right] - \loss(\params^*)
    & = \frac{1}{2} \lim_{t \to \infty} \E \left[ (\params_t-\params^* )^\transpose \HH (\params_t - \params^*) \right] \nonumber\\
    & = \frac{1}{2} \lim_{t \to \infty} \E \left[ (\params_t-\params^* )^\transpose \PP_{\myNullOrtho{\HH}} \HH \PP_{\myNullOrtho{\HH}} (\params_t - \params^*) \right] \nonumber\\
    & = \frac{1}{2} \lim_{t \to \infty} \E \left[ (\params_t^{\myPerp}-\params^{*\myPerp})^\transpose \HH (\params_t^{\myPerp}-\params^{*\myPerp}) \right] \nonumber\\
    & = \frac{1}{2} \Tr \left( \HH \lim_{t \to \infty} \E \left[(\params_t^{\myPerp}-\params^{*\myPerp})(\params_t^{\myPerp}-\params^{*\myPerp})^\transpose  \right] \right) \nonumber\\
    & = \frac{1}{2} \Tr \left( \HH \lim_{t \to \infty} \CovMat_{t}^{\myPerp} \right) \nonumber\\
    & = \frac{1}{2} \left(\vectorization{\HH}\right)^\transpose \lim_{t \to \infty} \vectorization{\CovMat_{t}^{\myPerp}} \nonumber\\
    & = \frac{1}{2} (\vectorization{\HH})^\transpose  \left( \eta p \left(2\CC - \eta \DD \right)^{\dagger} \vectorization{\CovMat_{\grad}^{\myPerp}} \right) \nonumber\\
    & = \frac{1}{2} \eta p (\vectorization{\HH})^\transpose \big( 2 \CC -\eta \DD \big)^{\dagger} \vectorization{\CovMat_{\grad}^{\myPerp}}.
\end{align}

\paragraph{Third statement.} 
Finally, we prove point $3$. The gradient of the second-order Taylor expansion of the loss is given by
\begin{equation}
    \nabla \lossApprox(\params) = \HH \left( \params -\params^* \right).
\end{equation}
Therefore
\begin{align}
    \lim_{t \to \infty} \E\left[ \norm{\nabla \lossApprox (\params_t) }^2 \right]
    & = \lim_{t \to \infty} \E \left[ (\params_t-\params^* )^\transpose \HH^2 (\params_t - \params^*) \right] \nonumber\\
    & = \lim_{t \to \infty} \E \left[ (\params_t-\params^* )^\transpose \PP_{\myNullOrtho{\HH}} \HH^2 \PP_{\myNullOrtho{\HH}} (\params_t - \params^*) \right] \nonumber\\
    & = \lim_{t \to \infty} \E \left[ (\params_t^{\myPerp}-\params^{*\myPerp})^\transpose \HH^2 (\params_t^{\myPerp}-\params^{*\myPerp}) \right] \nonumber\\
    & = \Tr \left( \HH^2 \lim_{t \to \infty} \E \left[(\params_t^{\myPerp}-\params^{*\myPerp})(\params_t^{\myPerp}-\params^{*\myPerp})^\transpose  \right] \right) \nonumber\\
    & = \Tr \left( \HH^2 \lim_{t \to \infty} \CovMat_{t}^{\myPerp} \right) \nonumber\\
    & = \left(\vectorization{\HH^2}\right)^\transpose \lim_{t \to \infty} \vectorization{\CovMat_{t}^{\myPerp}} \nonumber\\
    & = (\vectorization{\HH^2})^\transpose  \left( \eta p \left(2\CC - \eta \DD \right)^{\dagger} \vectorization{\CovMat_{\grad}^{\myPerp}} \right) \nonumber\\
    & = \eta p (\vectorization{\HH^2})^\transpose \big( 2 \CC -\eta \DD \big)^{\dagger} \vectorization{\CovMat_{\grad}^{\myPerp}}.
\end{align}

\section{Recovering GD's stability condition}\label{app:Recovering GD's stability condition}
In this section, we show how our stability condition for SGD reduces to GD's when $ B = n $. In this case $ p=0 $ and thus
\begin{equation}
    \CC = \frac{1}{2} \HH \oplus \HH , \qquad
    \DD =  \HH \otimes \HH.
\end{equation}
Let $ \HH = \VV \bLambda \VV^\transpose $ be the eigenvalue decomposition of $ \HH $, where $ \VV \VV^\transpose =  \VV^\transpose \VV = \Identity$, then
\begin{align}\label{eq:C decomposition}
    \CC 
    & = \frac{1}{2} \HH \oplus \HH \nonumber \\
    & = \frac{1}{2} \left(\HH \otimes \Identity +  \HH \otimes \Identity \right)  \nonumber \\
    & = \frac{1}{2} \left(\left(\VV \bLambda \VV^\transpose\right) \otimes \left(\VV \VV^\transpose\right) +  \left(\VV \VV^\transpose\right) \otimes \left(\VV \bLambda \VV^\transpose\right) \right)  \nonumber \\
    & = \frac{1}{2} \left( 
    \left(\VV \otimes \VV \right) \left( \bLambda \otimes \Identity \right) \left(\VV^\transpose \otimes \VV^\transpose\right) +
    \left(\VV \otimes \VV \right) \left( \Identity \otimes \bLambda \right) \left(\VV^\transpose \otimes \VV^\transpose\right)
    \right)  \nonumber \\
    & = \left(\VV \otimes \VV \right) \left( \frac{1}{2} \bLambda \otimes \Identity +  \frac{1}{2}  \Identity \otimes \bLambda 
    \right) \left(\VV \otimes \VV\right)^\transpose.
\end{align}
Note that
\begin{equation}
    \left(\VV \otimes \VV \right)^\transpose\left(\VV \otimes \VV \right) = \left(\VV^\transpose \otimes \VV^\transpose \right)\left(\VV \otimes \VV \right) = \left(\VV^\transpose \VV\right)\otimes \left(\VV^\transpose \VV\right) = \Identity \otimes \Identity = \Identity,
\end{equation}
\ie $ (\VV \otimes \VV ) $ is an orthogonal matrix. Since $\frac{1}{2} (\bLambda \otimes \Identity +  \Identity \otimes \bLambda)  $ is diagonal, then the last result in \eqref{eq:C decomposition} is an eigenvalue decomposition of $ \CC $. Similarly,
\begin{align}\label{eq:D decomposition}
    \DD
    & = \HH \otimes \HH \nonumber \\
    & = \left(\VV \bLambda \VV^\transpose\right) \otimes \left(\VV \bLambda \VV^\transpose\right) \nonumber \\
    & =  
    \left(\VV \otimes \VV \right) \left( \bLambda \otimes \bLambda \right) \left(\VV^\transpose \otimes \VV^\transpose\right) \nonumber \\
    & = \left(\VV \otimes \VV \right) \left( \bLambda \otimes \bLambda 
    \right) \left(\VV \otimes \VV\right)^\transpose,
\end{align}
where the last result here is the eigenvalue decomposition of $\DD$. We have that $\CC $ and $ \DD $ have the same set of eigenvectors, given by $ \VV \otimes \VV $. This means that we can look only at the eigenvalues. Thus, set $ \lambda_{\ell} = \bLambda_{[\ell,\ell]} = \lambda_{\ell}(\HH) $, and define the Moore–Penrose inverse for scalars
\begin{equation}
    \forall x \in \R \qquad [x]^{\dagger} \triangleq
    \begin{cases}
        \frac{1}{x}, & x \neq 0,\\
        0, & x = 0.
    \end{cases}
\end{equation}
Then
\begin{equation}
    \lambda_{\max} \left(\CC^{\dagger} \DD  \right) = \max_{\ell,p \in [d]}
    \left\{ \lambda_{\ell}\lambda_{p} \left[\frac{1}{2}(\lambda_{\ell}+\lambda_{p})\right]^{\dagger} \right\}.
\end{equation}
Note that the objective vanishes whenever $ \lambda_{\ell} = 0 $ or $ \lambda_{p} = 0 $. Restricting to only positive eigenvalues gives 
\begin{equation}
    \lambda_{\max} \left(\CC^{\dagger} \DD \right) = \max_{\lambda_\ell, \lambda_p >0 } \left\{ \frac{\lambda_{\ell}\lambda_{p}}{\frac{1}{2}(\lambda_{\ell}+\lambda_{p})} \right\} .
\end{equation}
Additionally $ \sqrt{\lambda_{\ell}\lambda_{p}} \leq \frac{1}{2}(\lambda_{\ell}+\lambda_{p}) $ holds for all $ \lambda_\ell, \lambda_p >0 $, therefore
\begin{align}
    \frac{\lambda_{\ell}\lambda_{p}}{\frac{1}{2}(\lambda_{\ell}+\lambda_{p})} 
    & = \sqrt{\lambda_{\ell}\lambda_{p}} \frac{ \sqrt{\lambda_{\ell}\lambda_{p}} }{\frac{1}{2}(\lambda_{\ell}+\lambda_{p})} \nonumber \\
    & \leq \sqrt{\lambda_{\ell}\lambda_{p}} \nonumber \\
    & \leq \lambda_{\max}.
\end{align}
Yet for $  \lambda_{\ell} = \lambda_{p} = \lambda_{\max} $ we have that
\begin{equation}
    \frac{\lambda_{\ell}\lambda_{p}}{\frac{1}{2}(\lambda_{\ell}+\lambda_{p})} = \lambda_{\max}.
\end{equation}
Hence we have
\begin{equation}
    \lambda_{\max} \left(\CC^{\dagger} \DD  \right) = \max_{\lambda_\ell, \lambda_p >0}
    \left\{ \frac{\lambda_{\ell}\lambda_{p}}{\frac{1}{2}(\lambda_{\ell}+\lambda_{p})} 
    \right\} = \lambda_{\max}(\HH).
\end{equation}

\section{Additional experimental results and detail}\label{App:Additional experimental results}
\begin{figure}[t]%
    \centering
    \rotatebox[origin=l]{90}{\hspace{1.5cm} (Generalized) Sharpness}%
    % \subfigure[$ B = 8 $][b]{%
    % \includegraphics[trim=0 0 0 0.1in, clip, width=0.4\linewidth]{Figures/SGD_Stability_MNIST_Final_4.pdf}}%
    % \subfigure[$ B = 16 $][b]{%
    % \includegraphics[trim=0 0 0 0.1in, clip, width=0.4\linewidth]{Figures/SGD_Stability_MNIST_Final_5.pdf}}\\%
    % \rotatebox[origin=l]{90}{\hspace{1.5cm} (Generalized) Sharpness}%
    % \subfigure[$ B = 32 $][b]{%
    % \includegraphics[trim=0 0 0 0.1in, clip, width=0.4\linewidth]{Figures/SGD_Stability_MNIST_Final_6.pdf}}%
    \subfigure[$ B = 64 $][b]{%
    \includegraphics[trim=0 0 0 0.10in, clip, width=0.4\linewidth]{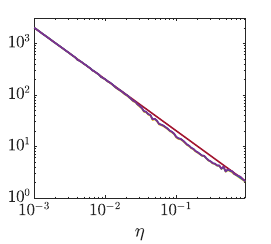}}%
    % \rotatebox[origin=l]{90}{\hspace{1.5cm} (Generalized) Sharpness}%
    \subfigure[$ B = 128 $][b]{%
    \includegraphics[trim=0 0 0 0.10in, clip, width=0.4\linewidth]{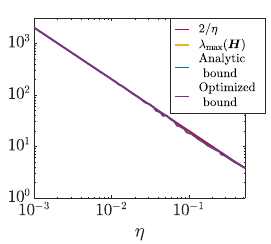}}%
    \caption{\textbf{Sharpness vs. learning rate.} Additional results for the experiment in Sec.~\ref{sec:Experiments}. These two figures complete the results of Fig.~\ref{Fig:A}. Here we see that SGD with big batch sizes behaves like GD.}%
    \label{Fig:Appendix}%
\end{figure}

In this section, we complete the technical detail of the experiment shown in Sec.~\ref{sec:Experiments}. For the experiment, we used a single-hidden layer ReLU network with fully connected layers (with bias vectors). The number of neurons is $ 1024 $, and the total number of parameters is $ 807,940 $. We used four classes from MNIST, $ 256 $ samples from each class, with a total of $1024$ samples. To get large initialization, we used standard torch initialization and multiplied the initial weights by a factor of $15$. The maximal number of epochs was set to $4\times 10^{4}$. If SGD did not converge within this number of epochs, then we removed this run from the plots. 

\end{document}